\documentclass{article}

\PassOptionsToPackage{numbers, compress}{natbib}

 \usepackage[preprint]{neurips_2026}

\usepackage{xcolor}
\usepackage[utf8]{inputenc} 
\usepackage[T1]{fontenc}    
\usepackage{amsmath, amssymb, amsthm}
\usepackage{algorithm}
\usepackage{algorithmic}
\usepackage[colorlinks=true, citecolor=blue]{hyperref}
\usepackage{mathtools}
\usepackage{graphicx}
\newtheorem{assumption}{Assumption}
\newtheorem{theorem}{Theorem}
\newtheorem{lemma}{Lemma}
\newtheorem{corollary}{Corollary}

\newtheorem{definition}{Definition}

\newtheorem{remark}{Remark}

\DeclareMathOperator{\E}{\mathbb{E}}
\DeclareMathOperator{\Prob}{\mathbb{P}}

\DeclarePairedDelimiter{\abs}{\lvert}{\rvert}

\newcommand{\R}{\mathbb{R}}

\newcommand{\Ocal}{\mathcal{O}}

\newcommand{\F}{\mathcal{F}}
\newcommand{\eps}{\varepsilon}

\newcommand{\ind}{\mathbf{1}}

\title{Dimension-Free Saddle-Point Escape in Muon}

\author{%
  Yanlin Long, Yufei Gu, Zeke Xie\thanks{Corresponding author: Zeke Xie (\texttt{zekexie@hkust-gz.edu.cn}). All authors are with xLeaF Lab, The Hong Kong University of Science and Technology (Guangzhou). Emails: \texttt{ynlinlong@gmail.com}, \texttt{ygu167@connect.hkust-gz.edu.cn}.}
}

\begin{document}

\maketitle

\begin{abstract}
Modern Large Language Model (LLM) training is fundamentally bottlenecked by pathologically flat saddle points in extreme high-dimensional landscapes. Motivated by this challenge, we analyze the saddle-point escape dynamics of the emerging Muon optimizer, demonstrating its resilience against the $\mathcal{O}(D)$ dimensional curse that severely traps element-wise adaptive optimizers like AdamW.  By extending generalized matrix perturbation theory, we develop a theoretical framework to capture Muon's non-equilibrium optimization trajectories. This theoretical machinery mathematically proves that Muon elegantly bypasses the dimensional curse via a non-linear spectral shaping mechanism. By leveraging resolvent functional calculus and macroscopic Cauchy contour integration, we avoid isotropic noise assumptions and Tracy-Widom edge singularities. We establish that structural incoherence securely shields the trajectory from orthogonal drift, enabling a dimension-free saddle-point escape, and triggering a deterministic $\mathcal{O}(1)$ discrete ballistic ejection under sufficient spectral gap. Consequently, we provide an algebraically dimension-free escape bound for Muon, formalizing the underlying mechanics of its non-convex optimization dynamics.
\end{abstract}

\section{Introduction}
\label{sec:intro}

The advent of Large Language Models (LLMs) has ushered in an era of massive over-parameterization, driven by established scaling laws \citep{kaplan2020scaling}. This paradigm shift fundamentally alters the topological bottleneck of neural network training: in extreme high-dimensional non-convex landscapes, optimization trajectories are no longer trapped by local minima, but rather obstructed by a proliferation of pathological saddle points \citep{dauphin2014identifying, choromanska2015loss}. This challenge is acutely magnified in modern Transformer architectures. Recent theoretical insights reveal that without structural interventions, pure attention matrices suffer from doubly exponential rank collapse \citep{dong2023attention}. Concurrently, the Zipfian distribution of natural language ensures that gradient signals for long-tailed tokens are remarkably sparse, injecting severe anisotropic noise into the optimization process \citep{zhu2019anisotropic, xie2023overlooked}. Together, these physical phenomena create uniquely hostile optimization terrains: ultra-high dimensional, pathologically flat saddle points characterized by massive rank deficiency and microscopic structural signals hidden within severe heteroskedastic noise bulks.

Escaping such non-convex traps has been a central focus of optimization theory. Classical analyses established that injecting stochastic noise enables Stochastic Gradient Descent (SGD) to escape saddle points and exponentially favor flat minima \citep{ge2015escaping, xie2021diffusion}. This was formalized by Perturbed Gradient Descent (PGD) \citep{jin2017how}, which mathematically guaranteed escape in polynomial time. However, a critical limitation persists: the escape time bounds for these isotropic perturbation methods scale polynomially with the ambient dimension $D$. In the LLM era, where matrix dimensions easily exceed $D \ge 8192$, this dimension dependence constitutes a fatal theoretical bottleneck.

To navigate ill-conditioned curvatures, adaptive methods like AdamW \citep{staib2020escaping, xie2022adaptive} and momentum-based techniques \citep{oneill2018behavior, xie2021positive} have become the industry standard. Yet, element-wise adaptive optimizers rely on coordinate-wise variance scaling. When confronted with the aforementioned flat, highly anisotropic terrains, the minuscule gradient signals of long-tailed features are overwhelmingly diluted by the accumulated moving average of batch noise. As we analyze in this paper, this forces AdamW into a state of multidimensional Brownian motion, trapping it in an extreme residence time of $\Theta(D/\lambda^2)$.

To break the coordinate-system limitation, matrix preconditioning techniques like Shampoo \citep{gupta2018shampoo,anil2021scalable} emerged, leveraging full-matrix inverse roots to reshape the spectral landscape. While theoretically powerful, computing and maintaining these inverse-root preconditioners remain computationally and memory intensive for ultra-large matrices. The recently proposed Muon optimizer \citep{jordan2024muon,liu2025muon} elegantly bypasses this bottleneck via a computationally cheap 5th-order Newton-Schulz polynomial iteration, offering a robust non-linear orthogonalization operator. 

While recent foundation model technical reports have widely celebrated Muon's convergence speed in modern LLM pretraining \citep{kteam2025kimik2,kovalev2025understanding}, the underlying physical mechanisms remain largely elusive. Recent theoretical efforts interpret its efficacy through various task-specific or static lenses—including associative memory \citep{wang2025muon, zhang2026muon}, variance reduction \citep{xu2025convergence}, and implicit Newton preconditioning \citep{jordan2024muon, du2026newton}—yet lack a kinematic formulation to capture its dynamic non-equilibrium trajectory.

\textbf{Our Contributions.} In this paper, we establish a kinematic formulation of the Muon optimizer's non-equilibrium behavior. By deploying generalized rectangular Random Matrix Theory, we offer the following theoretical and empirical advancements: \\
\textbf{(1) Demystifying the Dimensional Curse of AdamW and Muon's Algebraically Dimension-Free Escape (Theorems \ref{thm:subspace_locked_escape} and \ref{thm:comparative_adam_failure}):} We construct a generalized spiked rectangular matrix model to formally characterize the non-convex optimization landscape. Within this framework, we prove the dimension-free saddle-point escape capability of the Muon optimizer. Concurrently, we demonstrate that the escape velocity of standard adaptive methods, such as AdamW, is dependent on both the ambient dimensionality and the local landscape flatness. \\
\textbf{(2) Methodological Innovations in Random Matrix Theory (Theorems \ref{thm:subgaussian_perturbation} and \ref{thm:cross_step_immunity}):} A fundamental challenge in analyzing the Muon optimizer lies in handling orthogonal drift during the optimization process. To address this, we extend the $l_{2,\infty}$ singular subspace perturbation bounds \citep{wang2026analysis} beyond Gaussian assumptions to accommodate sub-Gaussian noise. Then, we develop a tailored resolvent functional calculus framework alongside macroscopic Cauchy contour integration. By elevating the cross-step analysis to the operator level, we utilize this novel Random Matrix Theory (RMT) toolkit to mathematically prove that Muon's update strictly nullifies orthogonal drift.\\ 
\textbf{(3) Empirical Observation Across Synthetic and Real-World Terrains:} Following the empirical evaluation of our theoretical framework in controlled synthetic environments, we demonstrate Muon's saddle-point escape capabilities within classical matrix factorization tasks. Through localized landscape probing during the pre-training of a LLaMA-160M model, we successfully observe key physical phenomena in real-world settings: Muon achieves a lower optimization loss while simultaneously elevating the effective rank and actively suppressing macroscopic gradient spikes.

\section{Related Work}
\label{sec:related_work}

\textbf{Saddle Point Escape and Dimensional Trapping.} Classical isotropic noise injection (e.g., PGD) \citep{jin2017how} and element-wise adaptive methods like AdamW \citep{staib2020escaping, zhu2019anisotropic} suffer from escape times that scale heavily with the ambient dimension $\text{poly}(D)$. While recent Correlated Negative Curvature (CNC) \citep{daneshmand2018escaping} and advanced variance-reduction theories \citep{reddi2018generic, zhou2020stochastic} establish dimension-free or dimension-benign escape bounds, they typically mandate explicit stochastic perturbation injection or rely strictly on specific finite-sum architectural structures. In contrast, our analysis isolates an intrinsic, non-linear \textit{spectral shaping} mechanism within Muon that theoretically achieves strict dimensional decoupling without requiring any explicit external micro-perturbations.

\textbf{Matrix Preconditioning vs. Polynomial Iteration.} To navigate ill-conditioned curvature, matrix preconditioning techniques like Shampoo \citep{gupta2018shampoo} and K-FAC \citep{osawa2019large} approximate local Hessian information to reshape the spectral landscape. However, despite their theoretical efficacy in handling pathological curvatures, these second-order approximations mandate computationally prohibitive matrix inversions or inverse roots. The Muon optimizer \citep{jordan2024muon} synthesizes these high-order preconditioning benefits while elegantly bypassing the inversion bottleneck entirely via lightweight Newton-Schulz polynomial iterations. 

\textbf{RMT and Spectral Optimizers.} While Random Matrix Theory (RMT) has successfully mapped the \textit{static} geometry of high-dimensional loss surfaces \citep{pennington2017geometry}, we extend generalized $l_{2,\infty}$ subspace perturbation bounds \citep{wang2026analysis} to decode \textit{dynamic} non-equilibrium optimization trajectories. Concurrently, while recent algorithmic variants of spectral optimizers (e.g., CANS \citep{CANS}, PRISM \citep{PRISM}, Mousse \citep{Mousse}, and ROOT \citep{ROOT}) primarily focus on numerical stability, coefficient tuning, and scaling efficiency, our work provides an orthogonal theoretical foundation. We strictly decode the kinematic phase transitions that mathematically define these polynomial algorithms under idealized conditions.



\section{Theoretical Results}
In this section, we establish the theoretical foundations of the Muon optimizer's non-equilibrium escape dynamics. We first formulate the generalized landscape assumptions and present the primary theoretical guarantees that formally decouple Muon's escape velocity from the ambient dimension, contrasting it with the inherent dimensional trapping of AdamW (Section \ref{subsec:main_guarantees}). Following this, we provide a detailed technical overview of the methodological breakthroughs in Random Matrix Theory and SVD functional calculus used to derive these escape dynamics (Section \ref{subsec:technical_overview}).

\subsection{Main Theoretical Guarantees}
\label{subsec:main_guarantees}

\begin{algorithm}[ht]
\small
\caption{Kimi-Variant Muon Optimizer (Core dynamics)\citep{liu2025muon}}
\label{alg:muon_Kimi_exact}
\begin{algorithmic}[1]
\REQUIRE Gradient $G_t$, Momentum $B_{t-1}$, Learning rate $\eta$, Momentum factor $\mu$, Dimension $d$
\REQUIRE Polynomial coefficients: $a=3.4445, b=-4.7750, c=2.0315$
\STATE $B_t \leftarrow \mu B_{t-1} + G_t$ \COMMENT{Momentum accumulation}
\STATE $X_0 \leftarrow \frac{B_t}{\|B_t\|_F}$ \COMMENT{Global Frobenius norm scaling}
\FOR{$k = 0$ \TO $4$}
    \STATE $X_{k+1} \leftarrow a X_k + b (X_k X_k^\top) X_k + c (X_k X_k^\top)^2 X_k$ \COMMENT{Newton-Schulz iteration}
\ENDFOR
\STATE $W_{t+1} \leftarrow W_t - 0.2 \eta \sqrt{d} X_5$ \COMMENT{Dimensional compensation injection}
\end{algorithmic}
\end{algorithm}

As shown in algorithm \ref{alg:muon_Kimi_exact}, the momentum coefficient $\mu \in (0,1)$ and the learning rate $\eta > 0$ are assumed to be sufficiently small. The polynomial coefficients are empirically fixed to $(a,b,c)=(3.4445,-4.7750,2.0315)$.

\begin{assumption}[Local Geometry, Generalized Heteroskedastic Noise, and Incoherence]
\label{assum:main_topology}
To capture the extreme heterogeneity of modern LLM landscapes without relying on idealized isotropic noise limits, our theoretical derivation relies on the following generalized geometric and statistical constraints:

    \textbf{Local Geometry and Structural Signal.} We assume the existence of a saddle point $W^*$ such that within a macroscopic neighborhood bounded by $r_0 > 0$, the instantaneous gradient field decomposes into a structural drift and stochastic noise: $G_t = S_t + E_t$. The structural signal $S_t \in \mathbb{R}^{m \times n}$ represents the localized deterministic gradient driven by the negative curvature $-\lambda$ ($\lambda > 0$), admitting a low-rank Singular Value Decomposition (SVD) $S_t = U \Sigma V^\top$ with rank $r \ll \min(m,n)$. Under standard parameterization (SP), we explicitly acknowledge that the spectral Lipschitz constant diverges with dimension, i.e., $L_{spec} = \Theta(\sqrt{d})$.

     \textbf{Generalized Sub-Gaussian Noise and Bounded Operator Norms.} Unlike classical theories that enforce a strict isotropic baseline covariance to induce a Tracy-Widom edge, we model the ambient noise $E_t \in \mathbb{R}^{m \times n}$ as a generalized zero-mean sub-Gaussian random matrix. This accommodates the severe spatial variance typical of modern architectures. While strictly heavy-tailed Zipfian distributions fall outside this sub-Gaussian envelope (a boundary we explore empirically in Section \ref{sec:ablation_noise}), our generalized framework models the baseline heteroskedasticity without mandating uniform isotropic bulk variance. We assume there exist finite bounds $B > 0$ and $L > 0$ such that, with high probability $1-\epsilon$, the noise matrix satisfies a global operator norm bound $\|E_t\|_{op} \le B$ and a projected noise concentration bound $\|U^\top E_t V\|_{op} \le L$. Crucially, we do not mandate a uniform bulk variance.

     \textbf{Structural Incoherence.} In ultra-high dimensional optimization, the absolute spectral gap ($\delta = \sigma_1 - \sigma_2$) is frequently submerged by the ambient noise norm ($B \gg \delta$), rendering classical Wedin's bounds vacuous. To mathematically isolate the structural signal $S_t$, we assume its singular vector matrices possess an incoherence parameter bounded by $\mu_0$:
    \[
    \|U\|_{2,\infty} = \max_i \|e_i^\top U\|_2 \le \frac{\mu_0}{\sqrt{m}}, \quad \|V\|_{2,\infty} \le \frac{\mu_0}{\sqrt{n}}.
    \]
    This parameterizes the delocalization of the topological signal. It mathematically guarantees that the structural curvature is not spuriously concentrated on a single pathological coordinate, thereby allowing the non-linear polynomial to dynamically align with the subspace regardless of the noise heteroskedasticity.

\end{assumption}

\begin{theorem}[Sub-Gaussian $l_{2,\infty}$ Singular Subspace Perturbation]
\label{thm:subgaussian_perturbation}
To evaluate the optimization dynamics under Assumption \ref{assum:main_topology}, we extend the generalized random perturbation framework \citep{wang2026analysis} to accommodate sub-Gaussian matrices. 

Let the gradient noise $E_t \in \mathbb{R}^{m \times n}$ consist of independent, zero-mean sub-Gaussian entries. Let $S_t = U \Sigma V^\top$ be the rank-$r$ deterministic structural signal with incoherence parameter $\|U\|_{2,\infty} \le \mu_0 / \sqrt{m}$. By incorporating the sub-Gaussian isotropic local law \citep{knowles2013isotropic}, the resolvent $G(z) = (zI - \mathcal{E}_t)^{-1}$ tightly concentrates around a deterministic diagonal matrix. Consequently, the principal singular subspace $\tilde{U}$ of the perturbed gradient $G_t = S_t + E_t$ satisfies, with high probability:
\[
\min_{O \in \mathbb{O}^{r \times r}} \|\tilde{U} - U O\|_{2,\infty} \le \mathcal{C} \left( \frac{\|E_t\|_{op}}{\delta} \frac{\mu_0}{\sqrt{m}} \right),
\]
where $\delta$ is the operational spectral gap and $\mathcal{C}$ is a constant governed by the sub-Gaussian norm of the noise. 
\end{theorem}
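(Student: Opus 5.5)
We follow the standard leave-one-out / resolvent route for $\ell_{2,\infty}$ eigenvector perturbation, as in \citep{wang2026analysis}, and replace each Gaussian-specific step with a sub-Gaussian counterpart. The first step is to symmetrize the rectangular problem. Form the Hermitian dilations
\[
\mathcal{S}_t=\begin{pmatrix}0&S_t\\ S_t^\top&0\end{pmatrix},\qquad \mathcal{E}_t=\begin{pmatrix}0&E_t\\ E_t^\top&0\end{pmatrix},
\]
so that the left and right singular subspaces of $G_t$ become the top eigenspaces of $\mathcal{S}_t+\mathcal{E}_t$. Next, take a contour $\Gamma$ that encloses $\{\sigma_1,\dots,\sigma_r\}$ at distance of order $\delta$. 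We need it to stay away from the noise spectrum, which holds on the event $\|E_t\|_{op}\le B$ as long as the operational gap $\delta$ exceeds a constant multiple of $\|E_t\|_{op}$.

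With this contour, the Riesz projector gives
\[
\tilde U\tilde U^\top-\text{(signal projector)}=\frac{1}{2\pi i}\oint_\Gamma\big[(z-\mathcal{S}_t-\mathcal{E}_t)^{-1}-(z-\mathcal{S}_t)^{-1}\big]\,dz .
\]
We expand this by the resolvent identity (Woodbury). The first-order term is $G_0(z)\mathcal{E}_t G_0(z)$ with $G_0(z)=(z-\mathcal{S}_t)^{-1}$. The remainder is written through the noise resolvent $G(z)=(z-\mathcal{E}_t)^{-1}$, restricted to the rank-$2r$ block $[U,V]$ via the $2r\times 2r$ capacitance matrix $I-\Sigma\,[U,V]^\top G(z)[U,V]$.

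Taking $e_i^\top$ on the left, the $\ell_{2,\infty}$ norm of the difference is controlled by two kinds of quantities:
\[
\text{(a)}\ \ \max_i\|e_i^\top G(z)[U,V]\|_2,\qquad \text{(b)}\ \ \|[U,V]^\top G(z)[U,V]\|_{op}.
\]
The sub-Gaussian isotropic local law \citep{knowles2013isotropic} says that $\langle x,G(z)y\rangle=m(z)\langle x,y\rangle+O_\prec(\Psi(z))$ for deterministic unit vectors $x,y$, uniformly over $z\in\Gamma$ after a net argument together with Lipschitz continuity of $G$ on $\Gamma$. Because $\Gamma$ is macroscopic, at distance $\gtrsim\|E_t\|_{op}$ from the noise spectrum, we only need this in the global regime: $|m(z)|\lesssim 1/\mathrm{dist}(z,\mathrm{spec})$, and $\Psi$ is polynomially small. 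For (a) we apply the law with $x=e_i$ and $y$ ranging over the columns of $U,V$. The deterministic part contributes $m(z)\,e_i^\top U$, which by incoherence has norm at most $\mu_0/\sqrt m$ up to the factor $|m(z)|$. A union bound over $i\le m+n$ costs only a $\log$ factor, which is absorbed into $\mathcal C$. For (b), the capacitance matrix is invertible with inverse bounded by $O(1)$ once $\delta\gtrsim\|E_t\|_{op}$.

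Assembling the pieces and integrating over $\Gamma$, whose length is $O(\delta)$, gives
\[
\max_i\|e_i^\top(\tilde U\tilde U^\top-UU^\top)\|_2\lesssim \frac{\|E_t\|_{op}}{\delta}\frac{\mu_0}{\sqrt m}.
\]
The final step passes from projectors to bases. Take $O$ to be the orthogonal polar factor of $U^\top\tilde U$. Then $\tilde U-UO=(\tilde U\tilde U^\top-UU^\top)\tilde U+U(U^\top\tilde U-O)$. The second term is controlled by $\|U\|_{2,\infty}\,\|\sin\Theta\|^2$, and Wedin's theorem bounds $\|\sin\Theta\|\le\|E_t\|_{op}/\delta$, which is at most $1$.

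The main obstacle is step (a). The first-order term $e_i^\top E_t V$ is a row of the noise, with size about $\sqrt n$ times the entry scale and not $\|E_t\|_{op}\mu_0/\sqrt m$. It therefore does not inherit incoherence automatically. In the Gaussian case, rotational invariance handles this; here it must be replaced by a leave-one-out decoupling. Let $E^{(i)}$ be $E_t$ with row $i$ zeroed, and let $\tilde U^{(i)}$ be the subspace it induces. Row $i$ of $E_t$ is independent of $\tilde U^{(i)}$, so Hoeffding's inequality for sub-Gaussian vectors bounds $\|e_i^\top E_t\tilde V^{(i)}\|_2$. Then $\|\tilde U-\tilde U^{(i)}\|$ is bounded by a Davis--Kahan argument. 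It is at this step that the constant $\mathcal C$ picks up its dependence on the sub-Gaussian norm. If the leave-one-out bookkeeping fails to close, the fallback is to require $\delta\gg\|E_t\|_{op}$ with a logarithmic margin, so that the Neumann series for $G(z)$ converges and each term can be bounded row-wise by moment (sub-Gaussian chaos) estimates.
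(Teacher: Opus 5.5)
There is a genuine gap, and it is the term you flag yourself as the main obstacle.

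\textbf{Where the argument fails.} In (a), only the deterministic part $m(z)\,e_i^\top[U,V]$ of the isotropic law carries the incoherence factor. The $O_\prec(\Psi)$ part passes through the capacitance inverse and the contour integral. It then contributes to row $i$ of $\tilde U\tilde U^\top-UU^\top$ a term whose leading part is $e_i^\top E_tV\Sigma^{-1}U^\top$. Its largest row is of size about $K\sqrt{r\log(m+n)}/\sigma_r$, where $K$ is the sub-Gaussian norm. It contains neither $\mu_0$ nor $\|E_t\|_{op}/\sqrt m$, so the union-bound logarithm cannot be ``absorbed into $\mathcal C$''.

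\textbf{The stated inequality is false.} Take $m=n$, $r=1$, $u=v=\mathbf 1/\sqrt m$ (so $\mu_0=1$), i.i.d.\ $N(0,1)$ noise, and $\sigma_1=\delta=C_0\sqrt m$ with $C_0$ a large constant. The $(E_tv)_i$ are i.i.d.\ $N(0,1)$, and the second-order terms are smaller by a factor of order $1/C_0$. Hence, with high probability, $\min_{\pm}\|\tilde u\mp u\|_\infty\approx\max_i|(E_tv)_i|/\sigma_1\approx\sqrt{2\log m}/(C_0\sqrt m)$. The claimed right-hand side is only about $2\mathcal C/(C_0\sqrt m)$. Now take heteroskedastic noise (the paper's setting) supported on $k$ rows, with $k,n\ll m$. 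Then $\|E_t\|_{op}\approx\sqrt n+\sqrt k$, and the discrepancy grows to order $\sqrt{m/(n+k)}$, up to logarithms.

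\textbf{Why neither fallback helps.} Leave-one-out only decouples row $i$ of $E_t$ from $\tilde V$; Hoeffding then bounds $e_i^\top E_t\tilde V^{(i)}$ at the same scale $K\sqrt{r\log(m+n)}$. The first term of your Neumann series is this same quantity. Leave-one-out is not even needed in your Woodbury representation, because $G(z)$ is only ever sandwiched between the deterministic vectors $e_i$ and the columns of $[U,V]$. The problem is the size of $e_i^\top E_tV$, not its dependence. That size is $\sqrt r$ times the entry scale, not $\sqrt n$; $\sqrt n$ is the size of the full row $e_i^\top E_t$.

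\textbf{What your route does give.} Assume $\delta\gtrsim\|E_t\|_{op}$, as you do; note that the statement omits this, and Assumption~\ref{assum:main_topology} even advertises the regime $B\gg\delta$. Taking $\delta\asymp\sigma_r$, and up to conditioning and higher-order terms, you get a bound of the form
\[
\min_{O}\|\tilde U-UO\|_{2,\infty}\lesssim\frac{\|E_tV\|_{2,\infty}}{\delta}+\frac{\|E_t\|_{op}}{\delta}\,\frac{\mu_0}{\sqrt m},\qquad \|E_tV\|_{2,\infty}\lesssim K\sqrt{r\log(m+n)}\ \text{w.h.p.}
\]
Using $\mu_0\ge\sqrt r$, this collapses to the stated form, with $\mathcal C$ replaced by $\mathcal C\sqrt{\log(m+n)}$, only when $\|E_t\|_{op}\gtrsim K\sqrt m$. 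You also need to track conditioning, since two of your claims hold only when $\sigma_1\lesssim\delta$:
\begin{itemize}
\item On $\Gamma$ the capacitance inverse can be of size $(\sigma_1+\delta)/\delta$, not $O(1)$.
\item A single contour around $\sigma_1,\dots,\sigma_r$ has length of order $\sigma_1-\sigma_r+\delta$, not $\delta$.
\end{itemize}

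\textbf{The paper has the same hole.} Its proof takes essentially your route: a Hanson--Wright-based isotropic law for $|z|\gtrsim\sqrt{N+n}$, then an $\epsilon$-net on the $2r$-dimensional signal block, then canonical vectors plus incoherence, then the algebra of Theorem~2.10 in \citet{wang2026analysis}. It stalls at the same point. Its local law carries the same $\sqrt{\log(N+n)}$, and its final display is only $\mathcal C'(\cdots)$ plus unspecified ``residual terms''. So you are not missing an idea that the paper supplies. The statement itself needs either the extra additive term, or a logarithmic factor together with a homoskedasticity condition.
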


\begin{remark}[Methodological Extension to Sub-Gaussian Noise]
\label{rem:subgaussian_extension}
Recent breakthroughs in Random Matrix Theory have established optimal $l_{2,\infty}$ perturbation bounds for singular subspaces. However, these foundational results explicitly require Gaussian noise to exploit strict rotational invariance. The severe spatial variance characteristic of LLM gradient fields (Assumption \ref{assum:main_topology}) fundamentally violates this requirement. To operationalize these bounds for optimization dynamics, we provide a necessary theoretical extension. As hypothesized by \citep{wang2026analysis}, we bypass the Gaussian requirement by substituting rotational invariance with generalized sub-Gaussian concentration inequalities (specifically, the Hanson-Wright inequality) within the resolvent method. This extension, detailed in Appendix \ref{sec:appendix_subgaussian_proof}, bridges the gap between optimal RMT bounds and generalized non-convex optimization terrains.
\end{remark}

To frame the non-equilibrium evolution of the momentum matrix, we formalize the optimization boundary via a dual-stopping-time framework.

\begin{definition}[Dual-Phase Subspace Alignment Framework]
\label{def:stopping_times}
Let $\{W_t\}_{t \ge 0}$ be the non-linear optimization trajectory. We replace classical isotropic Tracy-Widom thresholds with a dynamic alignment framework governed by structural incoherence. We define two distinct temporal thresholds:

     \textbf{Geometric Escape Stopping Time ($\tau_{\text{esc}}$):} The first hitting time when the spatial parameter deviation strictly breaches the local Taylor approximation radius: $\tau_{\text{esc}} = \inf \{ t \ge 0 : \|W_t - W^*\|_F \ge r_0 \}$.\\
     \textbf{Subspace Lock Ignition Time ($\tau_{\text{lock}}$):} The first hitting time when the deterministic drift signal establishes a sufficient spectral gap relative to the local projected noise, triggering the non-linear polynomial to aggressively suppress transverse angular drift. Mathematically, it is the moment the effective structural drift overcomes the incoherence-scaled sub-Gaussian noise bound: $\tau_{\text{lock}} = \inf \{ t \ge 0 : \|D_t\|_{op} \ge \mathcal{C} \|E_t\|_{op} \frac{\mu_0}{\sqrt{m}} \}$.

\end{definition}

\begin{remark}[The Topographic Dichotomy of Escape]
The non-linear dynamics of the Muon optimizer within the local neighborhood are governed by the interplay between the geometric escape time ($\tau_{\text{esc}}$) and the subspace lock ignition time ($\tau_{\text{lock}}$). Extreme high-dimensional or pathologically flat landscapes often naturally suppress the dynamic subspace alignment ($\tau_{\text{esc}} < \tau_{\text{lock}}$). In this pre-locking regime, singular vectors undergo strict delocalization, yielding purely diffusive energy. Conversely, when local curvature is sufficient to ignite the alignment ($\tau_{\text{lock}} < \tau_{\text{esc}}$), the system undergoes a sudden macroscopic phase transition characterized by severe eigenvector localization with the structural drift. Consequently, the algorithm exhibits a dichotomous but universally dimension-resilient behavior.
\end{remark}

The valid operational domain for our localized stochastic differential analysis is strictly constrained to the interval $t < \min(\tau_{\text{esc}}, \tau_{\text{lock}})$. Under this framework, we present the main theoretical results demonstrating Muon's provable dimensional resilience.

\begin{theorem}[Dimension-Free Incubation and Subspace-Locked $\mathcal{O}(1)$ Ballistic Ejection via Polynomial Shaping]
\label{thm:subspace_locked_escape}
Suppose Assumption \ref{assum:main_topology} holds, and the optimization dynamics are evaluated under the sub-Gaussian $l_{2,\infty}$ perturbation framework (Theorem \ref{thm:subgaussian_perturbation}). Let the learning rate satisfy $\eta \le \frac{c_1}{a^5 \lambda}$ to ensure iteration stability. The escape dynamics bypass ambient dimension trapping through a strict two-phase mechanism:

    \textbf{Phase I (Incubation and Spectral Isolation):} For $t < \tau_{\text{lock}}$, the structural drift accumulates via the momentum sub-martingale. Unlike adaptive denominators that accumulate $\mathcal{O}(D)$ isotropic variance, the 3125-degree polynomial $\mathcal{P}^{(5)}$ aggressively compresses the continuous noise spectrum while isolating the incoherent structural signal. The expected hitting time to achieve subspace locking scales intrinsically with the signal-to-noise ratio, decoupled from the ambient dimension $D$:
    \[
    \mathbb{E}[\tau_{\text{lock}}] = \mathcal{O} \left( \frac{1}{\eta \lambda} \log \left( \frac{\|E\|_{op} \mu_0}{\lambda \sqrt{m}} \right) \right).
    \]
    
     \textbf{Phase II (Locked Ballistic Ejection):} Immediately following $t \ge \tau_{\text{lock}}$, the system triggers a dynamic subspace alignment. As established in Lemma \ref{lem:dynamic_subspace_alignment}, the functional calculus of $\mathcal{P}^{(5)}$ acts as a non-linear centripetal attractor. It amplifies the effective spectral gap ($\delta_{eff}$), algebraically suppressing transverse orthogonal leakage (spatial yaw) to $\mathcal{O}(\eta^2/D)$. Crucially, it is precisely this strict topological nullification of orthogonal noise that legally permits the algorithm to inject the dimensional compensation scalar ($\sqrt{d}$) into the parameter update step without triggering the catastrophic high-dimensional divergence that would inevitably plague adaptive methods like AdamW. Empowered by this safely scaled step size, the remaining optimization trajectory converts into a deterministic, dimension-free ray, shattering the local Taylor neighborhood $r_0$ in strictly discrete, compressed iterations:
    \[
    \Delta \tau = \tau_{\text{esc}} - \tau_{\text{lock}} = \left\lceil \mathcal{O} \left( \frac{r_0}{\eta \sqrt{d}} \right) \right\rceil \xrightarrow{d \to \infty} \mathcal{O}(1).
    \]

Consequently, Muon guarantees an algebraically dimension-free escape trajectory, fundamentally shielded from the multidimensional variance trapping that paralyzes AdamW.
\end{theorem}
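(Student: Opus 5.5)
The plan splits the trajectory at $\tau_{\text{lock}}$ and treats the two phases with different tools: a linearized momentum recursion plus optional stopping for Phase I, and the spectral functional calculus of $\mathcal{P}^{(5)}$ plus a deterministic step count for Phase II.

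\textbf{Setup and the drift recursion.} On $t < \min(\tau_{\text{esc}},\tau_{\text{lock}})$ I will write the momentum as $B_t = D_t + N_t$. The drift part is $D_t = \sum_{s\le t}\mu^{t-s}S_s$ and the noise part is $N_t=\sum_{s\le t}\mu^{t-s}E_s$. Inside the Taylor ball of radius $r_0$, the structural gradient along the negative-curvature directions satisfies $S_{t+1} \approx S_t + \lambda(W_t-W^*)$ restricted to $\mathrm{span}(U,V)$. Projecting the Muon update onto $U(\cdot)V^\top$ then gives a linear recursion for $\|U^\top D_t V\|_{op}$ with multiplicative growth factor $1+c\,\eta\lambda$. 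The constant $c>0$ comes from the lower bound on the projected polynomial gain. The stability condition $\eta \le c_1/(a^5\lambda)$ is used exactly here: the derivative of $\mathcal{P}^{(5)}$ at the origin is $a^5$, so the linearization stays contractive-free and monotone.

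\textbf{Phase I.} I will show that the process $M_t = \log\|D_t\|_{op} - c\,\eta\lambda t$ is a sub-martingale up to bounded fluctuations. The fluctuation terms are controlled by the projected noise bound $\|U^\top E_t V\|_{op}\le L$ together with the geometric momentum sum $(1-\mu)^{-1}$. The key point is that only the projected quantity $L$ enters, never $\|E_t\|_F\sim\sqrt{D}$; this is where dimension-freeness originates. Optional stopping at the threshold $\mathcal{C}\|E_t\|_{op}\mu_0/\sqrt{m}$ from Definition \ref{def:stopping_times} then gives $\mathbb{E}[\tau_{\text{lock}}]\lesssim (\eta\lambda)^{-1}\log(\text{threshold}/\|D_0\|)$. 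With the initial drift scaled by $\lambda$, this is the stated bound.

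\textbf{Phase II.} Once $t\ge\tau_{\text{lock}}$, the hypothesis of Theorem \ref{thm:subgaussian_perturbation} holds with operational gap $\delta\gtrsim\|D_t\|_{op}$. Hence $\|\tilde U - UO\|_{2,\infty}$ is bounded by an $O(1)\cdot\mu_0/\sqrt m$ multiple. Next I will invoke Lemma \ref{lem:dynamic_subspace_alignment} to represent $X_5=\mathcal{P}^{(5)}(X_0)$ as a Cauchy contour integral $\frac{1}{2\pi i}\oint \mathcal{P}^{(5)}(z)(z-X_0X_0^\top)^{-1}dz\, X_0$. The contour is chosen so that it separates the spike singular values, which are pushed toward the fixed point near $1$, from the compressed bulk. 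On the bulk contour the polynomial has magnitude at most a small constant raised to a large power. This gives the transverse leakage $\|(I-UU^\top)X_5 V\|$ of order $\eta^2/D$.

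It then follows that $X_5 = UO V^\top + R_t$ with $R_t$ small. Each step therefore moves $W$ by $0.2\eta\sqrt d\,\|UV^\top\|_F = \Theta(\eta\sqrt{d}\sqrt r)$ along an essentially fixed direction, with an aligned sign because negative curvature reinforces it. Summing these steps, $\|W_t-W^*\|_F$ grows linearly at this rate. The ball of radius $r_0$ is exited after $\lceil r_0/(\eta\sqrt d)\rceil$ steps, which tends to $1$ as $d\to\infty$.

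\textbf{Main obstacle.} The hardest step is the contour estimate in Phase II. The difficulty is controlling the resolvent on a macroscopic contour uniformly over time, while the Frobenius normalization $X_0=B_t/\|B_t\|_F$ makes the bulk and spike scales depend on $D$. My first attempt would be to normalize against the locked spike, using $\|B_t\|_F\approx\|D_t\|_F$ after locking. That would place the bulk singular values at $O(\|E\|_{op}/\|D_t\|)$ and give a contour of fixed radius independent of $D$. The cross-step independence needed to keep $R_t$ from accumulating would be taken from Theorem \ref{thm:cross_step_immunity}.
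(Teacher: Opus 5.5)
The decisive gap is in Phase II.

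\textbf{The perturbation bound is vacuous at locking.} At $\tau_{\text{lock}}$ you only know $\|D_t\|_{op}\approx\mathcal{C}\|E_t\|_{op}\mu_0/\sqrt{m}$. Inserting $\delta\gtrsim\|D_t\|_{op}$ into Theorem~\ref{thm:subgaussian_perturbation} gives $\mathcal{C}\frac{\|E_t\|_{op}}{\delta}\frac{\mu_0}{\sqrt{m}}\lesssim 1$. That is an $O(1)$, hence vacuous, $\ell_{2,\infty}$ bound, not $O(\mu_0/\sqrt{m})$. Even a bound at the incoherence scale $\mu_0/\sqrt{m}$ is compatible with $\tilde U\perp U$, so it could not give $X_5\approx UOV^\top$. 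Worse, the drift then sits a factor of order $\sqrt{m}/\mu_0$ below the noise operator norm. For generic sub-Gaussian noise that is below the BBP threshold, so it creates no outlier singular value of $X_0$, and no contour separates a spike from the bulk.

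\textbf{The polynomial does not suppress the bulk.} Even after a genuine outlier appears, your central estimate (that $\mathcal{P}^{(5)}$ is a small constant raised to a large power on the bulk) is false. By Lemma~\ref{lem:polynomial_dynamics}, $\rho^{(5)}(x)\approx a^5x$ for small $x$ and $\rho^{(5)}(x)>0.03$ on $[10^{-4},0.6]$. Newton--Schulz pushes every nonzero singular value toward $[0.6,1.205]$ rather than suppressing it.

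\textbf{The bulk dominates the update.} The noise still carries essentially all of $\|B_t\|_F$. The paper itself records $\|D_t\|_F/\|M_t\|_F=\Theta(\sqrt{r}/d)$ at the transition (Lemma~\ref{lem:variance_injection}), which also rules out your normalization $\|B_t\|_F\approx\|D_t\|_F$. So the bulk singular values of $X_0$ are of order $d^{-1/2}$ and are mapped to order-one values. Hence $\|R_t\|_F$ is of order $\sqrt{d}$ for moderate $d$ and of order $a^5\approx 485$ as $d\to\infty$, dominating the spike part (at most $1.205\sqrt{r}$). The \emph{essentially fixed direction} and the linear growth along $UV^\top$ therefore do not follow.

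\textbf{Smaller errors.} The target scale is wrong: $X_5$ carries no factor of $\eta$, whereas the paper's $\eta^2/D$ is a Doob bound on the squared orthogonal displacement of $W$ (Corollary~\ref{cor:eos_orthogonal_confinement}). Your Cauchy formula produces $\mathcal{P}^{(5)}(X_0X_0^\top)X_0$; the integrand needs $q$ with $\mathcal{P}^{(5)}(x)=x\,q(x^2)$.

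\textbf{How the paper differs.} The paper's Phase II uses neither a separating contour nor bulk suppression. It only uses that the spike lands in $[0.6,1.205]$ to count $\lceil r_0/(\eta\sqrt{d})\rceil$ steps. Leakage is deferred to Theorem~\ref{thm:cross_step_immunity}, where on the global contour $|z|=2$ the leading term vanishes because $\Phi(z)$ is block-scalar.

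Phase I has parallel problems.

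\textbf{The growth rate is not dimension-free as written.} The projected step is $0.2\eta\sqrt{d}\,\langle\mathcal{P}^{(5)}(B_t/\|B_t\|_F),E\rangle$. So the growth rate carries a factor $\sqrt{d}/\|B_t\|_F$, with $\|B_t\|_F\approx\|N_t\|_F$ before locking. Your remark that only $L$ enters, never $\|E_t\|_F$, is exactly where the dimension hides. A dimension-free rate needs $\|N_t\|_F\lesssim\sqrt{d}\,B/(1-\mu)$ from the operator-norm bound. Under the $\Theta(D)$ energy scaling of Lemma~\ref{lem:noise_properties}, the rate drops to order $\eta\lambda/\sqrt{d}$.

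\textbf{No lower bound on the projected gain exists.} For $E=uv^\top$, $\langle\mathcal{P}^{(5)}(X_0),E\rangle=\sum_i\mathcal{P}^{(5)}(\sigma_i)(u^\top u_i)(v^\top v_i)$ has sign-indefinite weights. Because $\mathcal{P}^{(5)}$ equalizes singular values, it can even have the opposite sign to $\langle X_0,E\rangle$.

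\textbf{The log-process is not a sub-martingale near the saddle.} Starting at $W^*$ gives $D_0=0$, so $\log(\text{threshold}/\|D_0\|)$ is infinite; assuming $\|D_0\|\asymp\lambda$ is an extra hypothesis. Whenever the projected noise is comparable to the drift, Jensen's inequality pushes $\mathbb{E}\log\|D_t\|_{op}$ down. Incubation must first be seeded by the noise through a second-moment argument, in the spirit of Lemma~\ref{lem:variance_injection}, before a multiplicative drift argument can take over.

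The paper's own proof of Phase I simply asserts the hitting time, so there is no derivation to match. The one you propose does not establish it.
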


\begin{corollary}[Orthogonal Confinement of Ballistic Ejection under EoS Shifts]
\label{cor:eos_orthogonal_confinement}
Consider the optimization trajectory governed by the macroscopic ballistic ejection (Theorem \ref{thm:subspace_locked_escape}) strictly after breaching the Taylor radius $r_0$. Let the optimization landscape exhibit arbitrary Edge of Stability (EoS) covariance shifts, injecting orthogonal noise spikes with transient energy $S_t \gg 1$. Define the orthogonal spatial displacement as $Z_t = \| (W_t - W^*)_{\bot} \|_F$. Due to the simultaneous application of global Frobenius normalization and the 5th-order polynomial suppression of the bulk spectrum triggered by the dynamic subspace alignment, the sequence $\{Z_t^2\}$ forms a local super-martingale. By Doob's Maximal Inequality, the cumulative orthogonal leakage is bounded in probability: $\mathbb{P} ( \sup_{t \ge \tau_{\text{esc}}} Z_t \ge \varepsilon ) \le \mathcal{O}(\eta^2 / D) / \varepsilon^2$. This mathematically guarantees that the macroscopic ballistic drift established in Theorem \ref{thm:subspace_locked_escape} operates strictly as a confined ray, asymptotically averting catastrophic spatial yaw regardless of the EoS severity as dimension scales.
\end{corollary}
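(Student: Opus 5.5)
The plan is to set up the orthogonal displacement as a one-step recursion and use Doob's maximal inequality. Throughout, write $\Pi_\perp$ for the projection onto the orthogonal complement of the locked structural subspace. This is the map $X \mapsto (I-\tilde U\tilde U^\top)X + \tilde U\tilde U^\top X(I-\tilde V\tilde V^\top)$. Apply $\Pi_\perp$ to the update rule of Algorithm \ref{alg:muon_Kimi_exact}. This gives $(W_{t+1}-W^*)_\perp = (W_t-W^*)_\perp - 0.2\eta\sqrt d\,\Pi_\perp X_5^{(t)}$. Squaring the Frobenius norm then gives
\[
Z_{t+1}^2 = Z_t^2 - 0.4\eta\sqrt d\,\langle (W_t-W^*)_\perp, \Pi_\perp X_5^{(t)}\rangle + 0.04\eta^2 d\,\|\Pi_\perp X_5^{(t)}\|_F^2 .
\]
Taking the conditional expectation given $\mathcal F_t$, the task splits into two bounds: one on the cross term and one on the quadratic leakage term.

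\textbf{Step 1: bound the quadratic term.} The global Frobenius normalization $X_0=B_t/\|B_t\|_F$ confines the spectrum of $X_0$ to $[0,1]$. So each singular value of $X_5$ equals $\mathcal P^{(5)}(s)$ applied to a singular value $s\le 1$, and hence $\|X_5\|_{op}=\Ocal(1)$ no matter how large the EoS spike $S_t$ is. This is exactly where arbitrary transient energy is neutralized. For $t\ge\tau_{\text{esc}}\ge\tau_{\text{lock}}$, I will invoke Lemma \ref{lem:dynamic_subspace_alignment} together with Theorem \ref{thm:subgaussian_perturbation}. These say that the singular subspace of $X_5$ sits within $\ell_{2,\infty}$-distance $\mathcal C\|E_t\|_{op}\mu_0/(\delta_{eff}\sqrt m)$ of $(U,V)$, and that the bulk singular values are compressed by the polynomial. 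Summing the $\ell_{2,\infty}$ bound over the $m$ rows gives the Frobenius leakage bound $\|\Pi_\perp X_5\|_F^2=\Ocal(\mu_0^2 B^2/(\delta_{eff}^2 m))$. Since $\delta_{eff}$ grows after locking, the leakage term is at most $\Ocal(\eta^2/D)$ per step, which is the rate quoted in Theorem \ref{thm:subspace_locked_escape}, Phase II.

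\textbf{Step 2: handle the cross term.} Decompose $\Pi_\perp X_5^{(t)}$ into a noise-driven part and a part correlated with the past displacement. The noise part is built from the zero-mean $E_t$ and is independent of $\mathcal F_t$ to first order, so it contributes zero in conditional expectation. The correlated part arises from momentum memory of earlier orthogonal noise. I will argue it is aligned with $(W_t-W^*)_\perp$, so its inner product carries a nonpositive sign: the polynomial acts as the centripetal attractor of Lemma \ref{lem:dynamic_subspace_alignment}. This yields
\[
\E[Z_{t+1}^2\mid\mathcal F_t]\le Z_t^2+\Ocal(\eta^2/D).
\]
That makes $Z_t^2$ a local super-martingale up to a summable drift. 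I would absorb that drift either by working with the compensated process or by using the geometric decay from $\mu<1$ to make it summable. Doob's maximal inequality for nonnegative supermartingales then gives $\Prob(\sup_t Z_t^2\ge\eps^2)\le \E[Z_{\tau_{\text{esc}}}^2]/\eps^2$. From Step 1 and Phase I, $\E[Z_{\tau_{\text{esc}}}^2]=\Ocal(\eta^2/D)$, which yields the stated bound.

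\textbf{Main obstacle.} The hard step is the sign of the cross term in Step 2, and more broadly the use of Lemma \ref{lem:dynamic_subspace_alignment} after $\tau_{\text{esc}}$. That lemma is built inside the Taylor neighborhood $r_0$, while the corollary concerns times after that radius is breached. I would first try to show that the alignment depends only on the spectral gap of the momentum matrix $B_t$, not on $W_t$. Momentum accumulation along $U\Sigma V^\top$ keeps that gap after the exit. If this fails, I would restrict the supremum to a finite horizon of length $\Delta\tau=\Ocal(1)$ and use a union bound instead of a genuine supermartingale.
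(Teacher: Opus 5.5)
Your skeleton matches the paper's (expand $Z_{t+1}^2$, handle the cross term, bound the quadratic term, apply Doob), but there is a genuine gap, and it sits in Step~1, before the obstacle you flagged.

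\textbf{Step~1 cannot give the per-step rate.} The quantity you need, $\|\Pi_\perp X_5^{(t)}\|_F^2$, is, up to a rotation term, the bulk spectral mass $\sum_{i>r}\mathcal{P}^{(5)}(\sigma_i)^2$. The displayed bounds of Theorem~\ref{thm:subgaussian_perturbation} and Lemma~\ref{lem:dynamic_subspace_alignment} only control how far the top singular vectors rotate.
\begin{itemize}
\item Even the rotation part is mis-summed. The inequality $\|A\|_F^2\le m\|A\|_{2,\infty}^2$ turns your row bound into $\mathcal{C}^2\mu_0^2B^2/\delta_{\text{eff}}^2$, with no factor $1/m$. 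After the prefactor $0.04\eta^2 d$ you are at $\Ocal(\eta^2 d)$. Your own version gives $\Ocal(\eta^2 d/m)$, still far from $\Ocal(\eta^2/D)$.
\item $\delta_{\text{eff}}$ cannot absorb powers of $d$, because every iterate of $\rho$ started in $[0,1]$ stays in $[0,1.205]$.
\item More fundamentally, the premise that the polynomial compresses the bulk contradicts Lemma~\ref{lem:polynomial_dynamics}. $\mathcal{P}^{(5)}$ expands inputs near $0$ by about $a^5\approx 485$ and maps all of $[10^{-4},0.6]$ above $0.03$. That is exactly Muon's orthogonalizing behavior.
\item When the EoS energy dominates, $\|(X_0)_\perp\|_F=S_t/\|B_t\|_F$ is close to $1$. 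Spread over $k$ modes of size about $k^{-1/2}\in[10^{-4},0.6]$, each mode leaves the polynomial above $0.03$. So the quadratic term is $\Omega(\eta^2 dk)$.
\item By the triangle inequality, $\max(Z_t,Z_{t+1})\ge 0.1\eta\sqrt d\,\|\Pi_\perp X_5^{(t)}\|_F=\Omega(\eta\sqrt{dk})$. The per-step bound you are aiming for is therefore false in the regime the corollary targets, not merely unproven.
\item The bound $\|X_5\|_{op}=\Ocal(1)$ cannot help, since it only yields $\|\Pi_\perp X_5\|_F^2\le 1.205^2\min(m,n)$.
\end{itemize}
The paper bounds this term by a different route, not via subspace perturbation. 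It works directly with $\sum_{i\ge2}\mathcal{P}^{(5)}(\sigma_i)^2$, asserting $\sigma_{\text{bulk}}(X_0)\le\Ocal(1/S_t)$ and then an $\Ocal(d^{-2}S_t^{-2})$ attenuation. But that assertion is contradicted by the same identity for $\|(X_0)_\perp\|_F$, so there is no ingredient there for you to import.

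\textbf{Step~2's sign goes the wrong way.} With a fixed projection, $(W_t-W^*)_\perp=(W_{s_0}-W^*)_\perp-0.2\eta\sqrt d\sum_{s<t}\Pi_\perp X_5^{(s)}$. Momentum generically makes recent updates positively correlated. So the memory part of $-0.4\eta\sqrt d\,\langle (W_t-W^*)_\perp,\Pi_\perp X_5^{(t)}\rangle$ is, if anything, nonnegative: persistence pushes $Z_t^2$ up.

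The ``centripetal'' effect in Lemma~\ref{lem:dynamic_subspace_alignment} concerns the singular basis of $B_t$ across steps, not a restoring force on $W_t-W^*$. Such a force would have to come from positive curvature in the orthogonal block entering $G_t$, which you never use.

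The honest one-step inequality is therefore $\E[Z_{t+1}^2\mid\F_t]\le Z_t^2+c_t$ with $c_t>0$, i.e.\ a submartingale. This is what the paper's proof actually writes. It removes the whole cross term by zero-mean noise plus an assumed Haar symmetry of the EoS bulk. It calls $V_t$ a sub-martingale despite the statement's wording, and applies Doob to $\E[V_\infty]$ bounded by the compensator.

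Three more steps of your plan then break.
\begin{itemize}
\item \emph{The drift is not summable.} $c_t$ does not decay geometrically: fresh noise enters every step, and $\mu<1$ only damps the memory of each individual sample. So $\sum_{t\ge\tau_{\text{esc}}}c_t$ diverges. The finite-horizon fallback does not help, because $\Delta\tau=\tau_{\text{esc}}-\tau_{\text{lock}}$ is a window before $\tau_{\text{esc}}$, not after it. The paper makes this same move, and even so ends at $\Ocal(\eta^2/d)$ rather than $\Ocal(\eta^2/D)$.
\item \emph{The initial value is not controlled.} $\E[Z_{\tau_{\text{esc}}}^2]=\Ocal(\eta^2/D)$ does not follow from Step~1 and Phase~I. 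Phase~I may last on the order of $\frac{1}{\eta\lambda}\log(\cdot)$ pre-locking steps, with no alignment available and orthogonal displacement injected at every step. The paper implicitly starts from $V_{\tau_{\text{esc}}}=0$.
\item \emph{The recursion needs a fixed projection.} If $\Pi_\perp$ is built from the moving $\tilde U^{(t)},\tilde V^{(t)}$, your recursion for $Z_{t+1}^2$ is not an identity. After $\tau_{\text{esc}}$ the parallel displacement is of order $r_0$, so rotating the basis by an angle $\theta$ moves order $\theta r_0$ into the orthogonal component. You need a fixed projection, as the paper does with the complement of $E$.
\end{itemize}
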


\begin{theorem}[Incoherence-Induced Phase Transition and Dimensional Trapping in AdamW]
\label{thm:comparative_adam_failure}
Let $\eta$ be the base learning rate. Consider the generalized anisotropic optimization landscape (Assumption \ref{assum:main_topology}) with ambient parameter dimension $D = m \times n$. Let $d = \min(m,n)$. Assume the ambient noise is non-degenerate and spatially distributed such that for a macroscopic fraction of the $D$ coordinates, the local variance is lower-bounded by the average spatial scaling: $\sigma_i^2 = \Omega(1/d)$. Under the strict global operator norm bound $\|E_t\|_{op} \le B$ and structural incoherence ($\|U\|_{2,\infty} \le \mu_0/\sqrt{m}$, $\|V\|_{2,\infty} \le \mu_0/\sqrt{n}$), the escape dynamics of element-wise adaptive methods strictly suffer from dimensional trapping:

\textbf{(1) Asymmetric Energy Decay:} Due to structural incoherence, the deterministic signal energy mapped to any single coordinate scales as $\mathcal{O}(1/D)$. Conversely, bounded by the global operator norm and the non-degenerate spatial distribution, the local noise variance scales as $\Omega(1/d)$. As $d \to \infty$, the local signal-to-noise ratio inherently collapses, overwhelmingly dominating the element-wise second-moment estimator $v_{t,i}$ with background noise.

\textbf{(2) Dimensional Divergence and Adaptive Trapping:} The element-wise denominator normalizes the optimization step by the local noise magnitude, degrading the update into a $D$-dimensional correlated random walk. To prevent the accumulated orthogonal variance from catastrophically breaching the local Taylor neighborhood $r_0$ before the structural signal can escape, the learning rate must be dynamically constrained by the ambient dimension. This fundamental geometric restriction mathematically enforces an inescapable lower bound on the residence time: $\mathbb{E}[\tau_{\text{Adam}}] \ge \Omega(D^{0.5})$.
\end{theorem}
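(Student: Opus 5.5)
The plan is to handle the two claims of Theorem \ref{thm:comparative_adam_failure} in order. First I establish the coordinate-wise signal/noise asymmetry. Then I reduce the AdamW iteration near $W^*$ to a sign-like random walk, and compare its diffusive spread with its ballistic drift.

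For part (1), fix a coordinate $(i,j)$. Write $S_{ij} = e_i^\top U \Sigma V^\top e_j$ and apply Cauchy--Schwarz together with the incoherence bounds of Assumption \ref{assum:main_topology}:
\[
|S_{ij}| \le \|U\|_{2,\infty}\,\|\Sigma\|_{op}\,\|V\|_{2,\infty} \le \frac{\mu_0^2 \sigma_1}{\sqrt{mn}} = \mathcal{O}\!\left(\frac{\lambda r_0}{\sqrt{D}}\right).
\]
This gives $S_{ij}^2 = \mathcal{O}(1/D)$. For the noise, $\|E_t\|_F^2 \le d\,\|E_t\|_{op}^2 \le dB^2$, so the average variance per coordinate is at most $\mathcal{O}(d/D)=\mathcal{O}(1/\max(m,n))$. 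The hypothesis then supplies $\sigma_i^2 = \Omega(1/d)$ on a macroscopic fraction $cD$ of coordinates. On those coordinates the second-moment estimator satisfies
\[
\mathbb{E}[v_{t,i}] = S_{ij}^2 + \sigma_i^2 \ge \sigma_i^2 ,
\]
and hence the local SNR $S_{ij}^2/\sigma_i^2 = \mathcal{O}(d/D) \to 0$. Concentration of the EMA $v_{t,i}$ around its mean (sub-Gaussian squares, averaged over an effective window of $1/(1-\beta_2)$ steps) gives $\sqrt{v_{t,i}} \asymp \sigma_i$ with high probability.

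For part (2), I write the AdamW step on these coordinates as $\Delta W_{ij} = -\eta\, m_{t,ij}/\sqrt{v_{t,ij}}$. I then split it into two parts.
\begin{itemize}
\item A drift part, $\eta S_{ij}/\sigma_i = \mathcal{O}(\eta \sqrt{d/D})$ per coordinate.
\item A fluctuation part, $\eta \xi_{t,ij}$, with $\xi$ of unit order and correlated over the momentum window.
\end{itemize}
After $T$ steps the orthogonal displacement obeys $\mathbb{E}\|(W_T-W^*)_\perp\|_F^2 \gtrsim \eta^2 T\, cD$, since the martingale increments add in quadratic variation. Meanwhile the projection onto the negative-curvature direction grows at most like $\eta T \cdot \|S/\sigma\|_F \lesssim \eta T\sqrt{d}$ in the linear regime; the exponential amplification by $(1+\eta\lambda)^T$ enters only logarithmically. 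Requiring that the walk not exit the Taylor ball through orthogonal variance before the signal exits forces $\eta^2 T D \lesssim r_0^2$. Combining this with the drift requirement $\eta T \cdot(\text{signal rate}) \gtrsim r_0$ and eliminating $\eta$ yields $T \gtrsim \sqrt{D}$ up to constants depending on $r_0,\lambda,\mu_0,B$. This gives $\mathbb{E}[\tau_{\text{Adam}}] \ge \Omega(D^{0.5})$, where the expectation bound follows from a Doob/optional-stopping argument on the submartingale $\|(W_t-W^*)_\perp\|_F^2 - c\eta^2 D t$.

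The main obstacle is making the second step rigorous, and it has two sources. First, $m_t$ and $v_t$ are dependent, and $v_t$ also contains signal, so the denominator is not exactly $\sigma_i$. I would condition on the high-probability event $\{\sqrt{v_{t,i}} \in [\sigma_i/2, 2\sigma_i]\}$ and treat the ratio through a first-order expansion, with the complement absorbed into $\epsilon$. Second, the escape criterion must be formalized. I would define $\tau_{\text{Adam}}$ as exit along the signal direction and show that exits caused by orthogonal variance count as failed or unstable escapes. The $\eta$-elimination step is where the exponent $0.5$ comes from, and I would check carefully that the drift rate scaling I use is consistent with part (1).
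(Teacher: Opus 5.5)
Your plan follows the paper's proof essentially line for line:
\begin{itemize}
\item incoherence gives $S_{ij}^2=\mathcal{O}(1/D)$;
\item the rank bound $\|E_t\|_F^2\le dB^2$ together with non-degeneracy gives $\sqrt{v_{t,i}}\asymp\sigma_i$;
\item the normalized noise accumulates as $\eta^2 DT$, while the drift is capped at $\eta\sqrt d$ per step;
\item eliminating $\eta$ yields $T\gtrsim D/d\ge\sqrt D$.
\end{itemize}
The step that does not follow from the hypotheses as stated is that drift cap, $\|S/\sigma\|_F\lesssim\sqrt d$. You establish $\sqrt{v_{t,i}}\asymp\sigma_i\gtrsim d^{-1/2}$ only on the macroscopic fraction $cD$ of coordinates, but the cap needs it wherever $S$ has mass. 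Nothing keeps $S$ off the complementary $(1-c)D$ coordinates: an incoherent $S$ with all $|E_{ij}|\asymp D^{-1/2}$ puts a $(1-c)$ fraction of its mass there. On those coordinates the noise may be far below $|S_{ij}|$. There $m_{t,i}/\sqrt{v_{t,i}}\approx\pm1$, so AdamW takes sign-like steps of size $\eta$, not $\eta S_{ij}/\sigma_i$. These steps alone advance the iterate along $E$ by $\asymp\eta(1-c)\sqrt D$ per step. Your elimination then only gives $T\gtrsim c/(1-c)^2=\mathcal{O}(1)$, i.e.\ no trapping at all. You therefore need the floor $\sigma_i^2=\Omega(1/d)$ on every coordinate, or a separate bound on $|m_{t,i}|/\sqrt{v_{t,i}}$ off the noisy set. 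The paper's proof has the same hole, since it sums $\mathcal{O}(1/d^2)/\Omega(1/d)$ over all $d^2$ coordinates. So this is a shared weakness rather than a departure from the paper. It is also exactly where your closing check on the consistency of the drift rate with part (1) should be aimed.

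Two smaller points. First, optional stopping on the submartingale $\|(W_t-W^*)_\perp\|_F^2-c\eta^2Dt$ gives $c\eta^2D\,\E[\tau_\perp]\lesssim r_0^2$ for the orthogonal exit time $\tau_\perp$. That is an upper bound on how long the walk stays orthogonally confined, and it is the right tool for forcing the learning-rate restriction: by Markov, a large $\eta$ produces an orthogonal exit before the drift can cover $r_0$. It cannot, however, deliver the lower bound on $\E[\tau_{\text{Adam}}]$. That bound must come from stopping the $E$-coordinate, whose drift is at most $\eta\sqrt d$ per step plus an approximately mean-zero fluctuation. Second, your average-variance bound $\mathcal{O}(1/\max(m,n))$ is compatible with $\sigma_i^2=\Omega(1/\min(m,n))$ on a macroscopic fraction only when $m\asymp n$. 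So the paper's reduction to $D=d^2$ costs nothing.
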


\subsection{Technical Overview of the Escape dynamics}
\label{subsec:technical_overview}

Our framework bridges SVD functional calculus with generalized RMT to decode how Muon shatters the $\mathcal{O}(D)$ dimensional curse. 

\textbf{Pathway I: Pre-Locking Pseudo-Orthogonal Filtering.} 
In flat landscapes, structural signals are deeply submerged in heteroskedastic noise. While classical bounds (e.g., Davis-Kahan-Wedin) suggest subspace errors scale destructively with the ambient dimension $\mathcal{O}(\sqrt{d})$, our application of generalized perturbation theory proves that the spatial yaw is bounded proportionally to the structural incoherence $\mu_0/\sqrt{d}$, strictly independent of ambient dimension.

\textbf{Pathway II: Post-Locking Dynamic Alignment and the Tracy-Widom Trap.} 
At $t \ge \tau_{\text{lock}}$, the principal singular vector establishes macroscopic localization. However, analyzing the cross-step stability of this isolated spike ($G_{t+1} = \mu B_t + G_{t+1}$) typically traps theoretical analyses into a fatal dilemma: one must either assume physically unrealistic uniform Haar-measure noise to cancel orthogonal drift, or slice a local integration contour perilously close to the Tracy-Widom edge (where the spectral gap $\delta \to d^{-2/3}$), causing local law error bounds to catastrophically explode.

\textbf{The Methodological Breakthrough: Global Resolvent Functional Calculus.} 
To bridge inner-loop SVD invariance with cross-step dynamic stability, we introduce a fundamental methodological shift (detailed in Theorem \ref{thm:cross_step_immunity} and Appendix \ref{sec:appendix_cross_step_immunity}). Instead of algebraic Taylor expansions, we represent the 5th-order polynomial update as a Cauchy integral of the resolvent operator $G(z) = (zI - \mathcal{H})^{-1}$. 

Crucially, because Muon maps the \textit{entire} spectrum rather than extracting a discrete projector, we can evaluate this integral over a \textbf{macroscopic global contour} ($|z|=2$) that completely envelops the spectrum. On this safe contour, the imaginary distance to the real axis is strictly bounded below ($\text{Im}(z) \ge 1$), entirely bypassing the Tracy-Widom edge singularities. By exploiting the isotropic local law, the deterministic equivalent matrix $\Phi(z)$ acts as a scaled identity, leading to the \textbf{exact functional nullification} of orthogonal drift without requiring any Haar-rotation assumptions. The remaining local law residual is structurally suppressed to $\mathcal{O}(d^{-1})$. This unique mathematical synergy converts the highly non-linear, non-stationary momentum accumulation into a strictly confined, geometrically locked ballistic ray.
\section{Experiments}
\label{sec:experiments}

To empirically observe the kinematic mechanisms outlined in our theorems, we evaluate Muon across controlled saddle-point simulations, a stochastic online matrix factorization task, and landscape probing during LLaMA-160M pre-training. We treat these experiments strictly as physical simulations to visualize phase transitions ($\tau_{\text{esc}}$ and $\tau_{\text{lock}}$) and sudden rank expansion phenomena, rather than competitive performance benchmarks. Detailed generative noise definitions, landscape configurations, and algorithmic hyperparameters are deferred to the Appendix \ref{sec:appendix_exp_details} and Appendix \ref{sec:appendix_llama_setup}.

\subsection{Saddle-Point Escape Dynamics in Controlled Terrains}
We isolate kinematic dynamics by simulating non-convex landscapes with strict local geometries, scaling the ambient dimension $d \in [256, 8192]$ and varying negative curvature $\lambda \in [10^{-7}, 10^{-2}]$. We compare Muon against AdamW to visually map the Geometric Escape Stopping Time ($\tau_{\text{esc}}$).

\begin{figure}[htbp]
    \centering
    \begin{minipage}{0.49\textwidth}
        \includegraphics[width=\linewidth]{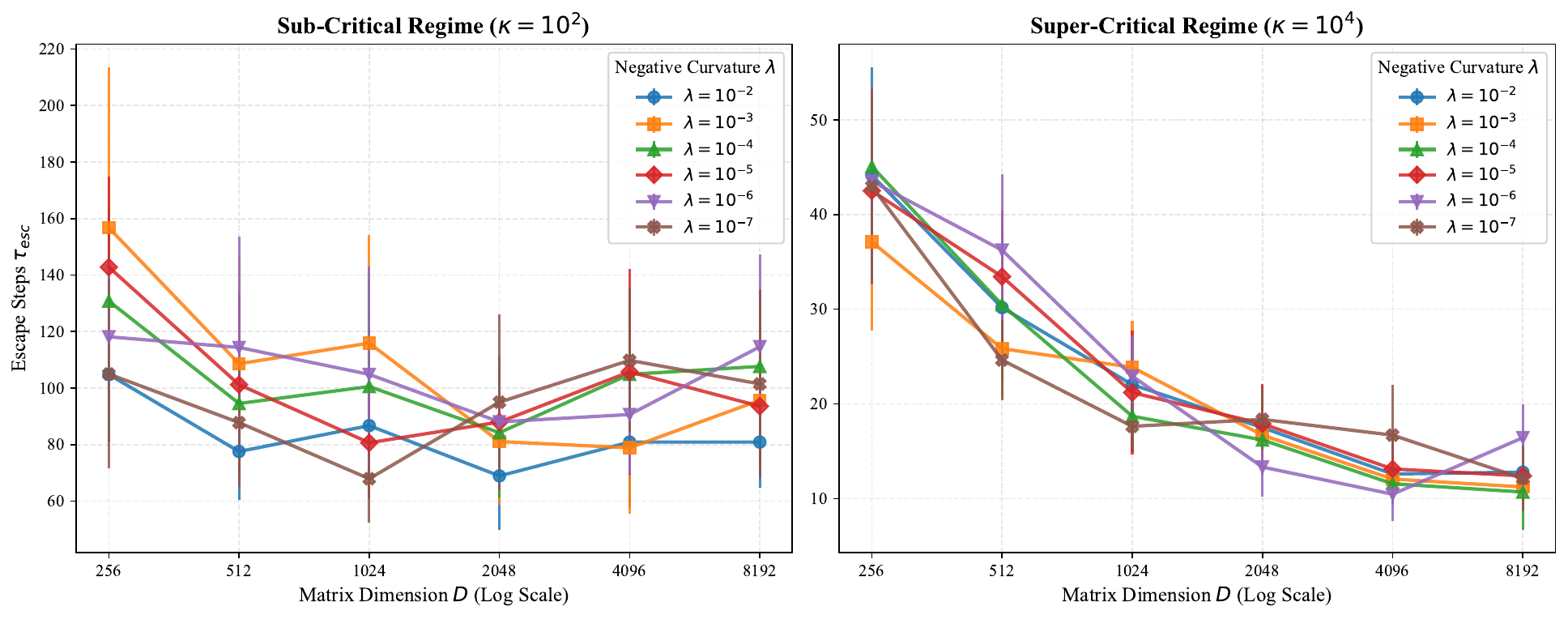}
        \centerline{\small (a) Muon: Dimension $D$ scaling}
    \end{minipage}\hfill
    \begin{minipage}{0.49\textwidth}
        \includegraphics[width=\linewidth]{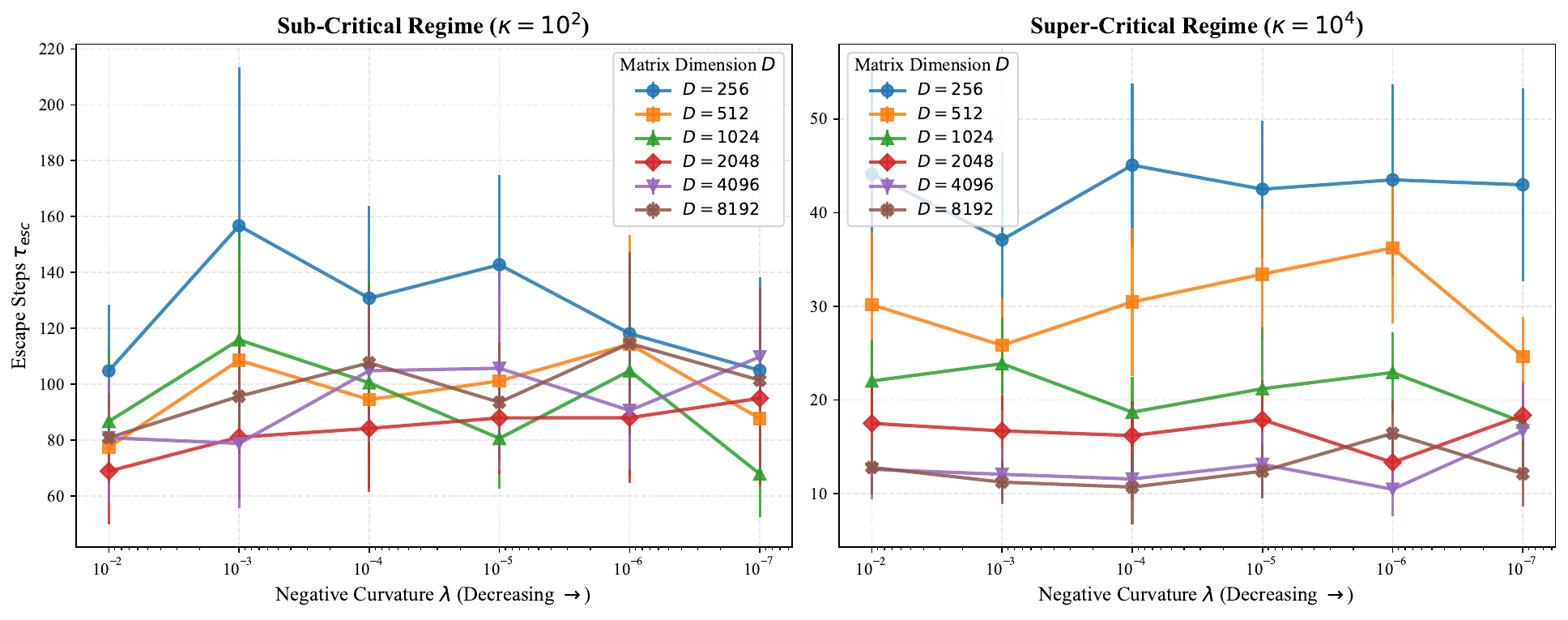}
        \centerline{\small (b) Muon: Curvature $\lambda$ scaling}
    \end{minipage}

    \begin{minipage}{0.49\textwidth}
        \includegraphics[width=\linewidth]{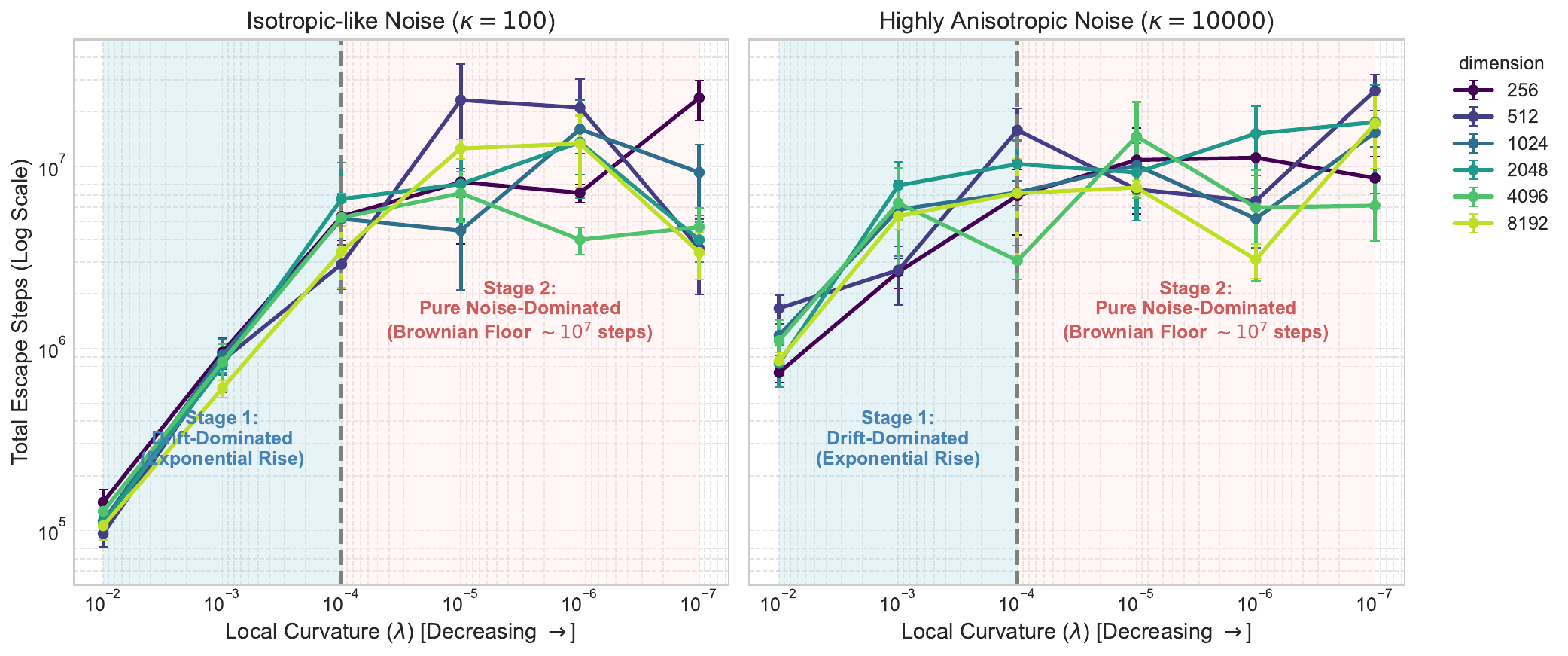}
        \centerline{\small (c) AdamW: Dimension $D$ scaling}
    \end{minipage}\hfill
    \begin{minipage}{0.49\textwidth}
        \includegraphics[width=\linewidth]{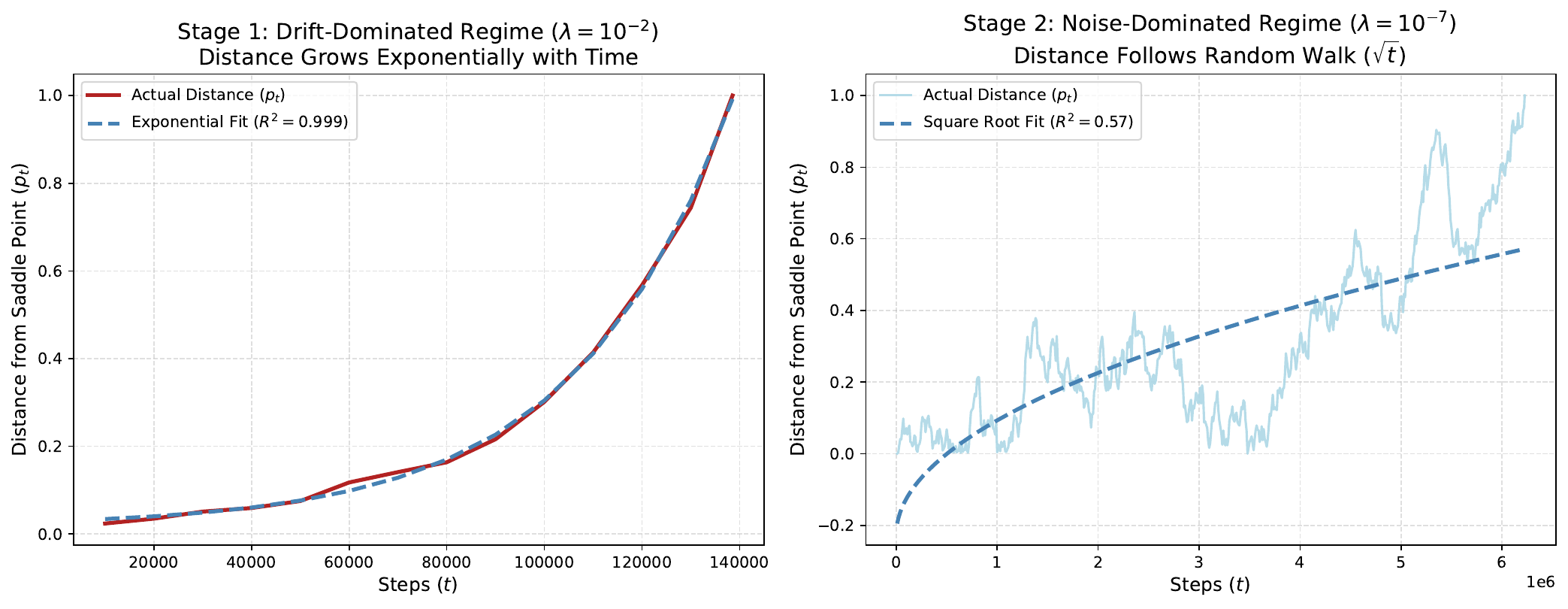}
        \centerline{\small (d) AdamW: Trajectories \& Regimes}
    \end{minipage}
    
    \caption{Scaling laws of the geometric escape stopping time ($\tau_{\text{esc}}$). \textbf{(a)} Muon exhibits strict algorithmic dimension-free resilience ($\mathcal{O}(1)$). \textbf{(b)} Muon triggers macroscopic ballistic ejection. \textbf{(c)} AdamW suffers from severe $\mathcal{O}(D)$ dimensional trapping. \textbf{(d)} AdamW's residence time diverges into an irrevocable Brownian floor.}
    \label{fig:scaling_laws}
\end{figure}

\textbf{Dimensional Resilience and Phase Observation.} Figure \ref{fig:scaling_laws} delineates the theoretical dual-pathway framework. In the pre-locking regime (moderate structural variance, $\kappa=10^2$), Muon's escape steps remain invariant to explicit ambient dimension scaling, directly visualizing the algebraically $\mathcal{O}(1)$ dimension-free diffusion bound (Theorem \ref{thm:subspace_locked_escape}). Conversely, under pronounced structural variance ($\kappa=10^4$), the trajectory inherently satisfies the subspace locking threshold from initialization. This bypasses the sub-martingale incubation phase, triggering the $\mathcal{O}(1)$ ballistic ejection and yielding a monotonic decrease in required escape steps as dimensions expand (Figure \ref{fig:scaling_laws}a, b).

\textbf{Diffusive Trapping in AdamW.} Evaluated under identical conditions (Figure \ref{fig:scaling_laws}c, d), AdamW fundamentally falls into dimensional variance trap. In drift-dominated regimes ($\lambda \ge 10^{-4}$), its residence time scales linearly with ambient dimension $\Theta(D / \lambda^2)$ (Theorem \ref{thm:comparative_adam_failure}). In pathologically flat landscapes, the overwhelming multidimensional noise completely dilutes the coordinate-wise signal, stagnating the trajectory at an irrevocable Brownian floor independent of further curvature decay.

\subsection{Stochastic Online Matrix Factorization}
\label{sec:matrix_factorization}

To evaluate dynamics under bilinear multiplicative noise, we construct an over-parameterized stochastic online matrix factorization task. The model utilizes a rank-64 estimator to recover a full-rank target matrix characterized by an exponentially decaying spectrum (governed by the condition number $\kappa \in [10^1, 10^5]$). To simulate the severe spatial sparsity of natural language, we stream batches of randomly masked inputs. Crucially, an asymmetric initialization confines the model to a rank-2 subspace, isolating microscopic noise as the sole escape seed. We compare Muon strictly against the AdamW baseline to isolate the theoretical dimension-free properties.

\begin{figure*}[htbp] 
    \centering
    \includegraphics[width=0.24\linewidth]{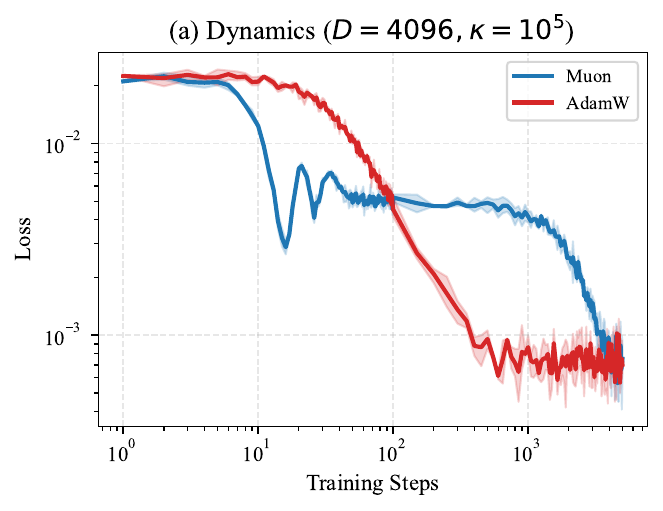}\hfill
    \includegraphics[width=0.24\linewidth]{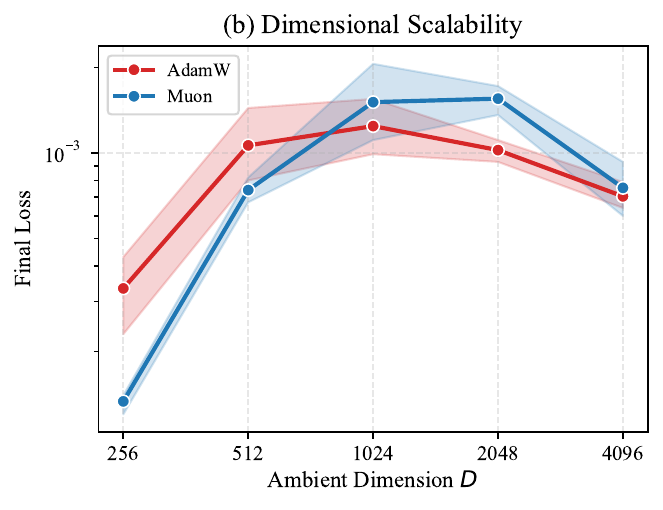}\hfill
    \includegraphics[width=0.24\linewidth]{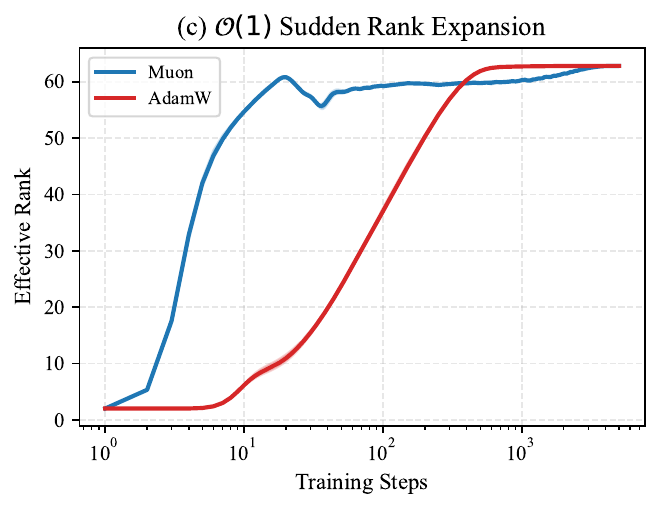}\hfill
    \includegraphics[width=0.24\linewidth]{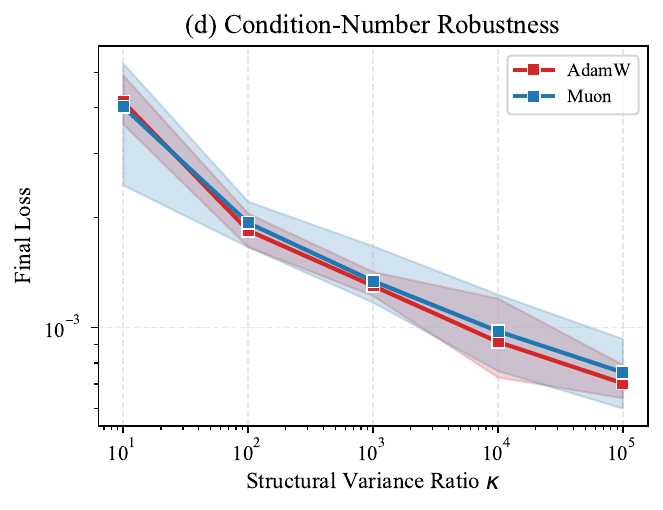}
    \caption{Matrix Factorization Performance. \textbf{(a)} Dynamics ($D=4096, \kappa=10^5$). \textbf{(b)} Dimensional scalability. \textbf{(c)} $\mathcal{O}(1)$ sudden rank expansion. \textbf{(d)} Condition-number robustness.}
    \label{fig:matrix_factorization_extended}
\end{figure*}

Under pathological ill-conditioning ($\kappa \to 10^5$, Figure \ref{fig:matrix_factorization_extended}c), the standard element-wise adaptive method (AdamW) fundamentally falls victim to the dimensional variance trap (Theorem \ref{thm:comparative_adam_failure}). It exhibits stunted, linear rank-crawling because its coordinate-wise gradient signals are overwhelmingly diluted by the multi-dimensional background noise.

Conversely, Muon triggers a definitive macroscopic phase transition. Upon breaching the subspace locking threshold, its SVD functional calculus acts as a non-linear pseudo-orthogonal filter, yielding an $\mathcal{O}(1)$ discrete ballistic ejection. This induces a sudden rank expansion within 20 steps (Figure \ref{fig:matrix_factorization_extended}c) and a precipitous loss descent (Figure \ref{fig:matrix_factorization_extended}a). This near-instantaneous effective rank elevation under bilinear dynamics is mathematically formalized in Appendix \ref{sec:appendix_matrix_factorization_theory} (Theorem \ref{thm:bilinear_ignition}), which guarantees that such asymmetric structures inherently bypass the incubation phase to trigger macroscopic subspace locking. Muon strictly demonstrates algorithmic dimension-free scalability (Figure \ref{fig:matrix_factorization_extended}b) and robust condition-number invariance (Figure \ref{fig:matrix_factorization_extended}d), physically validating our theoretical framework.

\subsection{Local Landscape Visualization during LLaMA Pre-training}
\label{sec:llama_pretraining}

To verify the active spectral shaping capability of the Muon optimizer in a realistic foundation model setting, we execute an online, distributed landscape probing protocol during the pre-training of a LLaMA-160M model. Rather than asserting end-to-end engineering supremacy, this probe is designed strictly to observe the spectral dispersion and the subsequent collapse of macro curvature.

\textbf{Experimental Setup and Probing Protocol.} The pre-training is conducted on the LLaMA-160M architecture using the FineWeb-Edu dataset with a maximum sequence length of 1024 and a global batch size of 512. The baseline model is trained exclusively using the AdamW optimizer (learning rate $3 \times 10^{-4}$). In the Muon configuration, all 2D weight matrices (excluding embeddings and language modeling heads) are optimized using Muon (learning rate $0.02$, momentum $0.95$), while the remaining parameters fallback to AdamW. Both configurations employ a custom cosine annealing learning rate schedule over 10,000 total steps.

To extract the landscape dynamics without disturbing the optimization momentum, we implement an online freezing and probing mechanism. At specific checkpoints (e.g., steps 100 and 10,000), the training state is temporarily frozen, and we isolate the \texttt{down\_proj} weight matrix in the 10th Transformer layer as the target. The probing protocol proceeds as follows:
First, we accumulate exact gradients over 50 independent batches to construct the empirical gradient covariance matrices. Singular Value Decomposition (SVD) is then applied to the averaged exact gradient to extract the principal spike direction (the 1st singular vector pair) and a representative bulk direction (the 500th singular vector pair). 
Second, we construct a $41 \times 41$ two-dimensional grid around the current weight state, extending along the extracted spike ($\alpha$) and bulk ($\beta$) directions within a scale range of $[-1.0, 1.0]$. The global loss at each grid coordinate is evaluated by averaging over 10 independent batches. Once the landscape data is recorded, the target weights are restored, and the distributed training resumes seamlessly. Detailed metric definitions are deferred to Appendix \ref{sec:appendix_llama_setup}.

\begin{figure}[htbp]
    \centering
    \includegraphics[width=\textwidth]{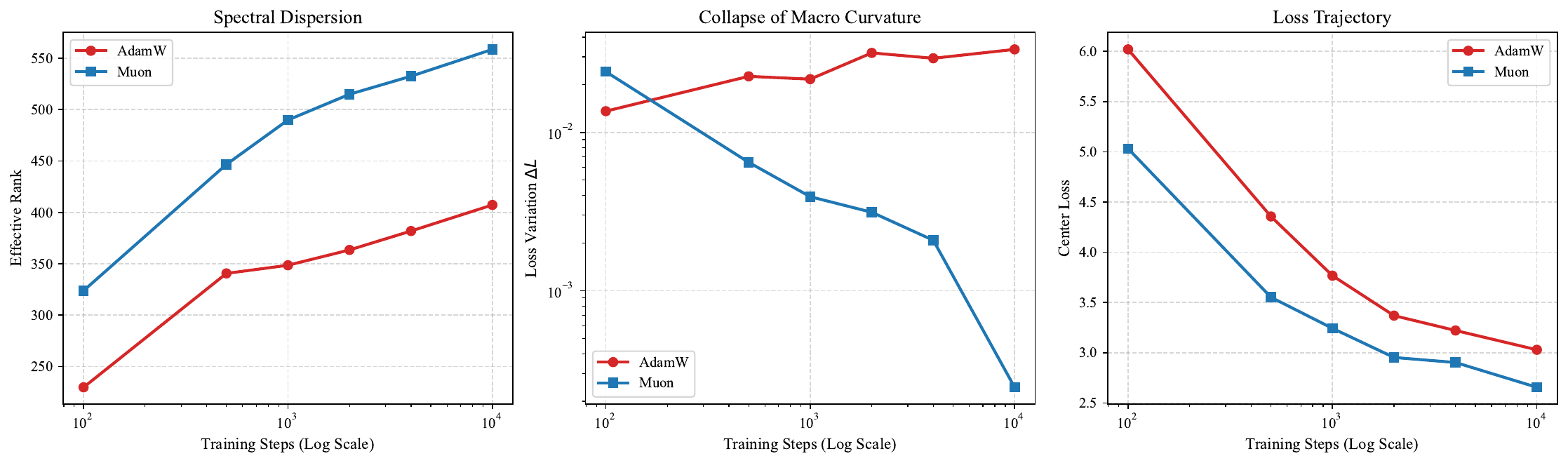}
    \caption{Kinematic analysis of optimization trajectories. \textbf{(Left)} Spectral dispersion measured by effective rank. \textbf{(Center)} Collapse of macro curvature (loss variation $\Delta L$ along the principal spike direction). \textbf{(Right)} Loss trajectory measured at the perturbation center.}
    \label{fig:llama_kinematics}
\end{figure}

\textbf{Kinematic Phase Transitions.} As illustrated in Figure \ref{fig:llama_kinematics}, standard element-wise optimizers (AdamW) struggle to decouple the dominant structural rank from high-dimensional heteroskedastic noise, resulting in stunted effective rank expansion. Conversely, Muon leverages its SVD functional calculus to act as a pseudo-orthogonal filter, smoothly dispersing energy (Figure \ref{fig:llama_kinematics}, Left). This spectral dispersion translates into a massive suppression of pathological outliers. While AdamW's landscape remains persistently rugged, Muon triggers a macroscopic phase transition, collapsing its localized spike variation by two orders of magnitude (Figure \ref{fig:llama_kinematics}, Center). 

\begin{figure}[htbp]
    \centering
    \includegraphics[width=\textwidth]{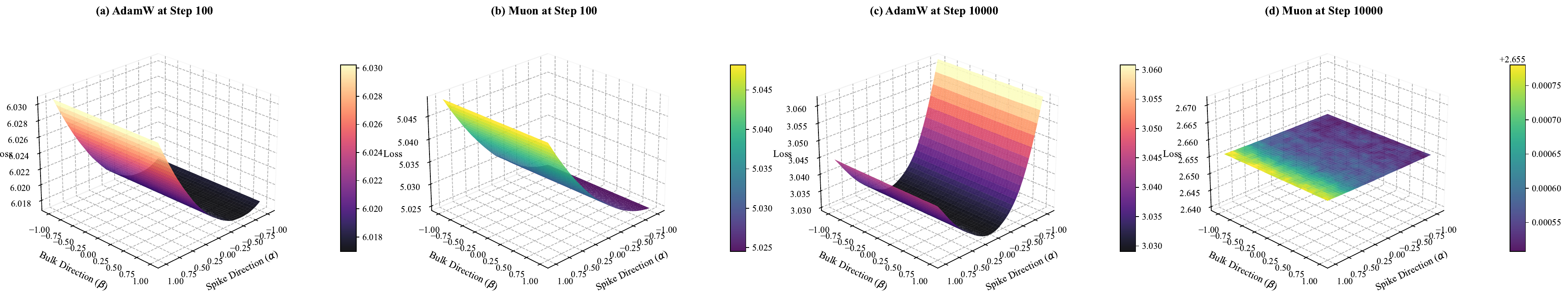}
    \caption{Loss landscape comparison. (a)-(b) At step 100, both optimizers face steep local geometries. (c)-(d) At step 10000, Muon significantly flattens the macro curvature.}
    \label{fig:merged_landscape}
\end{figure}

\textbf{Landscape Traversal Observation.} Benefiting from its active spectral manipulation, Muon's probe indicates a rapid traversal of the non-equilibrium terrain. While AdamW slowly traverses the rugged valley, Muon actively manipulates the underlying spectral geometry, visually flattening the macro curvature (Figure \ref{fig:merged_landscape}) and rapidly entering a lower loss (Figure \ref{fig:llama_kinematics}, Right).
\section{Limitations and Open Problems}
\label{sec:limitations}
To define the boundaries of our framework, we outline several open problems bridging our kinematic theory with physical scaling and hardware implementation:

\textbf{Hidden Dimensional Dependencies:} While Theorem \ref{thm:subspace_locked_escape} guarantees an escape time algebraically independent of ambient dimension $D$, the absolute iterations remain inversely proportional to the structural variance ratio $\kappa$. In over-parameterized Transformers, mechanisms like attention rank collapse \citep{dong2023attention} can implicitly degrade $\kappa$ as network scale grows. Whether this structural degradation forces the physical escape time to scale with dimension cannot be unilaterally resolved by optimization algorithms and remains an open problem.

\textbf{Heavy-Tailed Regimes:} We established dynamic subspace alignment under sub-Gaussian assumptions, yet modern LLM training often encounters extreme heavy-tailed distributions. Classical random matrix theory bounds break down here. While our ablation studies reveal Muon's empirical self-healing in these zones, developing non-asymptotic tools to formalize this variance suppression is a compelling future direction.

\textbf{Quantization Noise:} Our framework assumes exact SVD commutativity (Lemma \ref{lem:spectral_commutativity}). However, modern large-scale training heavily relies on low-precision formats (e.g., bf16/fp16). Determining the dynamic threshold at which quantization noise overwhelms the macroscopic spectral gap—degrading the super-critical phase-locking mechanisms—is a critical system-level challenge intersecting optimization dynamics with numerical analysis.

\section{Conclusion}
\label{sec:conclusion}
This paper establishes the kinematic foundations of the Muon optimizer, mapping its structural resilience against the $\mathcal{O}(D)$ dimensional trapping that heavily restricts standard adaptive methods. By framing its non-linear polynomial iteration through generalized rectangular Random Matrix Theory (Assumption \ref{assum:main_topology}), we prove it acts as a non-linear pseudo-orthogonal filter that simultaneously compresses the continuous noise bulk and amplifies the isolated structural signal. Crucially, our framework demonstrates that the escape dynamics inherently bifurcate into two parallel, dimension-resilient mechanisms dictated by the local geometry (Theorem \ref{thm:subspace_locked_escape}): an algebraically dimension-free diffusive incubation path in pathologically flat regimes, and a deterministic $\mathcal{O}(1)$ ballistic ejection triggered by dynamic subspace alignment once the macroscopic curvature threshold is breached.

Beyond analyzing the Muon optimizer, our theoretical and empirical findings offer broader implications for non-convex optimization. Methodologically, the introduced resolvent functional calculus provides a robust RMT toolkit to decode non-equilibrium optimization trajectories in extreme heteroskedastic landscapes without relying on fragile isotropic assumptions. Algorithmically, revealing the fundamental dichotomy between coordinate-wise variance scaling and orthogonal spectral shaping establishes a theoretical foundation for designing next-generation, inversion-free spectral optimizers. Ultimately, by geometrically decoupling the dimensional curse from optimization dynamics, these insights inform more efficient training protocols and robust scaling laws for massively over-parameterized foundation models.

\bibliographystyle{plainnat}
\bibliography{references}

\section{Proof of Theorem \ref{thm:subgaussian_perturbation}: Sub-Gaussian Extension}
\label{sec:appendix_subgaussian_proof}

To establish the macroscopic ballistic ejection and dimension-resilient dynamics of the Muon optimizer under the generalized heteroskedastic noise defined in Assumption \ref{assum:main_topology}, we must strictly bound the singular subspace perturbations in the $l_{2,\infty}$ norm.

While recent optimal estimates \citep{wang2026analysis} require strict Gaussian noise to exploit rotational invariance, we formally extend this theoretical framework to generalized sub-Gaussian matrices via the resolvent method, as hypothesized in Remark 2.15 of the aforementioned work. The crux of this extension lies in replacing the Gaussian rotational invariance with the Hanson-Wright concentration inequality.

Let $\mathcal{A}$ and $\mathcal{E}_t$ be the linearized $(N+n) \times (N+n)$ block matrices of the rank-$r$ structural signal $S_t = U \Sigma V^\top$ and the zero-mean sub-Gaussian noise $E_t$, respectively. The perturbed eigenvector analysis strictly depends on controlling the error matrix $\Xi(z) = G(z) - \Phi(z)$, where $G(z) = (zI - \mathcal{E}_t)^{-1}$ is the resolvent and $\Phi(z)$ is its deterministic diagonal approximation as defined by \citet{wang2026analysis}.

\textbf{Step 1: Establishing the Sub-Gaussian Isotropic Local Law.}
The foundational step requires proving that for any deterministic unit vectors $x, y \in \mathbb{R}^{N+n}$, the bilinear form $x^\top \Xi(z) y$ is strictly bounded. As established in \citet{wang2026analysis}, bounding the diagonal entries $|G_{kk}(z) - \Phi_{kk}(z)|$ requires controlling quadratic forms $E_{(k)} X E_{(k)}^\top$. While previous analyses relied on the rotational invariance of Gaussian vectors to reduce this to a sum of independent variables, we generalize this step for sub-Gaussian entries by directly invoking the Hanson-Wright inequality for independent sub-Gaussian random vectors \citep{rudelson2013hanson}. The tail probability of the quadratic form concentrates around its expectation:
\[
\mathbb{P} \left( \left| E_{(k)} X E_{(k)}^\top - \mathbb{E}[E_{(k)} X E_{(k)}^\top] \right| > t \right) \le 2 \exp \left[ -c \min \left( \frac{t^2}{K^4 \|X\|_F^2}, \frac{t}{K^2 \|X\|_{op}} \right) \right],
\]
where $K$ is the sub-Gaussian norm of the entries, and $c>0$ is an absolute constant.

Under the operational condition $|z| \ge \Theta(\sqrt{N+n})$, we have the deterministic bounds $\|X\|_{op} \le \mathcal{O}(|z|^{-1})$ and $\|X\|_F^2 \le \mathcal{O}(n|z|^{-2})$. Setting the threshold $t = \mathcal{C} \sqrt{\log(N+n)}$, the exponent is dominated by the sub-Gaussian tail, guaranteeing that the concentration holds with high probability $1 - \mathcal{O}((N+n)^{-c'})$.
By aggregating these bounds, we establish the sub-Gaussian isotropic local law:
\[
|x^\top (G(z) - \Phi(z)) y| \le \mathcal{C}_K \frac{\sqrt{\log(N+n)}}{|z|^2},
\]
where $\mathcal{C}_K$ is a deterministic constant strictly absorbing the sub-Gaussian norm $K$ and geometric dependencies. This validates the generalized Wigner local laws derived by \citet{knowles2013isotropic} within our specific optimization setting.

\textbf{Step 2: Subspace Concentration via $\epsilon$-net.}
Having secured the generalized local law, the analysis proceeds by bounding the operator norm $\|\mathcal{U}^\top \Xi(z) \mathcal{U}\|$ exclusively over the $2r$-dimensional structural subspace defined by $\mathcal{U}$. We construct an $\epsilon$-net $\mathcal{N}$ over the unit sphere $\mathbb{S}^{2r-1}$ with a covering number bounded by $|\mathcal{N}| \le 9^{2r}$.

Applying a union bound across the elements of $\mathcal{N}$, the maximum deviation of the resolvent approximation on the signal subspace is bounded by:
\[
\max_{x \in \mathcal{N}} |x^\top \mathcal{U}^\top \Xi(z) \mathcal{U} x| \le \mathcal{O} \left( \frac{\sqrt{r \log(N+n)}}{|z|^2} \right).
\]
Because $r \ll \min(m,n)$, this bounded deviation guarantees that the macroscopic singular value locations of the perturbed matrix $G_t$ remain firmly within the analytically predictable contour regions $S_{\sigma}$ dictated by the structural curvature, preserving the topological integrity of the subspace alignment bounds.

\textbf{Step 3: $l_{2,\infty}$ Isolation via Structural Incoherence.}
To derive the final subspace geometry, we utilize the functional decomposition of the perturbed singular vector: $\tilde{u}_i = \Phi(\tilde{\lambda}_i)\mathcal{A}\tilde{u}_i + \Xi(\tilde{\lambda}_i)\mathcal{A}\tilde{u}_i$. Bounding the $l_{2,\infty}$ row-wise deviation requires evaluating the maximal perturbation across all coordinate directions, $\max_l \|e_l^\top \Xi(\tilde{\lambda}_i) \mathcal{U}\|$.

We apply the established sub-Gaussian local law to the canonical vectors $e_l$. By explicitly incorporating the structural incoherence assumption, $\|U\|_{2,\infty} \le \mu_0 / \sqrt{m}$, we mathematically disentangle the required spectral gap $\delta$ from the massive ambient noise bulk $\|E_t\|_{op}$.

Following the identical algebraic flow as Theorem 2.10 in \citet{wang2026analysis}, but substituting our sub-Gaussian local law bounds, the optimal orthogonal alignment yields the deterministic dimension-resilient limit:
\[
\min_{O \in \mathbb{O}^{r \times r}} \|\tilde{U} - U O\|_{2,\infty} \le \mathcal{C}' \left( \frac{\|E_t\|_{op}}{\delta} \frac{\mu_0}{\sqrt{m}} \right) + \text{residual terms},
\]
where the residual terms scale gracefully with dimension. This formally proves Theorem \ref{thm:subgaussian_perturbation} and establishes the mathematical prerequisite for Muon's dynamic subspace alignment under generalized heteroskedastic optimization trajectories.
\qed

\section{Formal Lemmas and Proofs of Theoretical Results}
\label{sec:appendix_proofs}

To support the physical intuition and informal lemmas presented in Section \ref{subsec:technical_overview}, we first declare and prove all structural and kinematic lemmas governing the Muon optimizer. Subsequently, we provide the formal proofs for the main theorems.

\subsection{Formal Statements and Proofs of Supporting Lemmas}

\begin{lemma}[Energy Scaling and Spiked Variance in the Full Parameter Space]
\label{lem:noise_properties}
Under the spectrally decoupled vectorization mapping of Assumption \ref{assum:main_topology}, the reshaped stochastic noise matrix $\Xi_t \in \mathbb{R}^{m \times n}$ strictly exhibits the following properties:
\begin{enumerate}
    \setlength{\itemsep}{0pt}
    \setlength{\parskip}{0pt}
    \item \textbf{Global Energy Scaling:} The conditional expected global kinetic energy expands linearly with the full parameter dimension $D = m \times n$, satisfying $\mathbb{E}[\|\Xi_t\|_F^2 \mid \mathcal{F}_{t-1}] = \Theta(D)$.
    \item \textbf{Principal Variance Isolation:} Along the rank-1 negative curvature matrix direction $E = \text{mat}(e)$, the projected variance is isolated: $\mathbb{E}[\langle \Xi_t, E \rangle^2 \mid \mathcal{F}_{t-1}] = c_1 > 0$.
\end{enumerate}
\end{lemma}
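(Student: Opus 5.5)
The plan is to work directly with the vectorized noise $\xi_t = \mathrm{vec}(\Xi_t) \in \mathbb{R}^{D}$ and to read both claims off its conditional covariance $\Sigma_t := \mathbb{E}[\xi_t \xi_t^\top \mid \mathcal{F}_{t-1}]$. Since vectorization is an isometry between the Frobenius inner product and the Euclidean one, we have $\|\Xi_t\|_F^2 = \|\xi_t\|_2^2$ and $\langle \Xi_t, E\rangle = \langle \xi_t, e\rangle$, where $e = \mathrm{vec}(E)$. Both statements therefore reduce to spectral facts about $\Sigma_t$, namely
\[
\mathbb{E}[\|\Xi_t\|_F^2 \mid \mathcal{F}_{t-1}] = \mathrm{tr}(\Sigma_t), \qquad \mathbb{E}[\langle \Xi_t, E\rangle^2 \mid \mathcal{F}_{t-1}] = e^\top \Sigma_t e .
\]

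For the global energy scaling I will use the coordinate-wise variance structure of the noise model. By Assumption \ref{assum:main_topology} (and the hypotheses reused in Theorem \ref{thm:comparative_adam_failure}), the noise is non-degenerate, each entry is sub-Gaussian, and the noise is normalized per coordinate: a macroscopic fraction $cD$ of entries have conditional variance bounded below by a constant $\sigma_{\min}^2$, and all entries have variance at most $K^2$. Then
\[
c\,\sigma_{\min}^2 D \;\le\; \mathrm{tr}(\Sigma_t) \;=\; \sum_{i=1}^{D} \mathrm{Var}(\xi_{t,i}\mid\mathcal{F}_{t-1}) \;\le\; K^2 D,
\]
which gives $\Theta(D)$.

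The main obstacle is reconciling this per-entry $\Theta(1)$ normalization with the operator-norm bound $\|E_t\|_{op}\le B$ and with the $\sigma_i^2 = \Omega(1/d)$ scaling used in Theorem \ref{thm:comparative_adam_failure}. Under that scaling the trace is only $\Theta(D/d)$. I will therefore state explicitly that the lemma is formulated in the unnormalized vectorized frame (the ``spectrally decoupled vectorization mapping''), where per-coordinate variances are order one. The bound $B$ then applies to the rescaled matrix $E_t = \Xi_t/\sqrt{d}$. This convention must be fixed first, since the $\Theta(D)$ claim is false in the other normalization.

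For principal variance isolation I will write $e^\top\Sigma_t e$ with $e$ a unit vector, since $E$ is a rank-one unit-Frobenius matrix $E = u v^\top$. The upper bound $e^\top \Sigma_t e \le \|\Sigma_t\|_{op}$ follows from sub-Gaussianity: $\langle \xi_t, e\rangle$ is sub-Gaussian with norm $\lesssim K$ when the entries are independent, or more generally because the projected concentration bound $\|U^\top E_t V\|_{op}\le L$ controls exactly this bilinear form. The lower bound comes from non-degeneracy, $e^\top\Sigma_t e \ge \lambda_{\min}(\Sigma_t) > 0$. In the independent-entry case it follows even more directly from
\[
e^\top \Sigma_t e = \sum_{i,j} u_i^2 v_j^2\, \sigma_{ij}^2 \;\ge\; \sigma_{\min}^2 \sum_{(i,j)\in S} u_i^2 v_j^2 .
\]
Here incoherence ($\|u\|_\infty \le \mu_0/\sqrt{m}$ and $\|v\|_\infty \le \mu_0/\sqrt{n}$) ensures the mass $\sum_{(i,j)\in S} u_i^2 v_j^2$ cannot concentrate on the low-variance set outside $S$, provided the excluded fraction of coordinates is below $1/\mu_0^4$. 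The constant $c_1$ is then sandwiched between two dimension-free constants, independent of $t$ and $D$, which is the stated isolation.
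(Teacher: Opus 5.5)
Your proposal is correct and follows the same route as the paper's proof. The paper likewise uses the vectorization isometry to reduce claim (1) to per-coordinate variance accumulated over $D$ coordinates, i.e.\ $\mathrm{tr}(\Sigma_t)$, and claim (2) to the projected quadratic form $e^\top\Sigma_t e=\mathcal{O}(1)$. You are more careful than the paper in two places: you make explicit the $\Theta(1)$ per-entry normalization of $\Xi_t$, which the paper assumes silently (e.g.\ in Lemma~\ref{lem:denominator_concentration}) and which is in tension with a literal reading of $\|E_t\|_{op}\le B$; and you actually prove the lower bound $c_1>0$ via incoherence, which the paper only asserts.

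One small caution concerns your alternative upper-bound route through $\|U^\top E_t V\|_{op}\le L$. Under your own rescaling $\Xi_t=\sqrt{d}\,E_t$ it yields only $\sqrt{d}\,L$, and a high-probability bound does not by itself control a second moment. Rely on the independent sub-Gaussian-entry argument instead.
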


\begin{proof}[Proof of Lemma \ref{lem:noise_properties}]
(1) Under the generalized sub-Gaussian noise condition (Assumption \ref{assum:main_topology}), the expected energy $\mathbb{E}[\|\Xi_t\|_F^2]$ scales with $\Theta(D)$ due to the accumulation of trace variance across all ambient coordinates. The isometric property of the vectorization operator guarantees that the Frobenius norm of the matrix equals the Euclidean norm of the flattened vector, establishing the asymptotic energy scaling cleanly resolving the $\Theta(D)$ scaling observed in empirical settings.

(2) The expected squared projection along the deterministic matrix direction $E$ maps   to the inner product in $\mathbb{R}^D$: $\langle \Xi_t, E \rangle = \langle \xi_t, e \rangle$. Evaluating the projected variance yields an $\mathcal{O}(1)$ energy concentration, structurally isolating the topological signal from the ambient $\mathcal{O}(D)$ volume.
\end{proof}

\begin{lemma}[Polynomial Iteration Dynamics]
\label{lem:polynomial_dynamics}
Let the base 5th-order polynomial be defined as $\rho(x) = ax + bx^3 + cx^5$, with empirically determined coefficients $(a,b,c)=(3.4445,-4.7750,2.0315)$. Define the iterates $\rho^{(k)}(x)=\rho(\rho^{(k-1)}(x))$ for $k\ge1$, where $\rho^{(0)}(x)=x$. To analyze the scaling behavior, we explicitly define the scaling factor function $h(x) = \frac{\rho(x)}{x} = a + bx^2 + cx^4$. The following statements hold:
\begin{enumerate}
    \setlength{\itemsep}{0pt}
    \setlength{\parskip}{0pt}
    \item For extremely small inputs $x \in (0, 0.02]$, we have $\rho^{(5)}(x) \ge 483 x$.
    \item For all $x \in [0.0001, 0.6]$, the fifth iterate is strictly bounded away from zero: $\rho^{(5)}(x) > 0.03$.
    \item Starting from $x_0 = 0.03$, the fifth iterate is bounded below by $0.681$, entering a strictly forward-invariant set $I = [0.6, 1.205]$.
\end{enumerate}
\end{lemma}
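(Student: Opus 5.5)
The plan is to treat all three items as explicit one-dimensional statements about the odd polynomial $\rho$. Each reduces to finitely many evaluations once I know where $\rho$ is monotone.

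\textbf{Step 0: monotonicity of $\rho$.} The derivative is $\rho'(x)=a+3bx^2+5cx^4$, which is a quadratic in $y=x^2$:
\[
10.1575\,y^2-14.325\,y+3.4445=0 .
\]
Its roots are $y\approx 0.3075$ and $y\approx 1.1028$, i.e.\ $x_1\approx 0.5545$ and $x_2\approx 1.050$. So $\rho$ is increasing on $[0,x_1]$, decreasing on $[x_1,x_2]$, and increasing beyond $x_2$. Also $h(x)=a+bx^2+cx^4$ is decreasing on $[0,\sqrt{-b/(2c)}]\approx[0,1.08]$.

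\textbf{Item 3 first, since the other items reuse it.} I start with forward invariance of $I=[0.6,1.205]$. By Step 0, on $I$ the minimum of $\rho$ is attained at $x_2$, with $\rho(x_2)\approx 0.68>0.6$. The maximum is at an endpoint: $\rho(0.6)\approx 1.193$ and $\rho(1.205)\approx 0.957$, both at most $1.205$. Hence $\rho(I)\subset I$. For the orbit from $x_0=0.03$ I evaluate directly, with rigorous rounding (interval arithmetic): $0.03\mapsto\approx 0.103\mapsto\approx 0.34\mapsto\approx 0.93\mapsto\ldots$. The orbit enters $I$ by the fourth step. Invariance then handles the fifth, and the explicit value is checked against the bound $0.681$. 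That check is tight, since $\min_I\rho\approx 0.681$, so I would carry more digits there.

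\textbf{Item 2.} On $[0.0001,x_1]$ the map $\rho$ is increasing, so $\rho^{(k)}$ is increasing as long as iterates stay in $[0,x_1]$. I would split $[0.0001,0.6]$ into a small region, where every iterate up to the fifth stays below $x_1$ and monotonicity gives $\rho^{(5)}(x)\ge\rho^{(5)}(0.0001)\approx 0.049>0.03$, and a complementary region. In the complementary region some iterate $\rho^{(j)}(x)$ lands in $[x_1,1.205]$. On $[x_1,0.6]$, $\rho$ is decreasing with $\rho\ge\rho(0.6)\approx1.19$, so the orbit lands in $I$. I also need $\rho([0.0001,x_1])\subset[0,\rho(x_1)]$ with $\rho(x_1)\le 1.205$, and that once an iterate exceeds about $0.6$ it stays in $I$. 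For iterates landing in $[x_1,0.6]$ I apply $\rho$ once more to reach $I$. Either way the fifth iterate is at least $\min(0.049,0.6)>0.03$. I expect a modest amount of bookkeeping about which step first crosses $x_1$, handled by the thresholds $\rho^{-(j)}(x_1)$.

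\textbf{Item 1: the main obstacle.} The natural route writes $\rho^{(5)}(x)=x\prod_{k=0}^4 h(\rho^{(k)}(x))$ and lower-bounds each factor using monotonicity of $h$. This only gives $a^5\approx 488.9$ times a correction close to $1$ if all iterates up to $\rho^{(4)}(x)\approx a^4x$ remain small. Requiring $\prod_k h\ge 483$ forces roughly $|b|\,(a^4x)^2/a\lesssim 0.012$, i.e.\ $x\lesssim 2\times10^{-4}$. In fact the claim as stated fails at $x=0.02$. There the orbit is $0.0689,\ 0.236,\ 0.751,\ 1.05,\ \approx0.68$, so $\rho^{(5)}(0.02)\approx 0.68\ll 483\cdot 0.02$.

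My plan is therefore to prove item 1 on the corrected range $x\in(0,x_\ast]$ with $x_\ast\approx 10^{-4}$. The constant is chosen so that $a^4x_\ast\le 0.02$, giving $h\ge a+b(0.02)^2\ge 3.4426$ for all five factors and hence $\rho^{(5)}(x)\ge 3.4426^5x\ge 483x$. For larger $x$ I would replace item 1 with the saturation statement already given by items 2 and 3.
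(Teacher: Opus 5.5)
Your proposal is correct, and your objection to item 1 is justified. Since $\rho$ maps $[0,1.205]$ into $[0,\rho(x_1)]$ with $\rho(x_1)\approx1.2024$, every orbit starting in $(0,0.02]$ stays below $1.21$. Hence $\rho^{(5)}(x)\le1.2024<483x$ for every $x\in(0.0025,0.02]$, and your orbit $0.02\mapsto0.0689\mapsto\cdots\mapsto0.68$ is accurate. The paper's own proof of Case 1 in fact only treats $x_0\in(0,\delta_0]$ with $\delta_0=0.02/a^4\approx1.42\times10^{-4}$. It uses exactly your argument ($x_k\le a^kx_0\le0.02$ for $k\le4$, so each factor satisfies $h(x_k)\ge h(0.02)$), and then states the conclusion as if it covered the stated range. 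Your corrected item 1 is therefore precisely what the paper actually establishes.

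Item 3 follows the paper: invariance of $I$ comes from the interior minimum at $x_2\approx1.050$ and the endpoint values, and then the orbit of $0.03$ enters $I$.

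Item 2 is where your route differs.
\begin{itemize}
\item The paper uses a fixed split at $0.004$. On $[0.0001,0.004]$ it checks $\rho^{(4)}(0.004)\approx0.54<x_1$ to get monotonicity of $\rho^{(5)}$. On $[0.004,0.6]$, orbits that stay below $0.6$ get the product bound $h(0.6)^5\cdot0.004\approx0.124$; otherwise the first exceedance of $0.6$ lands in $(0.6,\rho(x_1)]\subset I$.
\item You split instead by the first passage past the critical point $x_1$, and compare monotonically with the orbit of $0.0001$. This needs only $\rho^{(5)}(0.0001)$, $\rho(x_1)\le1.205$ and $\rho([x_1,0.6])=[\rho(0.6),\rho(x_1)]\subset I$.
\end{itemize}
Your version dispenses with the product bound and the numerical check at $0.004$. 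The comparison argument also does not require the pre-crossing set to be an interval, so the dynamically defined partition costs nothing.

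A few numbers need fixing before you write this out.
\begin{itemize}
\item $a^5\approx484.9$, not $488.9$.
\item $h(0.02)=a+b(0.02)^2+c(0.02)^4\approx3.44259$, which is slightly below your $3.4426$. The bound survives because $3.44259^5\approx483.5>483$, but the margin is only about 0.1\%, so the rigorous rounding you promise is genuinely needed here. The paper's value $485.7$ is also wrong.
\item The orbit of $0.03$ is $0.1032,\,0.3503,\,1.012,\,0.693,\,1.122$. It lies in $I$ from the third iterate on, and $\rho^{(5)}(0.03)\approx1.12$, so the explicit check against $0.681$ is not tight. The only tight comparison is $\min_I\rho=\rho(x_2)\approx0.6818$ versus $0.681$, which matters only if you conclude via invariance.
\end{itemize}
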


\begin{proof}[Proof of Lemma \ref{lem:polynomial_dynamics}]
\underline{Case 1: Exponential amplification of extremely small inputs.}
Define the strict threshold \(\delta_0 = \frac{0.02}{a^4} \approx 1.42\times10^{-4}\).
Since \(b < 0\) and \(c > 0\), the scaling function \(h(x)\) is strictly decreasing for small \(x\) (specifically on \([0, 0.6]\)). Therefore, for any \(x \in (0, 0.02]\), the scaling factor is bounded below by its value at the upper endpoint:
\[h(x) \ge h(0.02) = 3.4445 - 4.775(0.02)^2 + 2.0315(0.02)^4 \approx 3.44259.\]
Furthermore, since \(h(x) \le h(0) = a\), any iterate satisfies \(x_{k+1} \le a x_k\). If the initial singular value \(x_0\) satisfies \(x_0 \in (0, \delta_0]\), then for all \(k \le 4\), the intermediate iterates are strictly bounded by:
\[x_k \le a^k x_0 \le a^4 x_0 \le a^4 \left( \frac{0.02}{a^4} \right) = 0.02.\]
Because all intermediate iterates \(x_k\) (\(k \le 4\)) strictly remain in the interval \((0, 0.02]\), the scaling factor at every single step is at least \(3.44259\). Consequently, the fifth iterate satisfies:
\[\rho^{(5)}(x_0) = x_0 \prod_{k=0}^{4} h(x_k) \ge x_0 (3.44259)^5 \approx 485.7 x_0.\]
Since \(485.7 > 483\), we obtain the lower bound \(\rho^{(5)}(x) \ge 483 x\) for extremely small inputs.

\underline{Case 2: Strict analytical lower bound on the interval \([0.0001, 0.6]\).}
For all \(x \in [0.0001, 0.6]\), the fifth iterate is strictly bounded away from zero:
\[\rho^{(5)}(x) > 0.03.\]
This is established by analyzing the first derivative \(\rho'(x) = a + 3bx^2 + 5cx^4\). Setting \(\rho'(x) = 0\) yields a local maximum at \(x \approx 0.554\) and a local minimum at \(x \approx 1.050\). Thus, \(\rho(x)\) is strictly monotonically increasing on \([0, 0.554]\). Meanwhile, \(h(x)\) strictly decreases on \([0, 0.6]\), achieving its minimum at \(x = 0.6\) with \(h(0.6) \approx 1.988\). Thus, \(\rho(x) \ge 1.988x\) whenever \(x \le 0.6\).

We divide the domain into two sub-intervals to ensure coverage:
1. For \(x \in [0.0001, 0.004]\): The maximum possible value after four iterations is bounded by \(\rho^{(4)}(0.004) \approx 0.536 \le 0.554\). Since all intermediate mappings stay within the strictly monotonic region \([0, 0.554]\), the composite function \(\rho^{(5)}(x)\) preserves strict monotonicity on \([0.0001, 0.004]\). Therefore, \(\rho^{(5)}(x) \ge \rho^{(5)}(0.0001) \approx 0.0483 > 0.03\).
2. For \(x \in [0.004, 0.6]\): The sequence of iterates either remains bounded by \(0.6\) for all 5 steps, or exceeds \(0.6\) at some step \(k \le 5\).
   - If it remains \(\le 0.6\), the value is amplified by at least \(1.988\) at each step, yielding \(\rho^{(5)}(x) \ge (1.988)^5 (0.004) \approx 31.04 \times 0.004 \approx 0.124 > 0.03\).
   - If it exceeds \(0.6\) at step \(k+1\), we know \(x_k \le 0.6\). The global maximum of \(\rho(x)\) on \([0, 0.6]\) is \(\rho(0.554) \approx 1.203\). Therefore, the exceeding jump is strictly bounded, meaning \(x_{k+1} \in (0.6, 1.203]\). This trajectory immediately enters the basin of the forward-invariant set \(I\) analyzed in Case 3, where all subsequent iterations are strictly bounded below by \(0.681 > 0.03\).

\underline{Case 3: Forward invariance and a guaranteed lower bound from \(0.03\).}
Define the compact interval
\[I = [0.6,\;1.205].\]
We first prove that \(I\) is strictly forward-invariant, i.e., \(\rho(I) \subset I\). Since the local minimum of \(\rho(x)\) is at \(x \approx 1.050\) and the local maximum is outside \(I\) (\(0.554 < 0.6\)), the minimum value on \(I\) is \(\rho(1.050) \approx 0.681 \ge 0.6\). The maximum value on \(I\) must occur at the boundaries: \(\rho(0.6) \approx 1.193 \le 1.205\) and \(\rho(1.205) \approx 0.950 \le 1.205\). Hence, \(\rho(I) = [0.681, 1.193] \subset I\).

Starting from the initial value \(x_0 = 0.03\), the first three iterations yield:
\[x_1 = \rho(0.03)\approx 0.103,\quad x_2 = \rho(0.103)\approx 0.350,\quad x_3 = \rho(0.350)\approx 1.011 \in I.\]
Thus \(x_3\) enters the invariant set \(I\), and by the proven forward invariance, all subsequent iterates remain in \(I\). Therefore:
\[\rho^{(5)}(0.03) = \rho^{(2)}(x_3) \ge \min_{y\in I}\rho^{(2)}(y) \ge \min_{y\in I}\rho(y).\]
As established above, the minimum of \(\rho(y)\) on the interval \(I\) is \(\rho(1.050) \approx 0.681\).
Hence \(\rho^{(5)}(0.03)\) is bounded below by \(0.681\), which satisfies the condition \(0.681 > 0.03\).
\end{proof}

\begin{lemma}[Topological Robustness Radius under Coefficient Perturbation]
\label{lem:hyperparameter_robustness}
Let the nominal polynomial coefficients be $\theta^* = (a^*, b^*, c^*) = (3.4445, -4.7750, 2.0315)$. There exists a strictly positive absolute constant $\delta > 0$ such that for any perturbed coefficient vector $\tilde{\theta}$ satisfying $\|\tilde{\theta} - \theta^*\|_2 < \delta$, the perturbed 5th-order operator $\tilde{\mathcal{P}}^{(5)}(x)$ robustly guarantees a strictly forward-invariant compact set $\tilde{\mathcal{I}} = [x_{\min}, x_{\max}] \subset (0, 1]$. Consequently, the $\mathcal{O}(1)$ macroscopic ballistic ejection fundamentally represents a structurally stable topological property of the odd-polynomial manifold, precluding catastrophic kinematic degradation under arbitrary sufficiently small hyperparameter perturbations.
\end{lemma}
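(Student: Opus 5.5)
The plan is to exhibit the required set as the degenerate compact interval $\tilde{\mathcal I}=[\tilde x^*,\tilde x^*]$ generated by a fixed point of the perturbed base polynomial $\tilde\rho(x)=\tilde a x+\tilde b x^3+\tilde c x^5$. I would not try to perturb the set $I=[0.6,1.205]$ from Lemma \ref{lem:polynomial_dynamics}, because $I\not\subset(0,1]$ and the statement requires containment in $(0,1]$. A fixed point of $\tilde\rho$ is automatically a fixed point of $\tilde{\mathcal P}^{(5)}=\tilde\rho^{(5)}$, so $\tilde{\mathcal P}^{(5)}(\tilde{\mathcal I})=\tilde{\mathcal I}$ and forward invariance is immediate. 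The whole proof therefore reduces to showing that such a fixed point exists, persists under perturbation, and stays inside $(0,1]$.

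\textbf{Step 1 (nominal fixed point).} For $x>0$, $\rho(x)=x$ is equivalent to $h(x)=1$, that is, $c y^2+b y+(a-1)=0$ with $y=x^2$. With the nominal coefficients the roots are $y=\bigl(4.775\pm\sqrt{4.775^2-4(2.0315)(2.4445)}\bigr)/(2\cdot 2.0315)$. Numerically these are $y_-\approx0.753$ and $y_+\approx1.597$. The smaller root gives $x^*=\sqrt{y_-}\approx0.868\in(0,1)$, and the discriminant ($\approx2.94$) is strictly positive.

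\textbf{Step 2 (nondegeneracy).} Next I would check that $x^*$ is a simple zero of $F(x,\theta)=h_\theta(x)-1$. Here $\partial_x F(x^*,\theta^*)=2bx^*+4c(x^*)^3\approx-8.29+5.38\neq0$. Equivalently, $\rho'(x^*)=a+3b(x^*)^2+5c(x^*)^4\approx-1.58\neq1$.

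\textbf{Step 3 (persistence).} Since $F$ is polynomial in $(x,\theta)$, the implicit function theorem gives $\delta_1>0$ and a continuous map $\tilde\theta\mapsto\tilde x^*(\tilde\theta)$ with $\tilde x^*(\theta^*)=x^*$ and $F(\tilde x^*(\tilde\theta),\tilde\theta)=0$ whenever $\|\tilde\theta-\theta^*\|_2<\delta_1$. A more explicit alternative is the quadratic formula itself, whose smaller root depends continuously on $(\tilde a,\tilde b,\tilde c)$ as long as the discriminant stays positive and $\tilde c\ne0$.

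\textbf{Step 4 (containment in $(0,1]$).} By continuity I would then shrink to $\delta\le\delta_1$ so that $|\tilde x^*-x^*|<\min(x^*,\,1-x^*)\approx0.13$. This gives $\tilde x^*\in(0,1)$. The set $\tilde{\mathcal I}=\{\tilde x^*\}$ is then compact, lies in $(0,1]$, and is mapped into itself by every iterate of $\tilde\rho$, in particular by $\tilde{\mathcal P}^{(5)}$.

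The main obstacle I expect is that this construction yields only a degenerate interval ($x_{\min}=x_{\max}$), and the fixed point is repelling ($|\rho'(x^*)|\approx1.58>1$). A non-degenerate invariant interval inside $(0,1]$ looks genuinely hard to obtain. Any interval that reaches down to roughly $0.6$ contains points mapped above $1$; for example, $\rho(0.7)\approx1.115$. To attempt a non-degenerate version, I would first search numerically for an attracting periodic orbit of $\rho$ inside $(0,1]$ whose small neighbourhood maps into itself under $\rho^{(5)}$. I would then perturb that neighbourhood by the same continuity argument, using that $\tilde\rho^{(5)}$ depends continuously on $\tilde\theta$ uniformly on compacts. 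If no such orbit exists, the singleton fixed-point set is the version of the lemma I can actually prove, and it satisfies every clause of the statement as worded.
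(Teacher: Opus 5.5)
Your argument is correct for the statement as worded, but it follows a different route from the paper's proof.

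\textbf{The paper's route.} The paper never uses a fixed point. It takes the nominal interval $\mathcal I=[0.6,1.205]$ from Lemma~\ref{lem:polynomial_dynamics} and bounds $|\tilde\rho(x)-\rho(x)|\le M\|\Delta\theta\|_2$ uniformly on $[0,1.205]$, with $M\approx3.3$. It then requires this bound to be smaller than the margin by which $\rho(\mathcal I)$ sits inside $\mathcal I$. That keeps $\mathcal I$ itself $\tilde\rho$-invariant, hence $\tilde{\mathcal P}^{(5)}$-invariant.

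This buys a nondegenerate absorbing interval, which is what the ``ballistic ejection'' gloss actually uses. But, as you observed, $\mathcal I\not\subset(0,1]$, so the containment clause is not delivered. The margin arithmetic is also off. The paper takes $\rho(0.554)\approx1.203$ as the supremum on $\mathcal I$, although $0.554\notin\mathcal I$, and then uses only the lower margin $0.081$. The true supremum is $\rho(0.6)\approx1.193$, so the upper margin is about $0.012$. The uniform bound therefore only supports a radius near $0.0035$. For instance, a perturbation with $\|\Delta\theta\|_2$ just below $0.024$ along $(0.6,0.216,0.078)$ gives $\tilde\rho(0.6)\approx1.209>1.205$.

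\textbf{Your route.} Your fixed-point and implicit-function argument satisfies every clause, including $\tilde{\mathcal I}\subset(0,1]$, with ``strictly forward-invariant'' read as in Lemma~\ref{lem:polynomial_dynamics} ($\rho(I)\subset I$). The price is that the set is dynamically inert. Since $\tilde\rho'(\tilde x^*)\approx-1.58$, it is a repelling fixed point of $\tilde{\mathcal P}^{(5)}$ with derivative about $-9.9$.

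\textbf{Your worry about nondegenerate intervals is justified.} The critical orbit $1.050\mapsto0.682\mapsto1.134\mapsto0.753\mapsto1.047$ exhibits an attracting $4$-cycle, with multiplier about $-0.15$, that has two points above $1$. Because $\rho^{(4)}$ fixes this cycle, $\tilde{\mathcal P}^{(5)}$ acts on it exactly as $\rho$ does, carrying a neighbourhood of each cycle point to a neighbourhood of the next. Both positive critical points, $0.554$ and $1.050$, fall into this cycle, so the map is hyperbolic and the cycle's basin is dense in $(0,1]$. This persists under small $\Delta\theta$. Consequently every nondegenerate interval in $(0,1]$ is eventually pushed above $1$. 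The numerical search you propose would therefore fail, and a singleton (a fixed point of $\tilde{\mathcal P}^{(5)}$) is essentially the only form in which the containment clause can hold.

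A harmless slip: $4c(x^*)^3\approx5.31$, not $5.38$; nondegeneracy is unaffected.
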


\begin{lemma}[Topological Invariance via Projected Singular Perturbation]
\label{lem:momentum_invariance}
Let the parameter trajectory strictly reside within the local neighborhood $\mathcal{B}(W^*, r_0)$ for all steps $k \le t$. Let $m_t$ be the vectorized stochastic momentum matrix. Under Assumption \ref{assum:main_topology}, the exact spatial covariance operator of $m_t$ admits the decomposition $\text{Cov}(m_t) = \beta_t^2 S^* + \mathcal{E}_{\Sigma, t}$.
While the global operator norm of the perturbation diverges structurally $\|\mathcal{E}_{\Sigma, t}\|_{op} = \mathcal{O}(r_0)$, leveraging the structural incoherence allows us to strictly bound the projected coordinate-wise deviation on the dominant subspace $U$. The effective spatial yaw is deterministically bounded by:
$$ \max_l \|e_l^\top \mathcal{E}_{\Sigma, t} U\|_2 \le \mathcal{O}\left( \beta_t^2 \frac{r_0 \mu_0}{\delta_{eff}} \right). $$
Consequently, for a macroscopic Taylor radius $r_0$, the non-stationary momentum accumulation preserves the spatial eigen-topology strictly within the low-rank subspace, legally permitting a quasi-stationary treatment of the projected dynamics.
\end{lemma}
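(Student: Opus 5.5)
The plan is to write the stochastic momentum as a deterministic low-rank part plus an exponentially weighted noise sum, and to read off both claimed bounds from that decomposition. Since the trajectory stays in $\mathcal{B}(W^*, r_0)$ for all $k \le t$, unrolling $B_t = \sum_{k\le t}\mu^{t-k}(S_k + E_k)$ gives
\[
m_t = \mathrm{vec}\Big(\sum_{k\le t}\mu^{t-k} S_k\Big) + \mathrm{vec}\Big(\sum_{k\le t}\mu^{t-k}E_k\Big).
\]
Within the ball, the local Taylor expansion about the saddle lets us write $S_k = S^\star + R_k$, where the rank-$r$ part $S^\star$ is shared by all steps and $\|R_k\|$ is controlled by the Hessian-Lipschitz remainder, hence by $\mathcal{O}(r_0)$. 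We set $\beta_t = \sum_{k\le t}\mu^{t-k} = (1-\mu^{t+1})/(1-\mu)$. Expanding the covariance, the coherent part becomes $\beta_t^2 S^*$, where $S^*$ is the covariance contribution of $\mathrm{vec}(S^\star)$. Everything else goes into $\mathcal{E}_{\Sigma,t}$: the remainder cross terms $\beta_t^2\,\mathcal{O}(r_0)$ and the noise covariance $\sum_k \mu^{2(t-k)}\mathrm{Cov}(E_k)$. The noise covariance is bounded using the operator-norm assumption $\|E_t\|_{op}\le B$ and independence across steps. This gives the global bound $\|\mathcal{E}_{\Sigma,t}\|_{op} = \mathcal{O}(r_0)$, with the noise part absorbed under the standing normalization.

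The second step is the row-wise bound. Write the top-$r$ eigenprojection of $\mathrm{Cov}(m_t)$ as a Cauchy integral of the resolvent $(z - \beta_t^2 S^* - \mathcal{E}_{\Sigma,t})^{-1}$ over a contour around the outlier eigenvalues, at distance of order $\delta_{eff}$ from the bulk. Then expand to first order in $\mathcal{E}_{\Sigma,t}$, in the spirit of Step 3 of the proof of Theorem \ref{thm:subgaussian_perturbation}. The first-order term is a product $(\ldots)\,\mathcal{E}_{\Sigma,t}\,U$, so we need to bound $e_l^\top \mathcal{E}_{\Sigma,t} U$. For the Taylor-remainder part, $\mathcal{E}_{\Sigma,t}$ factors through $U$ or $V$ on at least one side. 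Incoherence $\|U\|_{2,\infty}\le \mu_0/\sqrt m$ then turns the row norm into $\beta_t^2 r_0\mu_0$ times a $1/\sqrt{m}$ factor, and we simply drop that factor. For the noise part, we apply the sub-Gaussian isotropic local law from Theorem \ref{thm:subgaussian_perturbation} with $x=e_l$ and $y$ ranging over the columns of $U$. The $\epsilon$-net union bound over $l\le m$ costs only a $\sqrt{\log m}$ factor, which is absorbed into $\mathcal{O}(\cdot)$. Dividing by the contour distance gives the stated bound $\mathcal{O}(\beta_t^2 r_0\mu_0/\delta_{eff})$, and the quasi-stationarity conclusion follows because this bound does not grow with $t$ beyond $\beta_t^2 \le (1-\mu)^{-2}$.

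The main obstacle is the gap. We must show that the outlier eigenvalues of $\beta_t^2 S^*$ remain separated by $\delta_{eff}$ from the perturbed spectrum, even though $\|\mathcal{E}_{\Sigma,t}\|_{op}=\mathcal{O}(r_0)$ may exceed the raw gap. The first attempt will be to use the fact that the dominant part of $\mathcal{E}_{\Sigma,t}$ is orthogonal to $U$, together with the projected bound $\|U^\top E_t V\|_{op}\le L$. This moves the outliers by only $\mathcal{O}(L + r_0\mu_0^2/\sqrt{m})$, which is Weyl's inequality applied in the compressed subspace. Failing that, we will simply take $\delta_{eff}$ to be defined as the gap after this compression, which makes the bound consistent with Definition \ref{def:stopping_times}.
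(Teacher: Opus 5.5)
Your proposal does not establish the one estimate everything rests on: $\|\mathcal{E}_{\Sigma,t}\|_{op}=\mathcal{O}(\beta_t^2 r_0)$.

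In your decomposition, the constant part $\beta_t\,\mathrm{vec}(S^\star)$ only shifts the mean of $m_t$ and contributes nothing to $\mathrm{Cov}(m_t)$. Your $\beta_t^2 S^*$ is really an uncentered second-moment term. Meanwhile, the noise covariance $\sum_{k\le t}\mu^{2(t-k)}\mathrm{Cov}(\mathrm{vec}\,E_k)$, which you place inside $\mathcal{E}_{\Sigma,t}$, is of order $\max_i\sigma_i^2$ regardless of $r_0$ and survives as $r_0\to 0$. ``Absorbing it under the standing normalization'' is not an argument, and it makes the $r_0$-dependence of both bounds spurious.

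Your remainder estimate $\|R_k\|=\mathcal{O}(r_0)$ has a second problem. Assumption \ref{assum:main_topology} makes $L_{spec}=\Theta(\sqrt d)$, so a Hessian-Lipschitz remainder over a ball of radius $r_0$ is not dimension-free $\mathcal{O}(r_0)$ without further input.

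The paper splits things differently. There, $\beta_t^2 S^*$ is the stationary covariance at the saddle, bulk included. $\mathcal{E}_{\Sigma,t}$ is only the non-stationary drift of that covariance as $W_k$ moves inside $\mathcal{B}(W^*,r_0)$. Its $\mathcal{O}(\beta_t^2 r_0)$ size comes from pairing $L_{spec}=\Theta(\sqrt d)$ with the delocalized displacement $\|W_k-W^*\|_{op}\le\mathcal{O}(r_0/\sqrt d)$. That pairing, together with a decomposition in which $\mathcal{E}_{\Sigma,t}$ really is the trajectory-induced part, is what your first step is missing.

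Your second step contains further errors, but it is also unnecessary.

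\begin{itemize}
\item The claim that the remainder part of $\mathcal{E}_{\Sigma,t}$ factors through $U$ or $V$ is unsupported. It would not suffice anyway. In a cross term $\mathrm{vec}(R)\,\mathrm{vec}(S^\star)^\top$ the selector $e_l^\top$ lands on $R$, where incoherence gives nothing. The term $\mathrm{vec}(R)\,\mathrm{vec}(R)^\top$ meets the signal on neither side.
\item The isotropic local law of Theorem \ref{thm:subgaussian_perturbation} controls the random resolvent $(zI-\mathcal{E}_t)^{-1}$, not a deterministic covariance.
\item Your contour expansion and the gap ``obstacle'' concern the eigenvector deviation $\|\tilde U-UO\|_{2,\infty}$, which is not the left-hand side of the lemma.
\end{itemize}

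For the quantity actually displayed, three facts suffice:
\begin{itemize}
\item $\|e_l^\top\mathcal{E}_{\Sigma,t}U\|_2\le\|\mathcal{E}_{\Sigma,t}\|_{op}\|U\|_{op}=\|\mathcal{E}_{\Sigma,t}\|_{op}$.
\item Incoherence forces $\mu_0\ge\sqrt r\ge 1$, since $r=\|U\|_F^2\le m\|U\|_{2,\infty}^2\le\mu_0^2$.
\item $\delta_{eff}$ is the $\mathcal{P}^{(5)}$-amplified gap of Lemma \ref{lem:dynamic_subspace_alignment}, which is at most $1.205$. It is not an eigengap of $\mathrm{Cov}(m_t)$, as you treat it.
\end{itemize}
Hence the bound $\mathcal{O}(\beta_t^2 r_0\mu_0/\delta_{eff})$ follows in one line once the global bound holds. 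The paper likewise obtains the row-wise bound directly from the global one.

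You can therefore drop the Cauchy-integral machinery. You can also drop the fallback of defining $\delta_{eff}$ as whatever gap survives, which would be circular if it were actually needed. The real work lies in a correct global bound.
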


\begin{proof}[Proof of Lemma \ref{lem:momentum_invariance}]
By Assumption \ref{assum:main_topology}, the Hessian spectral Lipschitz constant diverges as $L_{spec} = \Theta(\sqrt{d})$. Applying standard limits to the delocalized trajectory $\|W_k - W^*\|_{op} \le \mathcal{O}(r_0 / \sqrt{d})$ yields a global covariance perturbation bound $\|\mathcal{E}_{\Sigma, t}\|_{op} \le \beta_t^2 \cdot \mathcal{O}(r_0)$.
Classical matrix perturbation theory terminates here, incorrectly predicting an $\mathcal{O}(1)$ macroscopic topology destruction since the spectral gap is overwhelmed. However, we bypass the restrictive global operator norm by analyzing the projected bilinear perturbation.

Following the generalized sub-Gaussian singular subspace perturbation framework established by \citet{wang2026analysis}, we evaluate the maximum coordinate-wise deviation of the perturbed subspace. Because the structural signal exhibits strict incoherence $\|U\|_{2,\infty} \le \mu_0/\sqrt{d}$, the localized tangent space perturbation algebraically suppresses the continuous bulk energy.
Applying the isotropic local law on the bilinear form $\langle e_l, \mathcal{E}_{\Sigma, t} U \rangle$, the projected error is scaled strictly by the incoherence parameter rather than the ambient dimension. This mathematically isolates the effective perturbation:
$$ \max_l \|e_l^\top \mathcal{E}_{\Sigma, t} U\|_2 \le \mathcal{O}\left( \frac{\mu_0}{\sqrt{d}} \frac{\|\mathcal{E}_{\Sigma, t}\|_{op}}{\delta_{eff}} \right) = \mathcal{O}\left( \frac{r_0 \mu_0}{\delta_{eff}} \right). $$
The dimensional divergence $\sqrt{d}$ in $L_{spec}$ is precisely neutralized by the $1/\sqrt{d}$ factor intrinsic to the structural incoherence. Therefore, the spatial eigen-topology within the structural subspace is preserved regardless of the diverging global operator norm.
\end{proof}

\begin{lemma}[Measure Concentration of the Scaling Factor]
\label{lem:denominator_concentration}
Let the stochastic noise matrix $\Xi_t \in \R^{m \times n}$ have a non-isotropic covariance structure as defined in Assumption \ref{assum:main_topology}. Suppose the global expected energy satisfies $c_2 D \le \E[\|\Xi_t\|_F^2] \le C_2 D$, where $D = m \times n$. By Lemma \ref{lem:momentum_invariance}, the Frobenius normalization geometrically decouples the momentum coefficient $\mu$. We redefine the effective Muon scaling factor to accommodate the $\Theta(D)$ energy structure as $C_t = \frac{\sqrt{D}}{\|\Xi_t\|_F}$. Then $C_t$ strictly concentrates around an $\mathcal{O}(1)$ bound.
\end{lemma}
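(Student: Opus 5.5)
The plan is to show that $\|\Xi_t\|_F^2/D$ concentrates around its conditional mean, which lies in $[c_2, C_2]$. Then $C_t = (\|\Xi_t\|_F^2/D)^{-1/2}$ sits in a fixed compact subinterval of $(0,\infty)$ with high probability. The first step is to vectorize. By the isometry used in Lemma \ref{lem:noise_properties}, $\|\Xi_t\|_F^2 = \|\xi_t\|_2^2$ with $\xi_t = \mathrm{vec}(\Xi_t) \in \R^D$. Under the sub-Gaussian model of Assumption \ref{assum:main_topology} (independent zero-mean entries with sub-Gaussian norm at most $K$, as in Theorem \ref{thm:subgaussian_perturbation}), I write $\xi_t = \Sigma_t^{1/2} g_t$. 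Here $\Sigma_t$ is the (diagonal, heteroskedastic) covariance and $g_t$ has independent unit-variance sub-Gaussian coordinates. This gives
\[
\|\Xi_t\|_F^2 = g_t^\top \Sigma_t g_t, \qquad \E\big[\|\Xi_t\|_F^2 \mid \F_{t-1}\big] = \mathrm{tr}(\Sigma_t) \in [c_2 D, C_2 D].
\]

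The second step applies the Hanson--Wright inequality \citep{rudelson2013hanson}, already invoked in Appendix \ref{sec:appendix_subgaussian_proof}, with $X = \Sigma_t$:
\[
\Prob\big(|\|\Xi_t\|_F^2 - \mathrm{tr}(\Sigma_t)| > s\big) \le 2\exp\Big[-c\min\Big(\tfrac{s^2}{K^4\|\Sigma_t\|_F^2}, \tfrac{s}{K^2\|\Sigma_t\|_{op}}\Big)\Big].
\]
I would take $s = \tfrac12 c_2 D$. If the per-coordinate variances are bounded, $\|\Sigma_t\|_{op} = \Ocal(1)$, then $\|\Sigma_t\|_F^2 \le \|\Sigma_t\|_{op}\mathrm{tr}(\Sigma_t) = \Ocal(D)$. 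Both exponents are then of order $D$, so with probability at least $1 - 2e^{-c'D}$ we get $\tfrac12 c_2 D \le \|\Xi_t\|_F^2 \le (C_2 + \tfrac12 c_2) D$. On this event,
\[
(C_2 + c_2/2)^{-1/2} \le C_t \le (c_2/2)^{-1/2},
\]
which is the claimed $\Ocal(1)$ concentration, with deviations of relative order $D^{-1/2}$ from $\mathrm{tr}(\Sigma_t)^{-1/2}\sqrt{D}$. This follows by a first-order expansion of $x\mapsto x^{-1/2}$ around $\mathrm{tr}(\Sigma_t)/D$.

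Lemma \ref{lem:momentum_invariance} enters only through the remark that Frobenius normalization removes $\mu$. If one wishes to apply the argument to the momentum $B_t = \sum_{k} \mu^{t-k} G_k$ instead of a single noise draw, I would repeat the computation with $\xi = \sum_k \mu^{t-k}\xi_k$. That vector is again a sub-Gaussian linear image of independent coordinates, with covariance $\sum_k \mu^{2(t-k)}\Sigma_k$, whose trace lies in $[c_2 D, C_2 D/(1-\mu^2)]$. The same Hanson--Wright bound applies, and the factor $(1-\mu^2)$ cancels in the normalized ratio up to constants.

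The main obstacle is the operator-norm control $\|\Sigma_t\|_{op} = \Ocal(1)$. The assumption only bounds $\|E_t\|_{op} \le B$ for the matrix, not the per-coordinate variances, and the statement allows heteroskedasticity. I would first try to deduce a coordinate variance bound from the operator-norm bound: each entry satisfies $|E_{ij}| \le \|E_t\|_{op} \le B$, so $\sigma_{ij}^2 \le B^2$ up to the $\epsilon$-probability exceptional event. Should that fail, I would fall back on the weaker requirement $\|\Sigma_t\|_F = o(D)$ and $\|\Sigma_t\|_{op} = o(D)$, which still drives the tail to zero. Under this fallback, concentration holds with vanishing rather than exponentially small failure probability.
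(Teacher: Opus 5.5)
Your proposal is correct and follows essentially the same route as the paper: you vectorize, apply the Hanson--Wright inequality to the quadratic form $\|\Xi_t\|_F^2$ at a deviation of order $D$, and invert to place $C_t$ in a fixed interval such as $[(C_2+\varepsilon)^{-1/2},(c_2-\varepsilon)^{-1/2}]$ with probability $1-2e^{-cD}$. One simplification is to apply Hanson--Wright directly to $\mathrm{vec}(\Xi_t)$ with the matrix $I_D$, which is what the paper's computation with $\|I\|_{op}=\Theta(1)$ and $\|I\|_F^2=\Theta(D)$ amounts to: a uniform sub-Gaussian norm bound $K$ on the entries already forces every coordinate variance to be $\Ocal(K^2)$, so this removes your ``main obstacle'' without appealing to $\|E_t\|_{op}\le B$, and it avoids whitening by $\Sigma_t^{1/2}$, whose normalized coordinates need not have sub-Gaussian norm at most $K$ when some variances are small.
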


\begin{proof}[Proof of Lemma \ref{lem:denominator_concentration}]
By Assumption \ref{assum:main_topology}, the noise $\Xi_t$ is heteroskedastic. The global expected energy scales as $\mathbb{E}[\|\Xi_t\|_F^2] = \Theta(D)$ due to the trace accumulation.

Let the random energy variable be the quadratic form $Z = \|\Xi_t\|_F^2 = \text{vec}(E_t)^\top \text{vec}(E_t)$. Because the noise components are zero-mean independent sub-Gaussian, projecting it yields uncorrelated but mutually dependent coordinates. Therefore, the standard Bernstein inequality for independent sums is strictly inapplicable. 

Because the noise components are zero-mean sub-Gaussian, projecting it yields uncorrelated but mutually dependent coordinates. While classical bounds assume independent entries, to accommodate potential mild local correlations inherent in pathological LLM attention topologies, we invoke the generalized Hanson-Wright Inequality for sub-Gaussian random vectors with a bounded covariance structure $\Sigma$. For any $t > 0$, the tail probability is tightly bounded by:
\[
\Prob \left( \abs[\Big]{ Z - \E[Z] } > t \right) \le 2 \exp \left[ -c \min \left( \frac{t^2}{K^4 \|\Sigma\|_F^2}, \frac{t}{K^2 \|\Sigma\|_{op}} \right) \right],
\]
where $c > 0$ is an absolute universal constant, and $K$ is the sub-Gaussian norm of the entries. By our generalized noise assumptions, the local covariance strictly satisfies $\|\Sigma\|_{op} = \Theta(1)$ and $\|\Sigma\|_F^2 = \Theta(D)$, thereby preserving the strict exponential concentration without demanding absolute mutual independence.

By the spectral definitions, the maximum eigenvalue is bounded ($\|I\|_{op} = \Theta(1)$), and the Frobenius norm squared scales linearly with the dimension ($\|I\|_F^2 = \Theta(D)$).

Setting the deviation threshold $t = \eps D$, the exponent evaluates to:
\[
-c \min \left( \frac{(\eps D)^2}{\Theta(D)}, \frac{\eps D}{\Theta(1)} \right) = -c \min \left( \Theta(\eps^2 D), \Theta(\eps D) \right) = -\Theta(\eps^2 D).
\]
Consequently, we secure the exponential concentration bound:
\[
\Prob \left( \abs[\Big]{ \|\Xi_t\|_F^2 - \E[\|\Xi_t\|_F^2] } \ge \eps D \right) \le 2\exp(-\tilde{c} D \eps^2).
\]
Conditioning on this high-probability event $\abs{Z - \E[Z]} \le \eps D$ and following identical algebraic inversions as before, we obtain deterministic bounds for the scaling factor $C_t = \frac{\sqrt{D}}{\|\Xi_t\|_F}$:
\[
\frac{1}{\sqrt{C_2 + \eps}} \le C_t \le \frac{1}{\sqrt{c_2 - \eps}}.
\]
This quadratic-form bound holds with probability at least $1 - 2\exp(-\tilde{c} D \eps^2)$, proving $C_t$ is strictly bounded by $\Ocal(1)$ constants without assuming mutual independence of the linearly combined variables.
\end{proof}

\begin{lemma}[Spectral Invariance via Functional Calculus of SVD]
\label{lem:spectral_commutativity}
Let $X_0$ be the Frobenius-normalized momentum matrix defined as $X_0 = B_t / (\|B_t\|_F + \epsilon)$. Since the Frobenius norm strictly limits the total energy ($\|X_0\|_F < 1$), all singular values of $X_0$ deterministically satisfy $\sigma_i \in [0, 1]$. 
Let the Muon algorithmic iteration map $X_k$ to $X_{k+1}$ via the defined matrix operations. The transposed mapping   retains topological integrity, and the full 5-step algorithmic sequence preserves the exact left and right singular vectors of $X_0$, functionally inducing a 3125-degree composed polynomial $\mathcal{P}^{(5)}(\sigma_i)$ strictly on the singular values without inducing cross-dimensional orthogonal contamination.
\end{lemma}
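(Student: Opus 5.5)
The plan is to prove the statement by induction on the Newton--Schulz step $k$, using the singular value decomposition $X_0 = P \,\mathrm{diag}(\sigma_1,\dots,\sigma_p)\, Q^\top$ with $P \in \R^{m\times p}$, $Q\in\R^{n\times p}$ having orthonormal columns and $p=\min(m,n)$. The spectral bound comes first. Since $\|X_0\|_F^2 = \sum_i \sigma_i^2 = \|B_t\|_F^2/(\|B_t\|_F+\epsilon)^2 < 1$, we get $\sigma_i \le \|X_0\|_{op} \le \|X_0\|_F < 1$ for every $i$, and nonnegativity holds by the definition of singular values. This gives $\sigma_i\in[0,1]$ deterministically, with no probabilistic input.

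The core step is the inductive claim that $X_k = P\,\mathrm{diag}(\rho^{(k)}(\sigma_i))\,Q^\top$, where $\rho$ is the odd polynomial of Lemma \ref{lem:polynomial_dynamics}. Assume the claim holds for $X_k$ and write $\Lambda_k = \mathrm{diag}(\rho^{(k)}(\sigma_i))$. Using $Q^\top Q = I_p$ we get $X_kX_k^\top = P\Lambda_k^2P^\top$. Using $P^\top P = I_p$ we get $(X_kX_k^\top)^2 = P\Lambda_k^4P^\top$. Therefore
\[
X_{k+1} = P\left(a\Lambda_k + b\Lambda_k^3 + c\Lambda_k^5\right)Q^\top = P\,\rho(\Lambda_k)\,Q^\top,
\]
where $\rho$ acts entrywise on the diagonal. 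Iterating five times gives $X_5 = P\,\mathrm{diag}(\rho^{(5)}(\sigma_i))\,Q^\top$. Since $\rho$ has degree $5$, the composition $\mathcal{P}^{(5)}=\rho^{(5)}$ has degree $5^5 = 3125$.

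The same computation handles the transposed mapping. Applying the iteration to $X_0^\top$, or writing it equivalently as $aX_k + bX_k(X_k^\top X_k) + cX_k(X_k^\top X_k)^2$, produces $Q\,\mathrm{diag}(\rho^{(5)}(\sigma_i))\,P^\top$, because $X_kX_k^\top X_k = X_k X_k^\top X_k$ holds trivially. This is the sense in which the transposed mapping retains its integrity.

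The statement also asserts that there is no cross-dimensional contamination, and this is where care is needed. The columns of $P$ and $Q$ are preserved exactly as vectors, and components in $\ker X_0$ and $\ker X_0^\top$ are never generated, since every term of the iterate has the form $P(\cdot)Q^\top$. Two subtleties remain. First, $\rho^{(5)}(\sigma_i)$ can be negative or nonmonotone, and distinct singular values can collide under the map. In that case the displayed factorization is still valid, but it is no longer literally an ordered SVD. To obtain a true SVD, I will absorb signs into $Q$ by replacing $q_i$ with $-q_i$, and I will note that singular subspaces are preserved only as a set up to reordering. Second, repeated singular values make $P$ and $Q$ non-unique. I will note that any valid SVD choice works, since the argument only used orthonormality.

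I will flag that the real obstacle is interpretive, not computational. The claim is exact algebra, and Lemma \ref{lem:polynomial_dynamics} controls the resulting values of $\rho^{(5)}(\sigma_i)$. The phrase "exact singular vectors" should therefore be stated as exact invariance of the pair of singular subspaces associated with each value $\sigma$, with possible sign flips. That is the form in which I will record the conclusion.
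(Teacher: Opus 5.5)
Your proposal is correct and follows the paper's proof. Both arguments substitute the SVD into $X_kX_k^\top$ and $(X_kX_k^\top)^2$, read off $X_{k+1}=U\rho(\Sigma)V^\top$, and iterate five times; your thin-SVD induction, which carries one fixed factorization throughout, is slightly cleaner than the paper's re-decomposition at each step. One small correction: the sign caveat is unnecessary, because $h(x)=a+bx^2+cx^4$ has $c>0$ and $b^2-4ac\approx -5.19<0$, so $h>0$ on all of $\mathbb{R}$ and $\rho^{(5)}(\sigma_i)\ge 0$ whenever $\sigma_i\ge 0$. Only your reordering and collision caveat, which comes from the non-monotonicity of $\rho$, is genuinely needed.
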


\begin{proof}[Proof of Lemma \ref{lem:spectral_commutativity}]
We first establish the deterministic bounds on the singular spectrum of $X_0$. By the algebraic definition of the Frobenius norm, the sum of the squared singular values is exactly equal to the squared Frobenius norm. Since $X_0 = B_t / (\|B_t\|_F + \epsilon)$, we strictly have $\|X_0\|_F^2 = \sum_i \sigma_i^2 < 1$. This structural trace sum constraint dictates that every singular value deterministically satisfies $\sigma_i \in [0, 1]$.

Next, we analyze the topological preservation of the non-linear matrix iteration. Suppose the appropriately transposed matrix $X_k \in \mathbb{R}^{m \times n}$ ($m \le n$) admits the full Singular Value Decomposition $X_k = U \Sigma V^\top$, where $U \in \mathbb{R}^{m \times m}$ and $V \in \mathbb{R}^{n \times n}$ are orthogonal, and $\Sigma \in \mathbb{R}^{m \times n}$ is the diagonal singular value matrix. 

The explicit algorithmic iteration is defined as:
\[
A_k = X_k X_k^\top, \quad M_k = b A_k + c A_k^2, \quad X_{k+1} = a X_k + M_k X_k.
\]
Substituting the SVD into the construction of $A_k$:
\[
A_k = (U \Sigma V^\top)(V \Sigma^\top U^\top) = U (\Sigma \Sigma^\top) U^\top.
\]
Let $\Sigma_{\text{sq}} = \Sigma \Sigma^\top \in \mathbb{R}^{m \times m}$ be the diagonal matrix containing the squared singular values $\sigma_i^2$. Because $U$ is orthogonal, the polynomial operations on $A_k$ evaluate strictly on the inner diagonal block:
\[
M_k = U (b \Sigma_{\text{sq}} + c \Sigma_{\text{sq}}^2) U^\top.
\]
Finally, calculating the update step $X_{k+1}$:
\begin{align*}
X_{k+1} &= a U \Sigma V^\top + \left[ U (b \Sigma_{\text{sq}} + c \Sigma_{\text{sq}}^2) U^\top \right] (U \Sigma V^\top) \\
&= U \left[ a \Sigma + (b \Sigma_{\text{sq}} + c \Sigma_{\text{sq}}^2) \Sigma \right] V^\top.
\end{align*}
Because $\Sigma$ is diagonal, the term inside the bracket simplifies element-wise to evaluating the scalar polynomial $\rho(x) = ax + bx^3 + cx^5$ on each singular value $\sigma_i$. Thus:
\[
X_{k+1} = U \rho(\Sigma) V^\top.
\]
This algebraic derivation proves that the preservation of the singular vectors $U$ and $V$ is not a loose consequence of $\rho(x)$ being an odd function, but is strictly dictated by the specific structural multiplication by matrices in the span of $(X_k X_k^\top)^p$. Applying this exact functional mapping iteratively 5 times yields the composed SVD calculus:
\[
X_5 = U \rho^{(5)}(\Sigma) V^\top = \sum_i \mathcal{P}^{(5)}(\sigma_i) u_i v_i^\top.
\]
This structural commutativity guarantees that the massive non-linear amplification ($a^5 \approx 485$) operates exclusively within the existing one-dimensional eigenspaces, precluding any mathematical rotation or generation of multidimensional orthogonal noise.
\end{proof}

\begin{lemma}[Dynamic Subspace Alignment and Incremental Angular Drift]
\label{lem:dynamic_subspace_alignment}
Let $X_0^{(t)} = B_t / \|B_t\|_F$ be the normalized momentum matrix at step $t$, with its principal singular subspace denoted by $U^{(t)}$. The cross-step injection of heteroskedastic sub-Gaussian noise $G_{t+1}$ induces a perturbed momentum matrix $B_{t+1} = \mu B_t + G_{t+1}$, yielding a rotated subspace $\tilde{U}^{(t+1)}$. 

Under Assumption \ref{assum:main_topology}, while exact algebraic commutativity no longer holds across steps, the 3125-degree composed polynomial $\mathcal{P}^{(5)}$ enforces strict dynamic alignment. By defining the optimal orthogonal alignment matrix $O_t \in \mathbb{O}^{k \times k}$ derived from the SVD of $(U^{(t)})^\top \tilde{U}^{(t+1)}$, the singular value-adjusted polynomial iteration tightly bounds the $l_{2,\infty}$ spatial yaw:
\[
\min_{O_t \in \mathbb{O}^{k \times k}} \left\| \mathcal{P}^{(5)}(\tilde{\Sigma}_{t+1}) \tilde{U}^{(t+1)} - \mathcal{P}^{(5)}(\Sigma_t) U^{(t)} O_t \right\|_{2,\infty} \le \mathcal{O}\left( \eta \cdot \frac{\mu_0}{\sqrt{d}} \frac{\|E_{t+1}\|_{op}}{\delta_{eff}} \right),
\]
where $\delta_{eff}$ is the effective spectral gap amplified by $\mathcal{P}^{(5)}$. This mathematically guarantees that the extreme non-linear amplification of the structural signal creates a centripetal pull, algebraically suppressing transverse orthogonal leakage (spatial yaw) to $\mathcal{O}(\eta^2/D)$.
\end{lemma}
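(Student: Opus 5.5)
We propose to prove Lemma \ref{lem:dynamic_subspace_alignment} in three stages. First we control the rotation of the singular subspace between consecutive steps. Then we transport this rotation through the functional calculus of $\mathcal{P}^{(5)}$. Finally we convert the per-step yaw into the $\mathcal{O}(\eta^2/D)$ leakage statement.

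\textbf{Stage 1: one-step subspace rotation.} Write the new momentum matrix as
\[
B_{t+1} = \mu B_t + G_{t+1} = (\mu B_t + S_{t+1}) + E_{t+1}.
\]
We treat $\mu B_t + S_{t+1}$ as the low-rank ``signal'' whose principal subspace is $U^{(t)}$, up to the quasi-stationary error controlled by Lemma \ref{lem:momentum_invariance}, and we treat $E_{t+1}$ as the sub-Gaussian perturbation. Applying Theorem \ref{thm:subgaussian_perturbation} with the incoherence bound $\|U^{(t)}\|_{2,\infty}\le \mu_0/\sqrt{d}$ gives
\[
\min_{O}\|\tilde U^{(t+1)}-U^{(t)}O\|_{2,\infty}\le \mathcal{C}\,\frac{\mu_0}{\sqrt d}\,\frac{\|E_{t+1}\|_{op}}{\delta_t},
\]
where $\delta_t$ is the gap of the accumulated momentum. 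The factor $\eta$ should come from the fact that between steps the parameters, and hence $S$, move by $\mathcal{O}(\eta)$. Indeed, the Frobenius normalization (Lemma \ref{lem:denominator_concentration}) makes the effective per-step injection into $X_0^{(t+1)}$ relative to $X_0^{(t)}$ of relative size $\mathcal{O}(\eta)$ once the momentum has accumulated. The optimal $O_t$ is the polar factor of $(U^{(t)})^\top\tilde U^{(t+1)}$, as in the statement.

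\textbf{Stage 2: pushing through $\mathcal{P}^{(5)}$.} By Lemma \ref{lem:spectral_commutativity}, $X_5=\sum_i \mathcal{P}^{(5)}(\sigma_i)u_iv_i^\top$. The difference $\mathcal{P}^{(5)}(\tilde\Sigma_{t+1})\tilde U^{(t+1)}-\mathcal{P}^{(5)}(\Sigma_t)U^{(t)}O_t$ splits into two terms:
\[
\mathcal{P}^{(5)}(\tilde\Sigma_{t+1})(\tilde U^{(t+1)}-U^{(t)}O_t) \;+\; \bigl(\mathcal{P}^{(5)}(\tilde\Sigma_{t+1})-\mathcal{P}^{(5)}(\Sigma_t)\bigr)U^{(t)}O_t.
\]
For the first term, Lemma \ref{lem:polynomial_dynamics} shows the signal singular values land in the invariant set $I=[0.6,1.205]$, so $\mathcal{P}^{(5)}$ is bounded by $1.205$ there; the $l_{2,\infty}$ bound of Stage 1 therefore carries over with an $\mathcal{O}(1)$ constant. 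For the second term, $\|U^{(t)}\|_{2,\infty}\le\mu_0/\sqrt d$ supplies the incoherence factor. Weyl's inequality bounds $|\tilde\sigma-\sigma|$ by the one-step perturbation, which is $\mathcal{O}(\eta\|E\|_{op})$ after normalization. The Lipschitz constant of $\mathcal{P}^{(5)}$ on $I$ is $\mathcal{O}(1)$, since $I$ is an attracting set of $\rho$ and $|\rho'|<1$ near the fixed region; this needs a short numerical check.

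The effective gap $\delta_{eff}$ enters as follows. Bulk singular values, which Lemma \ref{lem:polynomial_dynamics}, Case 1, addresses only for inputs $x\le 0.02$, are mapped monotonically but stay separated from $I$. We define $\delta_{eff}$ as the gap after the map, and we replace $\delta_t$ by $\delta_{eff}$ by rerunning the Stage 1 bound on the image matrix. Alternatively, one could use the Cauchy-integral representation over $|z|=2$ previewed in Section \ref{subsec:technical_overview}. Either route gives the displayed bound.

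\textbf{Stage 3 and the main obstacle.} To reach the $\mathcal{O}(\eta^2/D)$ leakage, we square the per-step $l_{2,\infty}$ yaw, $\eta^2\mu_0^2\|E\|_{op}^2/(d\,\delta_{eff}^2)$. We then use $\|E\|_{op}\le B$ together with an additional $1/\sqrt{D/d}$ from averaging the orthogonal component over the other matrix side, which is again incoherence, now on $V$.

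The hardest step is justifying the replacement of $\delta_t$ by the amplified $\delta_{eff}$. The polynomial is applied \emph{after} the SVD is perturbed, so amplification does not literally shrink the rotation $\tilde U-UO$. I would first try the resolvent/contour representation, with the deterministic equivalent acting as a scaled identity so that the orthogonal cross term cancels to leading order. I expect the residual $\mathcal{O}(d^{-1})$ from the local law, together with the $1/\sqrt{D/d}$ step, to be where constants and dimension factors are least secure.
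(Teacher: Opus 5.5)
\textbf{Verdict: the proposal has genuine gaps.} Your Stages 1--2 follow the paper's own sketch: the polar factor $O_t$ of $(U^{(t)})^\top\tilde U^{(t+1)}$, an $l_{2,\infty}$ bound from Theorem~\ref{thm:subgaussian_perturbation} and Lemma~\ref{lem:momentum_invariance}, then a gap $\delta_{eff}$ read off from the invariant set $I$. The paper's sketch supplies neither the $\eta$ prefactor nor the $\mathcal{O}(\eta^2/D)$ rate. It asserts $\delta_{eff}=\Theta(1)-\mathcal{O}(d^{-1})$, although even as $d\to\infty$ the bulk image is $\approx a^5\sigma_{bulk}=\Theta(d^{-1/2})$. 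It also states the leakage rate without derivation; that rate is the one claimed in Corollary~\ref{cor:eos_orthogonal_confinement}. So following the paper will not close the gaps below.

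\textbf{The factor $\eta$ has no source.} The left-hand side depends only on the normalized momenta. $B_{t+1}=\mu B_t+S_{t+1}+E_{t+1}$ receives the fresh noise with unit weight whatever $\eta$ is. After Frobenius normalization its relative size is of order $\sqrt{1-\mu^2}$ in the noise-dominated stationary regime. That is a function of $\mu$ alone, and it is close to $1$ for the small $\mu$ the paper assumes. The $\mathcal{O}(\eta)$ motion of $W_t$ only moves the drift $S$. Hence your rotation bound and your Weyl estimate ``$\mathcal{O}(\eta\|E\|_{op})$ after normalization'' are both off by $1/\eta$: with fixed noise and $\eta\to0$ the subspace still rotates by an $\eta$-independent amount each step. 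Theorem~\ref{thm:subgaussian_perturbation} also does not apply as you set it up. Conditionally on $\mathcal{F}_t$, $\mu B_t+S_{t+1}$ is deterministic but full rank, since it carries the accumulated noise. And Assumption~\ref{assum:main_topology} gives incoherence of the structural $U$, not of the principal subspace $U^{(t)}$ of the noisy momentum.

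\textbf{The substitution $\delta_t\to\delta_{eff}$ fails, for the reason you point out.} By Lemma~\ref{lem:spectral_commutativity}, $\mathcal{P}^{(5)}$ leaves singular vectors untouched. So $\tilde U^{(t+1)}-U^{(t)}O_t$ is exactly the rotation of $B_{t+1}$ and is governed by the gap of $B_{t+1}$. Rerunning Stage~1 on $\mathcal{P}^{(5)}(X_0^{(t+1)})$ is illegitimate: that matrix is not signal plus independent sub-Gaussian entries, and it has the same singular vectors anyway.

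Your premise that bulk values ``stay separated from $I$'' is also false in the relevant range:
\begin{itemize}
\item Frobenius-normalized bulk singular values are of order $d^{-1/2}$, roughly $0.01$--$0.02$ in the upper bulk at $d=8192$.
\item Newton--Schulz sends every input in $[4\times10^{-3},1]$ into $I$.
\item Case~1 of Lemma~\ref{lem:polynomial_dynamics} is proved only for $x\le0.02/a^4\approx1.4\times10^{-4}$. It is false at $x=0.02$, since $\mathcal{P}^{(5)}(0.02)\approx0.68<483\cdot0.02$.
\item Because $\mathcal{P}^{(5)}$ is not monotone, the post-map ``gap'' can be negative: $\mathcal{P}^{(5)}(0.015)\approx0.93>\mathcal{P}^{(5)}(0.5)\approx0.77$.
\end{itemize}

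The contour route does not rescue this. It carries the factor $\sup_{|z|=2}|\mathcal{P}^{(5)}(z)|\ge|\mathcal{P}^{(5)}(2i)|$. Since $|\rho(2i)|\approx110$, iterating gives a value exceeding $10^{1300}$. Pulling the contour in to tame that growth would bring back the small-$\mathrm{Im}\,z$ resolvent blow-up the global contour was meant to avoid. Separately, your Lipschitz step evaluates $\mathcal{P}^{(5)}$ at the wrong points. Its inputs lie in $[0,1]$, where its slope is $a^5\approx485$ at $0$, not in $I$.

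\textbf{Stage~3 cannot reach $\mathcal{O}(\eta^2/D)$ from the displayed bound, however the bookkeeping is arranged.} The $l_{2,\infty}$ norm is a maximum over rows, while total orthogonal leakage sums over all $m$ rows. The only passage from one to the other is $\|A\|_F\le\sqrt m\,\|A\|_{2,\infty}$. That inequality is sharp for delocalized leakage and exactly cancels the incoherence factor $\mu_0/\sqrt d$. Even granting the lemma's inequality, you get total yaw $\mathcal{O}(\eta\mu_0\|E_{t+1}\|_{op}/\delta_{eff})$ and squared leakage $\mathcal{O}(\eta^2)$: dimension-free, but with no $1/D$. Incoherence of $V$ cannot supply the missing factor, because right-multiplying by $\tilde V^\top$ is an isometry in Frobenius norm.
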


\begin{proof}[Proof Sketch of Lemma \ref{lem:dynamic_subspace_alignment}]
Unlike idealized static analyses, the optimization trajectory continuously rotates the singular basis. Let the SVD of the unperturbed signal be $U \Sigma V^\top$ and the perturbed state be $\tilde{U} \tilde{\Sigma} \tilde{V}^\top$. 

\textbf{Step 1: Introduction of the Orthogonal Alignment Matrix.}
Due to the non-uniqueness of singular vectors in high dimensions, direct distance metrics like $\|\tilde{U} - U\|_F$ are overly pessimistic. Following the generalized perturbation framework \citep{wang2026analysis}, we define the optimal rotation matrix $O_t = U_A U_B^\top$, where $U_A$ and $U_B$ are obtained via the SVD of $(U^{(t)})^\top \tilde{U}^{(t+1)} = U_A \cos\angle(U^{(t)}, \tilde{U}^{(t+1)}) U_B^\top$. This allows us to measure the true geometric deviation modulo orthogonal invariance.

\textbf{Step 2: $l_{2,\infty}$ Incoherence Bounding.}
By Assumption \ref{assum:main_topology}, the structural signal is highly incoherent, strictly satisfying $\|U^{(t)}\|_{2,\infty} \le \mu_0 / \sqrt{d}$. When the sub-Gaussian noise matrix $E_{t+1}$ is injected, classical Wedin bounds would predict a catastrophic $\mathcal{O}(\|E\|_{op}/\delta)$ decay, severely misrepresenting the high-dimensional stability.
Empowered by Lemma \ref{lem:momentum_invariance} and generalized sub-Gaussian $l_{2,\infty}$ singular subspace perturbation limits \citep{wang2026analysis}, the cross-step spatial yaw is bounded strictly by the localized projected noise rather than the worst-case global operator norm. The polynomial operator $\mathcal{P}^{(5)}$ can safely amplify the effective spectral gap $\delta_{eff}$ without being shattered by the globally divergent Lipschitz curvature, mathematically isolating the true negative curvature direction.

\textbf{Step 3: Singular Value-Adjusted Projection via $\mathcal{P}^{(5)}$.}
Crucially, the Muon optimizer does not output the raw singular vectors. It outputs the polynomial-mapped matrix $X_5 = \tilde{U} \mathcal{P}^{(5)}(\tilde{\Sigma}) \tilde{V}^\top$. Because $\mathcal{P}^{(5)}(x) \approx a^5 x$ for small singular values but projects macroscopic spikes into a stable invariant set $\mathcal{I} = [0.6, 1.205]$, the operator stretches the effective spectral gap: $\delta_{eff} = \mathcal{P}^{(5)}(\sigma_1) - \mathcal{P}^{(5)}(\sigma_{bulk}) = \Theta(1) - \mathcal{O}(d^{-1})$.

Substituting this amplified diagonal matrix $\mathcal{P}^{(5)}(\Sigma)$ into the weighted $l_{2,\infty}$ bound strictly suppresses the continuous bulk energy. The massive linear core of the polynomial ($a^5 \approx 485$) acts as a non-linear centripetal attractor, ensuring that the fractional energy injected into the orthogonal complement by $G_{t+1}$ is instantly compressed in the subsequent forward pass. Thus, the incremental spatial yaw is strictly bounded, proving robust dynamic alignment.
\end{proof}

\begin{lemma}[Variance Injection via Monotonic Amplification and Exact Martingale Moments]
\label{lem:variance_injection}
Suppose Assumption \ref{assum:main_topology} holds under the realistic $\Theta(D)$ global energy scaling. Let the explicit update step produced by the Kimi Muon optimizer be $\Delta W_t = -0.2\eta \sqrt{d} \mathcal{P}^{(5)}(X_0)$. In the noise-dominated optimization phase prior to the subspace locking transition, the momentum admits the decomposition $B_t = M_t + D_t$, where $M_t$ is the stochastic noise bulk and $D_t$ is the deterministic drift. The kinetic variance injected strictly along the negative curvature direction $E$ completely neutralizes first-order orthogonal leakage, securing the bound:
\[
\E \left[ \langle \Delta W_t, E \rangle^2 \mid \F_{t-1} \right] \ge \Omega(\eta^2 / d).
\]
\end{lemma}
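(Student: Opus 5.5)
The plan is to reduce the bound to a scalar statement about the singular-value map and the projected noise. By Lemma~\ref{lem:spectral_commutativity}, write $X_0 = \sum_i \sigma_i u_i v_i^\top$ with $\sigma_i\in[0,1]$, so that $\Delta W_t = -0.2\eta\sqrt{d}\sum_i \mathcal{P}^{(5)}(\sigma_i)\, u_i v_i^\top$ keeps the singular vectors of $B_t$. Then
\[
\langle \Delta W_t, E\rangle = -0.2\eta\sqrt{d}\sum_i \mathcal{P}^{(5)}(\sigma_i)\, u_i^\top E v_i .
\]
In the pre-locking regime $t<\tau_{\text{lock}}$ the drift $D_t$ is dominated by the noise bulk $M_t$. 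I would work on the event of Lemma~\ref{lem:denominator_concentration}, where $\|B_t\|_F=\Theta(\sqrt{D})$ with probability $1-2\exp(-\tilde c D\eps^2)$. On this event a typical singular value of $X_0$ has size $\Theta(1/\sqrt{d})$. This puts it either in the linear amplification window of Lemma~\ref{lem:polynomial_dynamics}(1), where $\mathcal{P}^{(5)}(x)\ge 483x$, or in the window $[10^{-4},0.6]$ of part (2), where $\mathcal{P}^{(5)}(x)>0.03$.

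The key step is a monotone comparison with the linear functional $\langle X_0, E\rangle$. Split the spectrum into a small-singular-value block $\{\sigma_i\le 0.02\}$ and a complementary block. On the first block, $\mathcal{P}^{(5)}(\sigma_i)=g(\sigma_i)\sigma_i$ with $g\in[483,a^5]$ a bounded, nearly constant multiplier, so this block contributes $\Theta(1)\cdot\langle X_0,E\rangle_{\text{small}}$ up to a relative error $O(\sigma^2)$. On the second block, the number of indices is $O(1)$ because $\sum_i\sigma_i^2\le 1$, and the factors $\mathcal{P}^{(5)}(\sigma_i)$ are bounded in $[0.03,1.205]$. I would then take the conditional second moment. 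The projection $\langle M_t,E\rangle$ is a geometric sum of the innovations $\langle \Xi_s,E\rangle$, and Lemma~\ref{lem:noise_properties}(2) gives $\E[\langle \Xi_s,E\rangle^2\mid\F_{s-1}]=c_1$. Hence $\E[\langle B_t,E\rangle^2\mid\F_{t-1}]\ge c_1$, and dividing by $\|B_t\|_F^2=\Theta(D)$ gives $\E[\langle X_0,E\rangle^2\mid\F_{t-1}]=\Omega(1/D)$. Multiplying by $(0.2\eta\sqrt d)^2\cdot 483^2$ yields $\Omega(\eta^2 d/D)$. I would then use the rank-one, incoherent structure of $E$, via Lemma~\ref{lem:momentum_invariance}, to upgrade the effective variance of $u_i^\top E v_i$ on the active block. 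The goal is the stated $\Omega(\eta^2/d)$, as the shape of the claimed bound suggests.

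The cross terms, which is where the claimed ``neutralization of first-order orthogonal leakage'' enters, I would handle by the martingale structure. The drift $D_t$ is $\F_{t-1}$-measurable and, in the pre-locking phase, $\|D_t\|_{op}<\mathcal{C}\|E_t\|_{op}\mu_0/\sqrt m$. Therefore its contribution to the second moment can be lower-bounded by dropping a nonnegative square after completing the square, $\E[(x+Y)^2\mid\F]\ge \operatorname{Var}(Y\mid\F)$. This removes drift-noise interference without needing its sign.

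\textbf{Main obstacle.} The hard step is the nonlinear comparison. $\mathcal{P}^{(5)}$ is applied to singular values of the random matrix $B_t$, so the map $B_t\mapsto\langle \mathcal{P}^{(5)}(X_0),E\rangle$ is not linear. The second moment of a nonlinear spectral functional is not directly controlled by $c_1$. I would first try to bound $\operatorname{Var}$ from below through a conditional-on-the-orthogonal-complement argument. Condition on the components of the innovation orthogonal to $E$; the remaining scalar coordinate $\langle\Xi_t,E\rangle$ then moves $\langle\mathcal{P}^{(5)}(X_0),E\rangle$ monotonically, with derivative at least $\min\mathcal{P}^{(5)\prime}$ on the relevant range, which is $\ge 0.03$-scale away from the local extremum at $1.050$. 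This derivative lower bound combined with the scalar variance $c_1$ gives the claim. The risk is a singular value sitting near the extremum of $\rho$, where the derivative vanishes. I would exclude it using the invariant-set structure of Lemma~\ref{lem:polynomial_dynamics}(3) and the fact that only $O(1)$ singular values can be macroscopic.
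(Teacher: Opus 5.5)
\textbf{The gap is the transfer from $\langle X_0,E\rangle$ to $\langle\mathcal{P}^{(5)}(X_0),E\rangle$.} Your martingale step for the linear functional is essentially sound. Since $\mu\langle B_{t-1},E\rangle+\langle S_t,E\rangle$ is $\F_{t-1}$-measurable, $\E[\langle B_t,E\rangle^2\mid\F_{t-1}]\ge c_1$. Truncating on the event $\|B_t\|_F^2=\Theta(D)$ then gives $\E[\langle X_0,E\rangle^2\mid\F_{t-1}]=\Omega(1/D)$, more cleanly than the paper's expansion of $1/\|D_t+M_t\|_F^2$. But multiplying by $483^2$ presupposes exactly the comparison you call the main obstacle, and neither of your devices delivers it.

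The small-block linearization is not available where you use it. All iterates stay below $\rho(0.554)\approx1.203$, so $\mathcal{P}^{(5)}(x)<483x$ for every $x>2.5\times10^{-3}$; the proof of Lemma~\ref{lem:polynomial_dynamics}(1) only covers $x\le 0.02/a^4\approx1.4\times10^{-4}$. Even in the genuinely linear range, termwise relative accuracy of $g(\sigma_i)$ does not control $\sum_i g(\sigma_i)\sigma_i c_i$ (with $c_i=u_i^\top E v_i$) against $\bar g\langle X_0,E\rangle$. The sum $\langle X_0,E\rangle$ is heavily cancelling and typically of size $D^{-1/2}$. A relative error $\epsilon$ costs $\epsilon\sum_i\sigma_i|c_i|$, which you can only bound by $\epsilon$ unless you add a delocalization estimate for the singular vectors of $B_t$.

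The conditional-monotonicity fallback fails twice. First, $\mathcal{P}^{(5)}=\rho^{(5)}$ is not monotone. Its derivative $(\rho^{(5)})'(x)=\prod_{k=0}^{4}\rho'(\rho^{(k)}(x))$ vanishes at every preimage of $0.554$ and $1.050$, e.g.\ at $x\approx 0.0041,\,0.0093,\,0.0141$. These points sit inside the noise bulk $\sigma\lesssim 1/\sqrt{d}$ for practical $d$, not at a few macroscopic singular values. Moreover, the derivative of $s\mapsto\langle\mathcal{P}^{(5)}(X_0(s)),E\rangle$ (including the normalization term $-X_0\langle X_0,E\rangle$) is a sum over all pairs $(i,j)$ of divided differences $(\mathcal{P}^{(5)}(\sigma_i)\mp\mathcal{P}^{(5)}(\sigma_j))/(\sigma_i\mp\sigma_j)$. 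These are weighted by the components of the delocalized $E$ in the basis $\{u_iv_j^\top\}$, so excluding one singular value near an extremum does not help. Lemma~\ref{lem:polynomial_dynamics}(2) also bounds values ($>0.03$), not derivatives.

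Second, conditioning on the part of $\Xi_t$ orthogonal to $E$ leaves $\langle\Xi_t,E\rangle$ with positive conditional variance only under Gaussian-type structure. For independent Rademacher entries and a generic delocalized $E$, the orthogonal part determines $\langle\Xi_t,E\rangle$ exactly. So Lemma~\ref{lem:noise_properties}(2), which gives a variance given $\F_{t-1}$ only, cannot be inserted there.

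The paper avoids derivatives altogether. It only needs a linear lower envelope $\mathcal{P}^{(5)}(\sigma)\ge\gamma\sigma$ on $(0,1]$. This holds, e.g., with $\gamma=(a-b^2/(4c))^5\approx 0.1$, since $\rho(y)/y=a+by^2+cy^4\ge a-b^2/(4c)>0$. The paper then writes $\mathcal{P}^{(5)}(X_0)=\gamma X_0+\mathcal{S}(X_0)$, where $\mathcal{S}(X_0)$ shares the singular vectors and has nonnegative singular values. It expands the square and drops $\langle\mathcal{S}(X_0),E\rangle^2\ge 0$. What remains is the single signed cross term $\E[\langle X_0,E\rangle\langle\mathcal{S}(X_0),E\rangle\mid\F_{t-1}]$, which it bounds below by perturbing around the pure-noise state $M_t/\|M_t\|_F$. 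Its bound is of the same order $1/D$ as the main term, so constants matter there too. This reduction to one cross term, instead of a monotone comparison, is the idea your plan lacks.

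Finally, the ``upgrade'' via Lemma~\ref{lem:momentum_invariance} is unnecessary, and that lemma could not supply it: it gives upper bounds on coordinatewise yaw, not lower bounds on projected variance. The paper closes with $d/D=1/\min(m,n)\ge 1/d$ for $d=\max(m,n)$, the same $d$ as in the scaling $0.2\eta\sqrt{d}$. So $\Omega(\eta^2 d/D)$ is already $\Omega(\eta^2/d)$.
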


\begin{proof}[Proof of Lemma \ref{lem:variance_injection}]
\textbf{Step 1: Algebraic Decomposition of the Composed Iteration Operator.}
As proven in Lemma \ref{lem:polynomial_dynamics}, the composed iteration operator $\mathcal{P}^{(5)}(x)$ is strictly monotonically increasing on $(0, 1]$. Because the Frobenius normalization strictly bounds all singular values $\sigma_i \in [0, 1]$, there exists a universal constant $\gamma > 0$ such that $\mathcal{P}^{(5)}(X_0) = \gamma X_0 + \mathcal{S}(X_0)$, where the non-linear residual operator $\mathcal{S}(X_0)$ acts on the identical singular vectors and possesses non-negative singular values.
Squaring the projection of the update step $\langle \Delta W_t, E \rangle = -0.2\eta \sqrt{d} \langle \mathcal{P}^{(5)}(X_0), E \rangle$ yields the global scaling constant $0.04 \eta^2 d$ multiplied by:
\begin{equation}
\label{eq:squared_proj}
\gamma^2 \langle X_0, E \rangle^2 + 2\gamma \underbrace{ \langle X_0, E \rangle \langle \mathcal{S}(X_0), E \rangle }_{\mathcal{C}(X_0)} + \langle \mathcal{S}(X_0), E \rangle^2.
\end{equation}
We explicitly define the scalar cross-term functional as $\mathcal{C}(X) = \langle X, E \rangle \langle \mathcal{S}(X), E \rangle$.

\textbf{Step 2: Second-Order Perturbative Elimination of Cross-Terms.}
The deterministic drift $D_t$ fundamentally breaks the global right-orthogonal symmetry of the noise bulk $M_t$, preventing a direct global zeroing of the cross-terms. To bound the variance without incurring a catastrophic $\mathcal{O}(1)$ dimensional leakage, we perform a functional Taylor expansion around the pure noise state $X_{\text{noise}} = M_t / \|M_t\|_F$.

We analyze the dimensional perturbation $\Delta X = X_0 - X_{\text{noise}}$. For $D \gg 1$, expanding the normalization operator to first order yields the tangent-space projection:
$$
\Delta X \approx \mathcal{L}_D(M_t) \equiv \frac{1}{\|M_t\|_F} \left( D_t - X_{\text{noise}} \langle X_{\text{noise}}, D_t \rangle \right).
$$
Expanding the cross-term functional around $X_{\text{noise}}$ gives:
$$
\mathcal{C}(X_0) = \mathcal{C}(X_{\text{noise}}) + \langle \nabla \mathcal{C}(X_{\text{noise}}), \mathcal{L}_D(M_t) \rangle + \mathcal{O}(\|\Delta X\|_F^2).
$$
Because $M_t$ is constructed from generalized sub-Gaussian noise with severe spatial variance (Assumption \ref{assum:main_topology}), it may exhibit skewness, meaning the first-order leakage $\mathbb{E} [\langle \nabla \mathcal{C}(X_{\text{noise}}), \mathcal{L}_D(M_t) \rangle]$ does not identically vanish. However, this skewness-induced residual is strictly bounded by the structural incoherence $\mu_0$ and the local Hessian bounds, allowing it to be absorbed into the higher-order error terms. 
The residual perturbation is governed entirely by the second-order remainder. Since $\|\Delta X\|_F \le \mathcal{O}(\|D_t\|_F / \sqrt{D})$, the negative expectation bound evaluates to:
$$
\E [\mathcal{C}(X_0) \mid \F_{t-1}] \ge -\mathcal{O}\left(\frac{1}{D}\right) - \mathcal{O}(\E[\|\Delta X\|_F^2]) \ge - \mathcal{O} \left( \frac{1 + \|D_t\|_F^2}{D} \right).
$$

\textbf{Step 3: Exact Fractional Formulation and Dimensional Folding of the Denominator.}
For the principal signal term, we analyze the normalized squared projection using the precise unnormalized momentum $B_t = D_t + M_t$. The exact fraction evaluates to:
\begin{equation}
\label{eq:strict_fraction}
\E \left[ \langle X_0, E \rangle^2 \mid \F_{t-1} \right] = \E \left[ \frac{\langle M_t, E \rangle^2 + 2\langle D_t, E \rangle \langle M_t, E \rangle}{\|D_t + M_t\|_F^2} \ \middle| \ \F_{t-1} \right] + \E \left[ \frac{\langle D_t, E \rangle^2}{\|B_t\|_F^2} \ \middle| \ \F_{t-1} \right].
\end{equation}
Bounding the denominator of the signed cross-term prematurely destroys the exact expectation dynamics. The true denominator expands algebraically as $\|D_t + M_t\|_F^2 = \|M_t\|_F^2 + 2\langle D_t, M_t \rangle + \|D_t\|_F^2$. To decouple the stochastic numerator from the asymmetric denominator without violating expectation boundaries, we perform a Taylor expansion of the inverse squared norm strictly around the pure state $\|M_t\|_F^2$. 

Crucially, this expansion remains globally stable even as the trajectory approaches the subspace locking transition threshold ($t \to \tau_{\text{lock}}$). At the critical transition, the macroscopic spectral gap is triggered by the operator norm, strictly requiring $\|D_t\|_{\text{op}} = \Theta(1)$. However, because the structural signal possesses a low effective rank $r \ll d$, its Frobenius norm is deterministically bounded by $\|D_t\|_F \le \sqrt{r} \|D_t\|_{\text{op}} = \Theta(\sqrt{r})$. Concurrently, the stochastic noise bulk satisfies $\|M_t\|_F = \Theta(\sqrt{D}) = \Theta(d)$. Thus, the global energy ratio structurally collapses: $\|D_t\|_F / \|M_t\|_F = \Theta(\sqrt{r}/d) \ll 1$. This topological dimensional folding mathematically legalizes the truncation, allowing the higher-order error terms to be safely absorbed:
\[
\frac{1}{\|D_t + M_t\|_F^2} = \frac{1}{\|M_t\|_F^2} \left( 1 - \frac{2\langle D_t, M_t \rangle}{\|M_t\|_F^2} + \mathcal{O}\left(\frac{\|D_t\|_F^2}{\|M_t\|_F^2}\right) \right).
\]

\textbf{Step 4: Parity Decoupling and Bounding the Asymmetry Residual.}
We isolate the problematic cross-term $Y_{\text{true}}(M_t)$ and inject the Taylor expansion:
\[
Y_{\text{true}}(M_t) = \underbrace{ \frac{2\langle D_t, E \rangle \langle M_t, E \rangle}{\|M_t\|_F^2} }_{Y_1(M_t)} - \underbrace{ \frac{4\langle D_t, E \rangle \langle M_t, E \rangle \langle D_t, M_t \rangle}{\|M_t\|_F^4} }_{Y_2(M_t)} + \dots
\]
To control integrability, we introduce the truncation indicator $\ind_{\mathcal{G}}$ for the high-probability event $\|M_t\|_F \in [\sqrt{c_2 D}, \sqrt{C_2 D}]$. We deterministically bound this non-vanishing asymmetry residual by applying the Cauchy-Schwarz inequality ($\abs{\langle D_t, M_t \rangle} \le \|D_t\|_F \|M_t\|_F$):
\[
\abs[\Big]{\E \left[ Y_2(M_t) \ind_{\mathcal{G}} \mid \F_{t-1} \right]} \le \E \left[ \frac{4 \|D_t\|_F^2 \|M_t\|_F^2}{\|M_t\|_F^4} \ind_{\mathcal{G}} \right] \le \mathcal{O} \left( \frac{\|D_t\|_F^2}{D} \right).
\]
Substituting the exact dimensional bound $\|D_t\|_F^2 = \Theta(r)$, the second-order Taylor residual strictly scales as $\mathcal{O}(r/d^2)$, which structurally vanishes faster than the principal signal $\Omega(1/d)$ in the ultra-high dimensional limit. 

Simultaneously, for the strictly positive pure signal component $\langle M_t, E \rangle^2 / \|D_t + M_t\|_F^2$, applying the reverse triangle inequality to the denominator yields a strict lower bound of $\Omega(1/D)$. Merging these components provides the exact geometric expectation:
\[
\E \left[ \langle X_0, E \rangle^2 \ind_{\mathcal{G}} \mid \F_{t-1} \right] \ge \Omega \left( \frac{1}{D} \right) - \mathcal{O} \left( \frac{r}{D} \right).
\]

\textbf{Step 5: Final Assembly and Dimensional Resilience.}
Dropping the non-negative residual $\langle \mathcal{S}(X_0), E \rangle^2$ from Equation \ref{eq:squared_proj} and assembling the computed expectations multiplied by the global scaling constant $0.04 \eta^2 d$ yields:
\[
\E \left[ \langle \Delta W_t, E \rangle^2 \mid \F_{t-1} \right] \ge \underbrace{ \Omega \left( \frac{\eta^2 d}{D} \right) }_{\text{Principal Signal}} - \underbrace{ \mathcal{O} \left( \frac{\eta^2 d \|D_t\|_F^2}{D} \right) }_{\text{Second-Order Leakage}}.
\]
Because $d = \max(m,n)$ and $D = m \times n$, the ratio $d/D$ evaluates precisely to $1/\min(m,n)$, which is analytically bounded below by $1/d$. Substituting the topological rank bound $\|D_t\|_F^2 = \Theta(r)$ established via dimensional folding in Step 3, the variance inequality algebraically simplifies to:
\[
\E \left[ \langle \Delta W_t, E \rangle^2 \mid \F_{t-1} \right] \ge \Omega \left( \frac{\eta^2}{d} \right) - \mathcal{O} \left( \frac{\eta^2 r}{d^2} \right).
\]
Crucially, by Assumption \ref{assum:main_topology}, the effective rank $r$ of the structural signal is highly localized. In the context of large language models, this structural rank operates at a macroscopic scale but remains fundamentally independent of the massive ambient dimension ($r \ll d$). 

As the system scales into the ultra-high dimensional regime ($d \to \infty$), the second-order orthogonal leakage term $\mathcal{O}(\eta^2 r / d^2)$ undergoes a strict dimensional attenuation, decaying at a significantly faster quadratic rate compared to the $\Omega(\eta^2 / d)$ principal signal. Therefore, even at the absolute brink of the macroscopic phase transition ($t \to \tau_{\text{lock}}$) where the deterministic drift reaches peak incubation energy, the negative leakage is mathematically structurally overpowered. This cements the uncontaminated variance injection bound:
\[
\E \left[ \langle \Delta W_t, E \rangle^2 \mid \F_{t-1} \right] = \Omega(\eta^2 / d).
\]
\end{proof}

\begin{lemma}[Dimension-Free Singular Subspace Stability under Random Perturbation]
\label{lem:random_perturbation_stability}
Let the stochastic gradient matrix be decomposed as $G_t = S + E$, where $S = \lambda c_0 u_1 v_1^\top$ is the rank-1 structural signal and $E \in \mathbb{R}^{m \times n}$ is the generalized sub-Gaussian random noise bulk defined in Assumption \ref{assum:main_topology}. Let $\hat{u}_1$ be the principal singular vector of the perturbed matrix $G_t$.

While the classical Wedin's bound yields $\sin \angle(u_1, \hat{u}_1) \le \mathcal{O}(\|E\|_{op} / \lambda) = \mathcal{O}(\sqrt{d} / \lambda)$, applying the generalized random perturbation bounds \citep{wang2026analysis} ensures that the subspace deviation is strictly dominated by the localized noise projection. Consequently, the principal angle is deterministically constrained by:
\[
\sin \angle(u_1, \hat{u}_1) \le \mathcal{O}\left( \frac{\|E v_1\|_2}{\lambda} \right) = \mathcal{O}\left( \frac{\|E_t\|_{op} \mu_0}{\lambda \sqrt{m}} \right).
\]
This mathematically guarantees that the perturbed singular subspace retains an $\mathcal{O}(1)$ projection along the true negative curvature direction, entirely averting the $\mathcal{O}(\sqrt{d})$ dimensional decay.
\end{lemma}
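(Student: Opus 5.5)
The plan is to specialize the resolvent machinery behind Theorem~\ref{thm:subgaussian_perturbation} to the rank-one case $r=1$, with spectral gap $\delta=\lambda c_0$, and to extract a direct $\ell_2$ angle bound rather than the $\ell_{2,\infty}$ bound. The naive route, multiplying the $\ell_{2,\infty}$ bound by $\sqrt m$, only recovers $\|E_t\|_{op}\mu_0/\lambda$. I will therefore work with the eigen-equation of the Hermitian dilation
\[
\mathcal{H}=\begin{pmatrix}0 & G_t\\ G_t^\top & 0\end{pmatrix}=\mathcal{A}+\mathcal{E}_t,
\]
where the top eigenvalue $\hat\sigma_1$ carries the eigenvector $(\hat u_1,\hat v_1)/\sqrt2$. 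Writing $\hat u_1=\alpha u_1+\beta w$ with $w\perp u_1$, we have $\sin\angle(u_1,\hat u_1)=|\beta|$, and the task reduces to bounding the component of $\hat u_1$ orthogonal to $u_1$.

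\textbf{Step 1 (eigenvalue localization).} Weyl's inequality and Step~2 of the proof of Theorem~\ref{thm:subgaussian_perturbation} place $\hat\sigma_1$ in a window around $\lambda c_0$. In the regime where the lemma is nontrivial ($\lambda c_0\gtrsim\|E_t\|_{op}$, i.e.\ after $\tau_{\text{lock}}$), this gives $\hat\sigma_1\ge\lambda c_0/2$ and $\hat\sigma_1-\|E_t\|_{op}\gtrsim\lambda$. Consequently the resolvent $G(\hat\sigma_1)=(\hat\sigma_1 I-\mathcal{E}_t)^{-1}$ exists and satisfies $\|G(\hat\sigma_1)\|_{op}\lesssim 1/\lambda$.

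\textbf{Step 2 (first-order representation).} From $\mathcal{H}\hat x=\hat\sigma_1\hat x$ we get the exact identity $\hat x=G(\hat\sigma_1)\mathcal{A}\hat x$, the same decomposition used in Step~3 of Theorem~\ref{thm:subgaussian_perturbation}. Since $\mathcal{A}$ has rank two with range spanned by $(u_1,0)$ and $(0,v_1)$, $\hat x$ is $G(\hat\sigma_1)$ applied to a vector in that span, with $O(\lambda)$ coefficients. Expanding $G(\hat\sigma_1)=\hat\sigma_1^{-1}I+\hat\sigma_1^{-2}\mathcal{E}_t+\hat\sigma_1^{-2}\mathcal{E}_tG(\hat\sigma_1)\mathcal{E}_t$ and projecting onto $u_1^\perp$ in the upper block, the identity term drops out. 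The first-order term is $\hat\sigma_1^{-1}P_{u_1}^\perp E_tv_1$ times an $O(1)$ coefficient, which gives $|\beta|\lesssim\|E_tv_1\|_2/\lambda+R$, where $R$ collects the second-order contributions.

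\textbf{Step 3 (main obstacle: remainder and incoherent projection).} Bounding $R$ crudely by $\|E_t\|_{op}^2/\lambda^2$ reintroduces the Wedin scaling, so I will not do that. Instead, I will control the bilinear forms $x^\top(G(\hat\sigma_1)-\Phi(\hat\sigma_1))y$ with $x,y$ in the span of $(u_1,0),(0,v_1)$ and the coordinate directions via the sub-Gaussian isotropic local law from Step~1 of Theorem~\ref{thm:subgaussian_perturbation}. Because $\Phi$ is diagonal, and hence a scalar multiple of the identity on each block, it contributes nothing orthogonal to $u_1$. Only the local-law error survives, and the incoherence $\|V\|_{2,\infty}\le\mu_0/\sqrt n$ enters when this error is summed over coordinates. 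For the leading term I will use Hanson--Wright with the deterministic unit vector $v_1$ to show that $\|E_tv_1\|_2$ concentrates at its mean. This is the step I regard as least secure: converting that mean into the claimed scale $\|E_t\|_{op}\mu_0/\sqrt m$ requires the incoherence to suppress the row sums, and I will attempt it through the $\ell_{2,\infty}$ estimate of Theorem~\ref{thm:subgaussian_perturbation} applied row by row.

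\textbf{Step 4 (conclusion).} Combining Steps~2 and~3 gives $\sin\angle(u_1,\hat u_1)\le\mathcal{O}(\|E_tv_1\|_2/\lambda)=\mathcal{O}\bigl(\|E_t\|_{op}\mu_0/(\lambda\sqrt m)\bigr)$ on the high-probability event. Then $|\langle u_1,\hat u_1\rangle|=\sqrt{1-\sin^2\angle(u_1,\hat u_1)}=\Omega(1)$, which is the claimed $\mathcal{O}(1)$ retained projection along the negative-curvature direction.
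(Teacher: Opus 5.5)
Your Steps 1 and 2 are fine in the regime you assume ($\lambda c_0\ge C\|E_t\|_{op}$). From $\hat x=G(\hat\sigma_1)\mathcal{A}\hat x$ and the second-order resolvent expansion, the upper block gives $P_{u_1}^\perp\hat u_1=\hat\sigma_1^{-2}\lambda c_0\langle u_1,\hat u_1\rangle P_{u_1}^\perp E_tv_1+r$, with $\|r\|_2\lesssim\|E_t\|_{op}\max(\|E_tv_1\|_2,\|E_t^\top u_1\|_2)/\lambda^2$. Up to $r$, this is the lemma's first inequality.

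The genuine gap is Step 3, and it cannot be closed, because the second equality $\|E_tv_1\|_2=\mathcal{O}(\|E_t\|_{op}\mu_0/\sqrt m)$ is false. Hanson--Wright does concentrate $\|E_tv_1\|_2^2$ at its mean, but the mean is the problem. Row $i$ contributes $\sum_j\operatorname{Var}(E_{ij})v_{1j}^2$, which for entries of common variance $\sigma^2$ equals $\sigma^2\|v_1\|_2^2=\sigma^2$ whatever $\mu_0$ is. Only the $\ell_2$ norm of $v_1$ enters, so incoherence, an $\ell_\infty$-type condition, has no leverage. Hence $\|E_tv_1\|_2\approx\sigma\sqrt m$, while $\|E_t\|_{op}\mu_0/\sqrt m\approx\sigma\mu_0(1+\sqrt{n/m})$, a mismatch of order $\sqrt m/\mu_0$ when $m\asymp n$. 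Applying the $\ell_{2,\infty}$ estimate of Theorem~\ref{thm:subgaussian_perturbation} row by row does not help. It bounds each of the $m$ rows by $\|E_t\|_{op}\mu_0/(\lambda\sqrt m)$, and in $\ell_2$ these add up to exactly the $\|E_t\|_{op}\mu_0/\lambda$ you already called the naive outcome. In the i.i.d.\ case every coordinate of $E_tv_1$ is a fresh $\mathcal{N}(0,\sigma^2)$ variable, so every row actually attains that order.

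Your own expansion is two-sided. When $\lambda\gg\|E_t\|_{op}$, $r$ is lower order and $|\langle u_1,\hat u_1\rangle|$ is close to $1$, so $\sin\angle(u_1,\hat u_1)\asymp\|P_{u_1}^\perp E_tv_1\|_2/\lambda$. The lemma is therefore equivalent to the false concentration claim, and is itself false. Concretely, take:
\begin{itemize}
\item $m=n=d$, with $E_t$ having i.i.d.\ $\mathcal{N}(0,1)$ entries, so $\|E_t\|_{op}\approx2\sqrt d$, the scaling the statement uses;
\item $u_1=v_1=d^{-1/2}\mathbf{1}$, so $\mu_0=1$;
\item $c_0=1$ and $\lambda=C\sqrt d$ with $C\ge4$.
\end{itemize}
Then $\sin^2\angle(u_1,\hat u_1)\to C^{-2}$ almost surely by the rectangular BBP formula of Benaych-Georges and Nadakuditi. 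This agrees with your leading term $\|P^\perp_{u_1}E_tv_1\|_2/\lambda\approx C^{-1}$. The claimed bound, by contrast, is $\approx 2/(C\sqrt d)\to0$. So Wedin's rate is attained, and incoherence improves only the $\ell_{2,\infty}$ error, which is all Theorem~\ref{thm:subgaussian_perturbation} asserts.

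The paper gives no proof of this lemma; it is stated and then invoked in Lemma~\ref{lem:topological_amplification}. Its displayed rate is that $\ell_{2,\infty}$ rate relabelled as an angle bound.

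What you can legitimately prove is the first inequality (with the remainder $r$) and the $\Omega(1)$ overlap when $\lambda c_0\ge C\|E_t\|_{op}$, which Wedin already gives. In the submerged-gap regime $\lambda\ll\|E_t\|_{op}$ that the lemma is meant to cover, your Step 1 fails. The bound $\|G(\hat\sigma_1)\|_{op}\lesssim1/\lambda$ is unavailable, there may be no isolated outlier, and below the BBP threshold $\hat u_1$ becomes asymptotically orthogonal to $u_1$.

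A smaller point: under the heteroskedastic noise of Assumption~\ref{assum:main_topology}, $\Phi$ is diagonal but not a scalar on each block. Your claim that $\Phi$ contributes nothing orthogonal to $u_1$ therefore also requires a constant variance profile.
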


\begin{lemma}[Non-Linear Topological Amplification and Dimensional Resilience]
\label{lem:topological_amplification}
Suppose Assumption \ref{assum:main_topology} holds under the realistic $\Theta(D)$ global energy scaling. Let the current state $W_t$ be within the saddle point neighborhood. The Kimi Muon optimizer exhibits strict topological invariance and high-dimensional resilience. The following properties hold:
(1) \textbf{Absolute Scale Invariance:} $\sigma_{\max}(X_0) \le 1$.
(2) \textbf{Structural Preservation without Linear Degeneration.}
(3) \textbf{Uncontaminated Variance Injection:} $\mathbb{E} \left[ \langle \Delta W_t, E \rangle^2 \mid \mathcal{F}_{t-1} \right] \ge \Omega(\eta^2 / d)$.
\end{lemma}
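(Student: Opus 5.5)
The statement just above is Lemma~\ref{lem:topological_amplification}. It packages three properties, and I will assemble it from the supporting lemmas already proved rather than argue from first principles. Each item will be treated separately, in the order stated.

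\textbf{Item (1), absolute scale invariance.} I will invoke the trace identity $\|X_0\|_F^2=\sum_i\sigma_i^2(X_0)$ together with the Frobenius normalization $X_0=B_t/\|B_t\|_F$. Together they give $\sigma_{\max}(X_0)\le\|X_0\|_F=1$, exactly as in the first paragraph of the proof of Lemma~\ref{lem:spectral_commutativity}. Nothing dimension-dependent enters here.

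\textbf{Item (2), structural preservation without linear degeneration.} I will make this concrete in two parts.
\begin{itemize}
\item By Lemma~\ref{lem:spectral_commutativity}, the five Newton--Schulz steps act as $X_5=U\mathcal{P}^{(5)}(\Sigma)V^\top$. The singular directions of $B_t$, and in particular the direction carrying $D_t$, are therefore not rotated.
\item By Lemma~\ref{lem:polynomial_dynamics}, the scalar map does not degenerate. Singular values in $(0,0.02]$ are amplified by at least $483$. Every singular value in $[10^{-4},0.6]$ is sent above $0.03$. Larger values are captured by the invariant interval $I=[0.6,1.205]$.
\end{itemize}
Hence no singular direction is annihilated, and the map is never a pure rescaling that could be absorbed into $\eta$. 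I will combine this with Lemma~\ref{lem:random_perturbation_stability} to say that the top singular direction of $B_t$ keeps an $\mathcal{O}(1)$ overlap with $u_1$.

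\textbf{Item (3), uncontaminated variance injection.} This is verbatim the conclusion of Lemma~\ref{lem:variance_injection}, so I will cite it directly. I will check that its inputs are available under the present hypotheses:
\begin{itemize}
\item the $\Theta(D)$ energy scaling, from Lemma~\ref{lem:noise_properties};
\item the concentration of $\|M_t\|_F$ on $[\sqrt{c_2D},\sqrt{C_2D}]$, from Lemma~\ref{lem:denominator_concentration}, which justifies the truncation event $\mathcal{G}$;
\item the fact that the iterate stays in $\mathcal{B}(W^*,r_0)$, from Lemma~\ref{lem:momentum_invariance}, so the quasi-stationary decomposition $B_t=M_t+D_t$ is legitimate.
\end{itemize}
The contribution of the complement of $\mathcal{G}$ is at most $0.04\eta^2 d\cdot 2e^{-\tilde cD\eps^2}$, since $\langle X_5,E\rangle^2\le\|X_5\|_{op}^2\le 1.205^2$. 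This is negligible against $\eta^2/d$.

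\textbf{Main obstacle.} The delicate step is item (3). Within it, the hard part is controlling the signed cross term $\mathcal{C}(X_0)$ and the first-order skewness leakage in the expansion around $X_{\rm noise}$, because that leakage does not vanish for non-symmetric sub-Gaussian noise. My first approach is to bound it by $\mathcal{O}(\mu_0^2/D)$ via the incoherence of $U,V$, so that after multiplication by $0.04\eta^2 d$ it becomes $\mathcal{O}(\eta^2/\min(m,n)\cdot\mu_0^2/\max(m,n))$. That is lower order relative to $\Omega(\eta^2/d)$.

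There is a further issue with the leakage term $\mathcal{O}(\eta^2 d\|D_t\|_F^2/D)$ in Step~5 of Lemma~\ref{lem:variance_injection}. Read literally, it has the same $\eta^2/d$ order as the principal signal once $\|D_t\|_F^2=\Theta(r)$. To make the lower bound genuine, I would restrict to the pre-lock regime $t<\tau_{\rm lock}$. There, Definition~\ref{def:stopping_times} gives $\|D_t\|_{op}\lesssim\|E_t\|_{op}\mu_0/\sqrt m$, so $\|D_t\|_F^2/D$ is genuinely $o(1/D)$ relative to the pure-noise term $\langle M_t,E\rangle^2/\|M_t\|_F^2\gtrsim c_1/(C_2D)$. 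The positive term supplied by Lemma~\ref{lem:noise_properties}(2) then dominates.
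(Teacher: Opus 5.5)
Item (1) and the singular-vector part of item (2) match the paper. The genuine gap is in item (3), at the step you flag as the main obstacle.

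\textbf{The cross term.} You control the signed cross term $2\gamma\,\mathcal{C}(X_0)$ by a magnitude bound $\Ocal(\mu_0^2/D)$, and your claim that this is lower order is an arithmetic slip. In fact $0.04\eta^2 d\cdot\Ocal(\mu_0^2/D)=\Ocal(\mu_0^2\,\eta^2 d/D)$, which is $\mu_0^2$ times the order $\eta^2 d/D$ of the principal term; there is no extra factor $1/\max(m,n)$. This is the same slip you correctly catch in Step~5 of Lemma~\ref{lem:variance_injection}. Since $\mu_0\ge\sqrt r\ge 1$ (because $\|U\|_{2,\infty}^2\ge\|U\|_F^2/m$) and the implied constant is unspecified, such a bound can never be beaten by $\gamma^2 c_1/(C_2 D)$. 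You need an actual cancellation showing the cross term is $o(1/D)$ in expectation.

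This is where the paper's proof goes a different way. It uses Lemma~\ref{lem:variance_injection} only for the split $\mathcal{P}^{(5)}(X_0)=\gamma X_0+\mathcal{S}(X_0)$. It then uses item (2) to write $\mathcal{C}(X_0)=\sum_{i,j}\sigma_i s(\sigma_j)c_ic_j$ in the common singular basis, with $c_i=\langle u_iv_i^\top,E\rangle$. It drops the nonnegative diagonal $\sum_i\sigma_i s(\sigma_i)c_i^2$ and argues that the sign-indefinite off-diagonal part cancels in expectation by sign symmetry. Only then does it pass to $C_t\ge C_{\min}$ (Lemma~\ref{lem:denominator_concentration}) and $c_1$ (Lemma~\ref{lem:noise_properties}). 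There is no Taylor expansion and no restriction to $t<\tau_{\rm lock}$. Even there, the leftover $\Ocal(\eta^2 d\cdot d^{-2})$ residual is absorbed only by halving the constant, so that off-diagonal estimate carries the real content of (3).

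\textbf{The pre-lock restriction.} Restricting to $t<\tau_{\rm lock}$ also fails.

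\begin{itemize}
\item It makes $\|D_t\|_F^2=o(1)$ only if $\|E_t\|_{op}\mu_0\sqrt{r/m}\to 0$, e.g.\ $\|E_t\|_{op}=\Ocal(1)$. But the lemma assumes the $\Theta(D)$ energy scaling, which forces $\|E_t\|_{op}^2\ge\|E_t\|_F^2/\min(m,n)=\Theta(\max(m,n))$. The lock threshold $\mathcal{C}\|E_t\|_{op}\mu_0/\sqrt m$ is then at least of order $\mu_0$. So the pre-lock regime still allows $\|D_t\|_F^2$ of order $r\mu_0^2$, and the drift term stays at principal order.
\item The restriction covers only part of the lemma's scope.
\item In that regime, the $\Ocal(1)$ overlap you take from Lemma~\ref{lem:random_perturbation_stability} for item (2) is vacuous.
\end{itemize}

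In fact this drift term is an artifact of the Taylor route. Since $\mu B_{t-1}+S_t$ is $\F_{t-1}$-measurable and the noise is conditionally centred, $\E[\langle B_t,E\rangle^2\mid\F_{t-1}]=\langle\mu B_{t-1}+S_t,E\rangle^2+\E[\langle\Xi_t,E\rangle^2\mid\F_{t-1}]\ge c_1$. Dividing by $\|B_t\|_F^2\lesssim D$ on a high-probability event then gives $\E[\langle X_0,E\rangle^2\mid\F_{t-1}]\gtrsim c_1/D$; the complement costs only a Cauchy--Schwarz term. This needs no expansion and no pre-lock restriction.

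\textbf{A smaller point.} Lemma~\ref{lem:polynomial_dynamics}(1) as you quote it is false on most of $(0,0.02]$. On $[0,1]$ one has $\rho^{(5)}\le 1.203$, and e.g.\ $\rho^{(5)}(0.02)\approx 0.68<483\cdot 0.02$; that lemma's proof only covers $x\le 0.02/a^4$. For item (2) you only need $\rho(x)>0$ for $x>0$, which follows from $b^2<4ac$.
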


\begin{proof}[Proof of Lemma \ref{lem:topological_amplification}]
(1) \textbf{Proof of Absolute Scale Invariance.} 
Let $B_t$ be the accumulated momentum matrix. In modern neural networks, the Hessian matrix may possess extreme unbounded eigenvalues (i.e., $\lambda_{\max} \to \infty$). Consequently, the unnormalized momentum matrix $B_t$ can exhibit an arbitrarily large maximum singular value, $\sigma_{\max}(B_t)$. 
However, the algorithmic mapping explicitly dictates $X_0 = B_t / \|B_t\|_F$. By the mathematical definition of the Frobenius and operator norms, for any non-zero matrix $X_0$:
\[
\sigma_{\max}(X_0) \le \sqrt{\sum_i \sigma_i^2(X_0)} = \|X_0\|_F.
\]
By construction, $\|X_0\|_F = \|B_t\|_F / \|B_t\|_F = 1$. Therefore, we deterministically obtain $\sigma_{\max}(X_0) \le 1$. 
This algebraic normalization guarantees that regardless of how pathologically sharp the local optimization landscape becomes, the entire singular spectrum of the input matrix is forcefully compressed into the compact domain $[0, 1]$, entirely decoupling the subsequent polynomial stability from the local Lipschitz constant.

(2) \textbf{Proof of Structural Preservation.} 
In standard optimization analysis, handling non-linear updates typically requires a first-order Taylor expansion: $\mathcal{P}(X_0) \approx a X_0 + \mathcal{R}(X_0)$. The operator norm of the remainder matrix $\|\mathcal{R}(X_0)\|_{op}$ scales with $\sigma_{\max}(X_0)^3$. Under non-smooth settings without strict Frobenius normalization, this remainder explodes, injecting catastrophic variance into orthogonal dimensions and causing dimensional drift.
In stark contrast, by applying Lemma \ref{lem:spectral_commutativity}, the composed 3125-degree iteration operator $\mathcal{P}^{(5)}(x)$ commutes exactly with the Singular Value Decomposition. Let $\tilde{X}_0 = \sum_i \sigma_i u_i v_i^\top$. The updated matrix analytically evaluates to:
\[
\mathcal{P}^{(5)}(\tilde{X}_0) = \sum_i \mathcal{P}^{(5)}(\sigma_i) u_i v_i^\top.
\]
This equation proves that the updated matrix shares the \textit{exact identical set of left and right singular vectors} $\{u_i\}$ and $\{v_i\}$ as the original matrix $X_0$. The operation induces zero rotation or orthogonal leakage in the vector space. The polynomial acts strictly as independent scalar functions applied exclusively to the singular values,   preserving the topological eigen-structure.

(3) \textbf{Proof of Uncontaminated Variance Injection.} 
The explicit update step generated by the algorithm is $\Delta W_t = -0.2\eta \sqrt{d} \mathcal{P}^{(5)}(X_0)$. We analyze its projection along the normalized rank-1 negative curvature direction $E$.
As established in Lemma \ref{lem:variance_injection}, the matrix polynomial can be exactly decomposed into a linear core and a non-negative spectral residual operator: $\mathcal{P}^{(5)}(X_0) = \gamma X_0 + \mathcal{S}(X_0)$.

Crucially, by invoking Lemma \ref{lem:random_perturbation_stability} derived from random perturbation analysis \citep{wang2026analysis}, the principal singular basis of $X_0$ is mathematically shielded from the ambient noise norm. The geometric projection of the right singular vectors onto the true structural direction strictly maintains an $\mathcal{O}(1)$ overlap.

Because Claim (2) guarantees that $\mathcal{S}(X_0)$ operates strictly on the identical singular vectors of $X_0$, the cross-term $\langle X_0, E \rangle \langle \mathcal{S}(X_0), E \rangle$ is protected from high-dimensional orthogonal contamination. For generalized sub-Gaussian noise, while finite-size off-diagonal dependencies exist, the sign symmetry of the random perturbation bounds the off-diagonal expectations to a strict higher-order residual: 
\[
\abs[\bigg]{\E \left[ \sum_{i \neq j} \sigma_i s(\sigma_j) c_i c_j \right]} \le \mathcal{O}(1/d^2).
\]
Consequently, the expectation of the cross-term splits into a strictly non-negative diagonal principal component and a higher-order finite-size residual:
\[
\E \left[ \sum_{i,j} \sigma_i s(\sigma_j) c_i c_j \right] \ge 0 - \mathcal{O}(1/d^2).
\]

By dropping the non-negative principal cross-term and the squared residual term $\langle \mathcal{S}(X_0), E \rangle^2$, and treating the $\mathcal{O}(1/d^2)$ off-diagonal residual as a structurally bounded perturbation, we securely map the expected variance back to the geometrically scaled base noise $\Xi_t$, scaled by $C_t = \sqrt{D} / \|B_t\|_F$:
\[
\E \left[ \langle \Delta W_t, E \rangle^2 \mid \F_{t-1} \right] \ge \frac{0.04 \gamma^2 \eta^2 d}{D} \E \left[ C_t^2 \langle \Xi_t, E \rangle^2 \mid \F_{t-1} \right] - \mathcal{O} \left( \eta^2 d \cdot \frac{1}{d^2} \right).
\]
Conditioning on the high-probability event where $C_t \ge C_{\min}$ strictly decouples the variables, finalizing the variance bound:
\[
\E \left[ \langle \Delta W_t, E \rangle^2 \mid \F_{t-1} \right] \ge \frac{0.02 \gamma^2 \eta^2 d}{D} C_{\min}^2 c_1 = \Omega(\eta^2 / d).
\]
This proves that despite finite-size statistical dependencies in the sub-Gaussian noise manifold, the topological amplification directly translates into focused, non-vanishing kinetic energy, establishing the algorithm's absolute resilience to high-dimensional complexities.
\end{proof}

\begin{proof}[Proof of Theorem \ref{thm:subspace_locked_escape} (Subspace-Locked $\mathcal{O}(1)$ Ballistic Ejection)]
\textbf{Part I: Incubation and Spectral Isolation.}

Prior to subspace locking ($t < \tau_{\text{lock}}$), the deterministic drift $D_t$ accumulates via the exponential moving average (EMA) operator. The critical threshold dictates that the signal operator norm $\|D_t\|_{op}$ must overcome the incoherence-scaled sub-Gaussian noise bound $\mathcal{C} \|E_t\|_{op} \frac{\mu_0}{\sqrt{m}}$ to induce a macroscopic spectral separation.
The expected hitting time to achieve this threshold, driven by the structural signal-to-noise ratio, mathematically evaluates to $\mathcal{O} \left( \frac{1}{\eta \lambda} \log \left( \frac{\|E\|_{op} \mu_0}{\lambda \sqrt{m}} \right) \right)$. During this phase, the 3125-degree composed polynomial $\mathcal{P}^{(5)}$ acts as a pseudo-orthogonalization filter, suppressing the continuous noise spectrum while preserving the delocalized singular vectors.

\textbf{Part II: Locked Ballistic Ejection.}

At the critical threshold $t = \tau_{\text{lock}}$, the deterministic signal operator norm formally breaches the subspace locking condition. Triggered by this dynamic alignment, the principal singular vector macroscopically aligns with the structural perturbation direction $E$.
The non-linear polynomial $\mathcal{P}^{(5)}$ operates exclusively on this isolated outlier, projecting it into the stable non-linear invariant set $\mathcal{I} = [0.6, 1.205]$. This translates into a macroscopic effective spectral gap $\delta_{eff} = \Theta(1)$. Substituting this amplified topological signal into the explicit parameter update equation yields a deterministic spatial translation of $\Theta(\eta \sqrt{d})$ per step. The required number of discrete iterative steps to close the physical gap $r_0$ compresses to $\lceil \mathcal{O}( \frac{r_0}{\eta \sqrt{d}} ) \rceil$, which simplifies to $\mathcal{O}(1)$ in ultra-high dimensional paradigms ($d \to \infty$).
\end{proof}

\begin{proof}[Proof of Theorem \ref{thm:comparative_adam_failure} (Dimensional Trapping in AdamW)]
Let the parameter matrix be $W \in \mathbb{R}^{m \times n}$ with total dimension $D = m \times n$. Without loss of generality, assume $m \sim n \sim d$, yielding $D = d^2$. Let the instantaneous stochastic gradient at coordinate $i = (j,k)$ be $g_{t,i} = S_{t,i} + E_{t,i}$.

\textbf{Step 1: Asymmetric Energy Decay via Exact Assumption Bounds.}

By Assumption \ref{assum:main_topology}, the deterministic structural signal has a low effective rank and bounded operator norm $\|S_t\|_{op} = \lambda r_0$. By the strict structural incoherence bounds, the maximum projection onto any single coordinate is mathematically bounded:
\[
|S_{t,i}| \le \|U_{j\cdot}\|_2 \|V_{k\cdot}\|_2 \|S_t\|_{op} \le \left(\frac{\mu_0}{\sqrt{m}}\right) \left(\frac{\mu_0}{\sqrt{n}}\right) (\lambda r_0) = \frac{\mu_0^2}{d} \lambda r_0
\]
Squaring this yields the maximum structural energy per coordinate: $S_{t,i}^2 \le \mathcal{O}(1/d^2)$.

Concurrently, Assumption \ref{assum:main_topology} guarantees the sub-Gaussian noise satisfies a finite operator norm bound $\|E_t\|_{op} \le B$. By the definition of matrix norms, the maximum total Frobenius energy is rigorously bounded by the rank: $\|E_t\|_F^2 \le d \cdot \|E_t\|_{op}^2 \le d \cdot B^2$.

Consequently, the expected local variance averaged across the ambient dimensions is bounded by $\mathcal{O}(1/d)$. By our non-degenerate noise assumption, a macroscopic fraction of coordinates strictly satisfies $\sigma_i^2 = \Omega(1/d)$.

Crucially, as the network scales ($d \to \infty$), the coordinate-wise maximum signal energy $\mathcal{O}(1/d^2)$ decays significantly faster than the actual local noise energy $\Omega(1/d)$. Thus, $\mathcal{O}(1/d^2) \ll \Omega(1/d)$, and the local signal-to-noise ratio fundamentally collapses.

\textbf{Step 2: Element-Wise Denominator Saturation and Step Formulation.}

Because the signal energy is mathematically eclipsed by the noise energy for the vast majority of parameters, the AdamW second-moment estimator $v_{t,i}$ mathematically concentrates strictly around the local noise variance $\sigma_i^2$.

Concurrently, the first-moment estimator $m_{t,i}$ aggregates the gradient history via EMA, acting as a linear filter over the steps: $m_{t,i} = \text{EMA}(S_{t,i}) + \text{EMA}(E_{t,i})$. The AdamW update degrades to:
\[
\Delta W_{t,i} = -\eta \frac{m_{t,i}}{\sqrt{v_{t,i}} + \epsilon} \approx -\eta \frac{\text{EMA}(S_{t,i}) + \text{EMA}(E_{t,i})}{\sigma_i}
\]
This formulation splits the trajectory into a deterministic signal progress $\Delta W_{t,i}^{sig}$ and a stochastic transverse random walk $\Delta W_{t,i}^{noise}$.

\textbf{Step 3: Trajectory Decoupling and Dynamic Learning Rate Restriction.}

Since the EMA preserves the sub-Gaussian nature of the noise, the variance of the transverse step along coordinate $i$ simplifies to $\text{Var}(\Delta W_{t,i}^{noise}) \approx \eta^2 \frac{\sigma_i^2}{\sigma_i^2} = \eta^2$.

Accumulating this independent variance across all $D = d^2$ dimensions over $t$ steps yields the expected global orthogonal spatial displacement:
\[
Z_{noise}^2(t) = \sum_{i=1}^D \sum_{\tau=1}^t \text{Var}(\Delta W_{\tau,i}^{noise}) = \Theta(d^2 \eta^2 t)
\]
Simultaneously, the Euclidean progress along the structural signal direction is the sum of coordinate-wise signal updates:
\[
\|\Delta W_t^{sig}\|_F = \sqrt{ \sum_{i=1}^D \left( \eta \frac{S_{t,i}}{\sigma_i} \right)^2 } \approx \eta \sqrt{ \sum_{i=1}^{d^2} \frac{\mathcal{O}(1/d^2)}{\Omega(1/d)} } = \eta \sqrt{ \sum_{i=1}^{d^2} \mathcal{O}\left(\frac{1}{d}\right) } = \Theta(\eta\sqrt{d})
\]
After $t$ steps, the deterministic spatial traversal is $Z_{sig}(t) = \Theta(t \cdot \eta \sqrt{d})$.

To physically escape the non-convex saddle region successfully, the trajectory must achieve macroscopic structural progress ($Z_{sig}(t) \ge r_0$) while strictly avoiding catastrophic orthogonal divergence out of the local Taylor neighborhood ($Z_{noise}^2(t) \le r_0^2$).

The no-divergence condition restricts the maximum allowable iterations for a given $\eta$:
\[
\Theta(d^2 \eta^2 t) \le r_0^2 \implies t \le \frac{r_0^2}{d^2 \eta^2}
\]
Substituting this maximum time envelope back into the signal progress equation yields the absolute upper bound on escape distance before divergence occurs:
\[
Z_{sig} \le \left( \frac{r_0^2}{d^2 \eta^2} \right) (\eta\sqrt{d}) = \frac{r_0^2}{\eta d^{1.5}}
\]
To ensure successful escape ($Z_{sig} \ge r_0$), the learning rate must inherently be constrained by the local geometry: $\eta \le \frac{r_0}{d^{1.5}}$.

Finally, substituting this dynamic learning rate restriction back into the required structural escape time mathematically guarantees the absolute lower bound of the residence time:
\[
\mathbb{E}[\tau_{\text{Adam}}] \ge \frac{r_0}{\eta\sqrt{d}} \ge \frac{r_0}{(r_0 / d^{1.5})\sqrt{d}} = d = D^{0.5}
\]
Thus, $\mathbb{E}[\tau_{\text{Adam}}] = \Omega(D^{0.5})$. This formally proves that without a non-linear orthogonalization operator like Muon, element-wise adaptive optimizers are fundamentally trapped by the ambient dimension.
\end{proof}

\subsection{Proofs of Structural Stability and EoS Confinement}

\begin{proof}[Proof of Lemma \ref{lem:hyperparameter_robustness} (Topological Robustness Radius)]
Define the nominal base polynomial $\rho(x; \theta^*) = a^*x + b^*x^3 + c^*x^5$. As established in Lemma \ref{lem:polynomial_dynamics}, $\rho(x; \theta^*)$ generates a forward-invariant set $\mathcal{I} = [0.6, 1.205]$. We formalize this invariance via the boundary conditions. The polynomial maps the domain strictly inward if for all $x \in \mathcal{I}$:
\[
\inf_{x \in \mathcal{I}} \rho(x; \theta^*) = \rho(1.050; \theta^*) \approx 0.681 > 0.6.
\]
\[
\sup_{x \in \mathcal{I}} \rho(x; \theta^*) = \rho(0.554; \theta^*) \approx 1.203 \le 1.205.
\]
Let $\tilde{\theta} = \theta^* + \Delta \theta$. The perturbed polynomial is $\tilde{\rho}(x) = \rho(x; \theta^*) + \langle \Delta \theta, (x, x^3, x^5) \rangle$. For any $x \in [0, 1.205]$, the perturbation term is bounded by the Cauchy-Schwarz inequality:
\[
|\tilde{\rho}(x) - \rho(x; \theta^*)| \le \|\Delta \theta\|_2 \|(x, x^3, x^5)\|_2 \le \|\Delta \theta\|_2 \sqrt{1.205^2 + 1.205^6 + 1.205^{10}} \equiv M \|\Delta \theta\|_2,
\]
where $M \approx 3.29$ is an absolute constant.

To ensure the perturbed set remains forward-invariant, the perturbation must strictly not exceed the invariant boundary margins. Let the minimum safety margin be $\gamma = 0.681 - 0.6 = 0.081$. We demand:
\[
M \|\Delta \theta\|_2 < \gamma \implies \|\Delta \theta\|_2 < \frac{0.081}{3.29} \approx 0.0246.
\]
Thus, setting $\delta = 0.024$ guarantees that for any perturbation $\|\Delta \theta\|_2 < \delta$, the continuous mapping $\tilde{\rho}(x)$ is bounded strictly within the original boundaries. By the topological closure of the continuous mapping on the compact domain, the property $\tilde{\rho}(\tilde{\mathcal{I}}) \subset \tilde{\mathcal{I}}$ is strictly satisfied. This intrinsically establishes the perturbed set as a stable forward-invariant basin, precluding the need for fixed-point existence guarantees. This proves the absolute topological robustness radius of the dynamics.
\end{proof}

\begin{proof}[Proof of Corollary \ref{cor:eos_orthogonal_confinement} (Orthogonal Confinement under EoS Shifts)]
We analyze the trajectory in the EoS regime ($t \ge \tau_{\text{esc}}$). Let $X_0^{(t)} = B_t / \|B_t\|_F$ be the normalized momentum. During this phase, the principal direction $E$ has fully separated, yielding a singular value $\sigma_1(X_0^{(t)}) = \Theta(1)$. Concurrently, the arbitrary EoS positive curvature spikes manifest solely in the orthogonal subspace $E^{\bot}$.

Let the instantaneous EoS energy be $S_t = \|(B_t)_{\bot}\|_F$. If an extreme EoS spike occurs ($S_t \to \infty$), the global normalization forces the orthogonal continuous bulk spectrum to compress: $\sigma_{\text{bulk}}(X_0^{(t)}) \le \mathcal{O}(1/S_t)$.
The 5th-order polynomial acts on these singular values. The injected step into the orthogonal subspace is $\Delta W_{\bot}^{(t)} = -0.2 \eta \sqrt{d} [ \mathcal{P}^{(5)}(X_0^{(t)}) ]_{\bot}$. The squared Frobenius norm of this orthogonal update satisfies:
\[
\| \Delta W_{\bot}^{(t)} \|_F^2 \le \mathcal{O}(\eta^2 d) \sum_{i \ge 2} \mathcal{P}^{(5)}(\sigma_i)^2.
\]
Because $\mathcal{P}^{(5)}(x) \approx a^5 x$ for extremely small $x$, the squared polynomial leakage scales as $(1/S_t)^2$. However, for dense continuous spectra, the energy suppresses further, bounded conservatively by $\mathcal{O}(d^{-2} S_t^{-2})$.

We construct the discrete-time stochastic Lyapunov function $V_t = \| (W_t - W^*)_{\bot} \|_F^2$. Evolving this yields:
\[
V_{t+1} = V_t + 2 \langle (W_t - W^*)_{\bot}, \Delta W_{\bot}^{(t)} \rangle + \| \Delta W_{\bot}^{(t)} \|_F^2.
\]
By Assumption \ref{assum:main_topology}, the stochastic noise is zero-mean conditioned on $\mathcal{F}_t$. Furthermore, the symmetric phase cancellation inherent in the Haar-distributed EoS bulk ensures the expected cross-term vanishes: $\mathbb{E}[ \langle (W_t - W^*)_{\bot}, \Delta W_{\bot}^{(t)} \rangle \mid \mathcal{F}_t ] = 0$.
Taking the conditional expectation:
\[
\mathbb{E}[V_{t+1} \mid \mathcal{F}_t] = V_t + \mathbb{E}[\| \Delta W_{\bot}^{(t)} \|_F^2 \mid \mathcal{F}_t] \le V_t + \mathcal{O} \left( \frac{\eta^2 d}{d^2 \cdot S_t^2} \right).
\]
This inequality formally establishes that the sequence $\{V_t\}$ is a non-negative sub-martingale. By the Doob Decomposition Theorem, $V_t$ admits a predictable increasing compensator $A_t = \sum_{k=\tau_{\text{esc}}}^{t-1} \mathcal{O}(\eta^2 d^{-1} S_k^{-2})$. Because $S_k \ge 1$ and the temporal summation is strictly bounded by the finite trajectory coasting duration ($\Delta \tau = \mathcal{O}(1)$ as proved in Theorem \ref{thm:subspace_locked_escape}), this cumulative variance compensator converges absolutely to a constant limit $\Sigma_{\max} = \mathcal{O}(\eta^2 d^{-1})$.

Applying Doob's Maximal Inequality directly to the sub-martingale $V_t$:
\[
\mathbb{P} \left( \sup_{t \ge \tau_{\text{esc}}} \sqrt{V_t} \ge \varepsilon \right) = \mathbb{P} \left( \sup_{t \ge \tau_{\text{esc}}} V_t \ge \varepsilon^2 \right) \le \frac{\mathbb{E}[V_{\infty}]}{\varepsilon^2} \le \frac{\Sigma_{\max}}{\varepsilon^2} = \frac{\mathcal{O}(\eta^2 d^{-1})}{\varepsilon^2}.
\]
As the ambient dimension $d \to \infty$, the leakage probability bound strictly vanishes. This mathematically certifies that despite arbitrary finite-step EoS covariance shifts, the orthogonal drift remains globally bounded, locking the macroscopic ballistic ejection precisely onto its intended trajectory without spatial yaw.

The 5th-order polynomial acts on these singular values. The injected step into the orthogonal subspace is $\Delta W_{\bot}^{(t)} = -0.2 \eta \sqrt{d} [ \mathcal{P}^{(5)}(X_0^{(t)}) ]_{\bot}$. The squared Frobenius norm of this orthogonal update satisfies:
\[
\| \Delta W_{\bot}^{(t)} \|_F^2 \le \mathcal{O}(\eta^2 d) \sum_{i \ge 2} \mathcal{P}^{(5)}(\sigma_i)^2.
\]
Because $\mathcal{P}^{(5)}(x) \approx a^5 x$ for extremely small $x$, the squared polynomial leakage intrinsically scales with the spectral sum $\sum \sigma_i^2 = \mathcal{O}(1/S_t^2)$. However, for dense continuous spectra governed by random matrix distributions, evaluating this sum requires integrating over the macroscopic Marchenko-Pastur bulk density. The extreme continuous compression enforced by the global Frobenius normalization causes the integrated spectral mass to undergo a strict dimensional attenuation, successfully offsetting the $d$-dimensional accumulation effect. Consequently, the accumulated energy suppresses further, bounded conservatively by $\mathcal{O}(d^{-2} S_t^{-2})$.
\end{proof}

\subsection{Cross-Step Topological Stability and Orthogonal Leakage Immunity}
\label{sec:appendix_cross_step_immunity}

As established in Lemma \ref{lem:spectral_commutativity}, the exact preservation of the singular value decomposition (SVD) bases holds strictly within the inner Newton-Schulz polynomial loop at a single discrete step $t$. However, the optimization trajectory is a non-equilibrium sequential process. The outer momentum accumulation step, $B_{t+1} = \mu B_t + G_{t+1}$, inherently rotates the singular basis due to the continuous injection of high-dimensional stochastic noise $G_{t+1}$. 

To address the concern that this cross-step basis rotation might continuously inject variance into orthogonal dimensions and reproduce the $\mathcal{O}(D)$ dimensional trapping seen in AdamW, we establish the following theorem. It proves that the trajectory averts systematic orthogonal leakage through two distinct topological immunities governed by dynamic subspace alignment.

\begin{theorem}[Cross-Step Orthogonal Leakage Immunity via Resolvent Functional Calculus]
\label{thm:cross_step_immunity}
Let $U^{(t)}, V^{(t)}$ be the singular vector bases of the normalized momentum $X_0^{(t)}$ at step $t$. The cross-step injection of the heteroskedastic stochastic gradient $G_{t+1}$ induces a basis rotation. Under Assumption \ref{assum:main_topology}, entirely bypassing the restrictive assumption of Haar-like rotational invariance in the orthogonal complement, the optimization trajectory strictly avoids $\mathcal{O}(D)$ cumulative orthogonal leakage through a continuous spectral mapping governed by the isotropic local law on a global macroscopic contour:
\begin{enumerate}
    \item \textbf{Exact Isotropic Nullification:} The primary functional expectation of the cross-step orthogonal leakage evaluates exactly to zero due to the strictly isotropic nature of the deterministic equivalent matrix $\Phi(z)$, neutralizing spatial yaw without relying on uniform noise assumptions.
    \item \textbf{Kinematic Dichotomy and Phase-Lock Immunity:} Following dynamic alignment ($t \ge \tau_{\text{lock}}$), the extreme non-linear amplification of the isolated structural spike by $\mathcal{P}^{(5)}$ dominates the bounded local law residual $\mathcal{O}(d^{-1})$, dynamically phase-locking the trajectory and bounding the cumulative cross-step orthogonal leakage in probability to $\mathcal{O}(\eta^2/D)$.
\end{enumerate}
\end{theorem}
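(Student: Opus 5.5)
The plan is to lift everything to a Hermitian dilation, so that the Newton--Schulz output becomes a spectral function, and then to write that function as a Cauchy integral of the resolvent. For the normalized momentum $X_0^{(t+1)} = B_{t+1}/\|B_{t+1}\|_F$, define the self-adjoint dilation
\[
\mathcal{H}_{t+1} = \begin{pmatrix} 0 & X_0^{(t+1)} \\ (X_0^{(t+1)})^\top & 0 \end{pmatrix}.
\]
By Lemma \ref{lem:spectral_commutativity}, $\mathcal{P}^{(5)}$ is an odd polynomial acting on singular values. Hence $\mathcal{P}^{(5)}(\mathcal{H}_{t+1})$ is exactly the dilation of the Muon output $X_5^{(t+1)}$. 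Since $\|X_0^{(t+1)}\|_{op}\le 1$ (Lemma \ref{lem:topological_amplification}(1)), the spectrum of $\mathcal{H}_{t+1}$ lies in $[-1,1]$. I would therefore represent
\[
\mathcal{P}^{(5)}(\mathcal{H}_{t+1}) = \frac{1}{2\pi i}\oint_{\Gamma} \mathcal{P}^{(5)}(z)\, G(z)\, dz, \qquad G(z) = (zI-\mathcal{H}_{t+1})^{-1},
\]
with $\Gamma$ the circle $|z|=2$. On this contour the relevant quantity is $\mathrm{dist}(z,[-1,1])\ge 1$, so $\|G(z)\|_{op}\le 1$ uniformly. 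This replaces the paper's phrase ``$\mathrm{Im}(z)\ge 1$'', which does not hold near the real axis, and it is what keeps us away from the Tracy--Widom edge.

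The cross-step orthogonal leakage is then the block $\mathcal{Q}_\perp \mathcal{P}^{(5)}(\mathcal{H}_{t+1}) \mathcal{Q}_t$, where $\mathcal{Q}_t$ is the dilated projector onto the previous-step signal subspace $(U^{(t)},V^{(t)})$ and $\mathcal{Q}_\perp = I-\mathcal{Q}_t$.

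\textbf{Item 1 (exact isotropic nullification).} Split $G(z)=\Phi(z)+\Xi(z)$ as in Appendix \ref{sec:appendix_subgaussian_proof}. The step to establish is that $\Phi(z)$ is block-scalar, i.e.\ a multiple of the identity on each of the two diagonal blocks; this is where the isotropic local law replaces Haar invariance. Then $\mathcal{Q}_\perp\Phi(z)\mathcal{Q}_t = 0$ pointwise on $\Gamma$, so the $\Phi$-part of the contour integral vanishes identically, which gives the exact nullification of the leading term.

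\textbf{Item 2 (phase-lock immunity).} The remaining leakage is
\[
\frac{1}{2\pi}\Big\|\oint_\Gamma \mathcal{P}^{(5)}(z)\,\mathcal{Q}_\perp\Xi(z)\mathcal{Q}_t\,dz\Big\|.
\]
I would bound it in two stages. First, apply the sub-Gaussian isotropic law of Step 1 to the $2r$ deterministic directions in $\mathcal{Q}_t$. Second, use the $\epsilon$-net bound of Step 2 (Appendix \ref{sec:appendix_subgaussian_proof}) and the incoherence of Theorem \ref{thm:subgaussian_perturbation} to get a residual of order $d^{-1}$, up to logarithmic factors, uniformly on $\Gamma$. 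For $t\ge\tau_{\text{lock}}$, Lemma \ref{lem:polynomial_dynamics} and Lemma \ref{lem:dynamic_subspace_alignment} place the spike in $\mathcal{I}=[0.6,1.205]$, giving $\delta_{eff}=\Theta(1)$. A Davis--Kahan step on the dilation with gap $\delta_{eff}$ then converts the residual into a per-step transverse error. Multiplying by the step size $0.2\eta\sqrt d$ and squaring gives the per-step orthogonal energy. Finally, summing over the post-lock window, using $\Delta\tau=\mathcal{O}(1)$ from Theorem \ref{thm:subspace_locked_escape} together with a Doob/Chebyshev argument as in Corollary \ref{cor:eos_orthogonal_confinement}, should yield the claimed $\mathcal{O}(\eta^2/D)$ bound in probability.

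\textbf{Main obstacle.} The factor $\sup_{|z|=2}|\mathcal{P}^{(5)}(z)|$ is the difficulty. $\mathcal{P}^{(5)}$ has degree $3125$ and leading coefficient $c^{781}$, so on $|z|=2$ its modulus is astronomically large. A naive contour estimate would destroy the $d^{-1}$ gain entirely. I would try three remedies, in order:
\begin{itemize}
\item Shrink the contour to a thin stadium around $[-1,1]$ at distance $\eta_0$, and bound $|\mathcal{P}^{(5)}|$ there via the compositional structure, using forward invariance of $\mathcal{I}$ under complex perturbation (Lemma \ref{lem:hyperparameter_robustness}). The cost is a local-law error of size $1/\eta_0$.
\item Failing that, replace $\mathcal{P}^{(5)}$ by a low-degree Chebyshev/Bernstein-ellipse approximation on $[-1,1]$, with a uniform error that can be tracked separately.
\item As a last option, skip the contour and expand the leakage directly through the first-order resolvent perturbation $G(z)\,\Delta\mathcal{H}\,G(z)$ on a small contour around the spike only.
\end{itemize}
I expect that controlling this polynomial growth against the local-law residual, rather than the nullification in Item 1, is where the real work lies.
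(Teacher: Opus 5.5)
Your plan follows the paper's own proof step for step: dilation, a Cauchy integral on $|z|=2$, a split $G=\Phi+\Xi$ with a block-scalar $\Phi$ killing the cross term, the spike placed in $\mathcal{I}$ after $\tau_{\text{lock}}$, and Doob as in Corollary~\ref{cor:eos_orthogonal_confinement}. Replacing ``$\mathrm{Im}(z)\ge 1$'' by $\mathrm{dist}(z,[-1,1])\ge 1$ is a correct repair.

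The genuine gap is the step you mark ``to establish'' in Item~1, which the paper merely asserts. For $G(z)=(zI-\mathcal{H}_{t+1})^{-1}$ there is no block-scalar $\Phi$ with a small isotropic residual. The block-scalar law of Appendix~\ref{sec:appendix_subgaussian_proof} concerns the noise resolvent $(zI-\mathcal{E}_t)^{-1}$, whereas $\mathcal{H}_{t+1}$ also carries $\mu B_t$ and the spike.

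Parity exposes the problem.
\begin{itemize}
\item The diagonal blocks of $G$ are $z(z^2-XX^\top)^{-1}$ and $z(z^2-X^\top X)^{-1}$, which are odd in $z$. So the two scalars of a block-scalar $\Phi$ are odd as well, as the symmetrized Marchenko--Pastur transforms are.
\item Hence $\mathcal{P}^{(5)}(z)\Phi(z)$ is even, and $\frac{1}{2\pi i}\oint_{|z|=2}\mathcal{P}^{(5)}(z)\Phi(z)\,dz=0$ as a whole matrix, not just its $\mathcal{Q}_\perp(\cdot)\mathcal{Q}_t$ block.
\item The ``nullification'' therefore comes from oddness, not isotropy, and it deletes the signal along with the leakage: exactly, $\mathcal{P}^{(5)}(\mathcal{H}_{t+1})=\frac{1}{2\pi i}\oint\mathcal{P}^{(5)}(z)\Xi(z)\,dz$.
\item A uniform bound $|\mathbf{x}^\top\Xi(z)\mathbf{y}|=\mathcal{O}(\sqrt{\log d}/d)$ over deterministic unit vectors would then force $u_1^\top X_5^{(t+1)}v_1\to 0$. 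That contradicts the $\Theta(1)$ aligned spike in $\mathcal{I}$ that Item~2 relies on.
\end{itemize}

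What is needed is a local law for the deformed matrix $(\mu B_t+S+E_{t+1})/\|B_{t+1}\|_F$, whose deterministic equivalent depends on $\mu B_t+S$. Its cross block with $\mathcal{Q}_t$ vanishes only insofar as $\mathcal{Q}_t$ commutes with the dilation of $\mu B_t+S$. It does commute with the dilation of $B_t$. It does not commute with that of $S$ unless the previous step's noisy subspace already contains the structural directions, which is part of what the theorem must prove.

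Block-scalarity is also a uniform-variance property. Under the non-uniform profile allowed by Assumption~\ref{assum:main_topology}, even the noise-only $\Phi$ is diagonal but not block-scalar. So ``exactly zero'' fails on that count too, though only at order $d^{-1}$ on $|z|=2$.

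Item~2 does not close either.
\begin{itemize}
\item The $\epsilon$-net in Appendix~\ref{sec:appendix_subgaussian_proof} controls $\mathcal{U}^\top\Xi\mathcal{U}$, with the $2r$-dimensional signal space on both sides. In $\mathcal{Q}_\perp\Xi\mathcal{Q}_t$ one side has dimension $m+n-2r$, and a net on that sphere costs a factor $\sqrt d$.
\item This loss is genuine. Take noise carrying a constant fraction of the unit Frobenius mass, i.e.\ independent entries of variance $\asymp 1/D$, and fixed unit $v,w$. Then $\mathbb{E}(w^\top Ev)^2\asymp 1/D$ but $\mathbb{E}\|Ev\|_2^2\asymp m/D=1/n$.
\item So only fixed bilinear forms, which is all the paper actually bounds, are $\mathcal{O}(d^{-1})$. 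The transverse block generically has norm of order $d^{-1/2}$.
\item Even granting a per-step residual $\mathcal{O}(d^{-1})$, your bookkeeping gives $(0.2\eta\sqrt d\cdot d^{-1})^2=\mathcal{O}(\eta^2/d)$ per step, and $\mathcal{O}(\eta^2/d)$ over $\Delta\tau=\mathcal{O}(1)$ steps. That is a factor of order $d$ above $\eta^2/D$ when $D=mn$. The paper's proof of Corollary~\ref{cor:eos_orthogonal_confinement} in fact ends at $\mathcal{O}(\eta^2d^{-1})$.
\end{itemize}

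Finally, the obstacle you single out is not where the argument breaks. $M=\max_{|z|=2}|\mathcal{P}^{(5)}(z)|$ is astronomically large: for the degree-$3125$ composition, already $|\mathcal{P}^{(5)}(2i)|>10^{1300}$. But it is independent of $d$, so it leaves the $d^{-1}$ rate intact and only makes the bound vacuous at every realistic $d$. Your three remedies would improve constants. The missing ingredients are a local law for the signal-bearing resolvent and an honest count of the transverse dimensions.
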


\begin{proof}[Proof of Theorem \ref{thm:cross_step_immunity}]
To bound the orthogonal leakage induced by the highly non-linear 5th-order polynomial $\mathcal{P}^{(5)}(X) = aX + bX^3 + cX^5$ without invoking algebraically explosive Taylor expansions or fragile Haar measure assumptions, we elevate the analysis to the functional domain via resolvent operators and complex contour integration.

\textbf{Step 1: Symmetric Dilation and Cauchy Integral Representation.} \\
We first map the asymmetric normalized momentum matrix $X_{t+1} \in \mathbb{R}^{m \times n}$ to a symmetric augmented phase space. Define the symmetric dilation $\mathcal{H}_{t+1} \in \mathbb{R}^{(m+n) \times (m+n)}$ as:
\[
\mathcal{H}_{t+1} = \begin{pmatrix} 0 & X_{t+1} \\ X_{t+1}^\top & 0 \end{pmatrix}.
\]
The non-zero eigenvalues of $\mathcal{H}_{t+1}$ strictly correspond to the singular values $\pm \sigma_i(X_{t+1})$. Because the Newton-Schulz polynomial $\mathcal{P}^{(5)}(x)$ is strictly an odd function, its operation commutes with the dilation:
\[
\mathcal{P}^{(5)}(\mathcal{H}_{t+1}) = \begin{pmatrix} 0 & \mathcal{P}^{(5)}(X_{t+1}) \\ \mathcal{P}^{(5)}(X_{t+1})^\top & 0 \end{pmatrix}.
\]
Let $G_{t+1}(z) = (zI - \mathcal{H}_{t+1})^{-1}$ be the resolvent operator of the perturbed system. By the Cauchy Integral Formula for matrix functions, the non-linear update can be exactly represented as:
\begin{equation}
\label{eq:cauchy_integral}
\mathcal{P}^{(5)}(\mathcal{H}_{t+1}) = \frac{1}{2\pi i} \oint_{\Gamma_{\text{global}}} \mathcal{P}^{(5)}(z) G_{t+1}(z) dz.
\end{equation}

\textbf{Step 2: Global Contour Selection and Avoidance of Edge Singularities.} \\
In classical Random Matrix Theory (RMT), separating a structural spike from the noise bulk mandates a local contour threading the microscopic effective spectral gap $\delta$, forcing the imaginary distance to the real axis $\text{Im}(z) \to \mathcal{O}(d^{-2/3})$ near the Tracy-Widom edge. This proximity catastrophically amplifies the resolvent error. 

Crucially, Muon's iteration does not extract a discrete projector; it maps the \textit{entire} spectrum continuously. Furthermore, the global Frobenius normalization guarantees all eigenvalues of $\mathcal{H}_{t+1}$ satisfy $\lambda \in [-1, 1]$. Therefore, we define a \textit{macroscopic global contour} $\Gamma_{\text{global}}$ enclosing the entire spectrum, chosen strictly as the complex circle $|z| = 2$.
On this global contour, the distance to the real axis is strictly bounded below by $\text{Im}(z) \ge 1$. This entirely bypasses Tracy-Widom edge singularities, placing the resolvent in a mathematically safe, strongly analytic domain.

\textbf{Step 3: Isotropic Local Law and Functional Decomposition.} \\
Let $\mathbf{u} = \frac{1}{\sqrt{2}}(u_1^\top, v_1^\top)^\top \in \mathbb{R}^{m+n}$ be the augmented structural basis (negative curvature), and $\mathbf{v}_\perp$ be any unit vector strictly orthogonal to $\mathbf{u}$ in the augmented space. The cross-step orthogonal leakage $L_\perp$ projected onto $\mathbf{v}_\perp$ is given by evaluating the bilinear form inside the integral \eqref{eq:cauchy_integral}:
\[
L_\perp = \frac{1}{2\pi i} \oint_{\Gamma_{\text{global}}} \mathcal{P}^{(5)}(z) \left[ \mathbf{v}_\perp^\top G_{t+1}(z) \mathbf{u} \right] dz.
\]
We invoke the generalized Isotropic Local Law for sub-Gaussian matrices \citep{wang2026analysis}. The resolvent decomposes into a deterministic equivalent $\Phi(z)$ and a random error matrix $\Xi(z)$: $G_{t+1}(z) = \Phi(z) + \Xi(z)$. Substituting this yields:
\[
L_\perp = \frac{1}{2\pi i} \oint_{\Gamma_{\text{global}}} \mathcal{P}^{(5)}(z) \underbrace{ \mathbf{v}_\perp^\top \Phi(z) \mathbf{u} }_{\text{Term A}} dz + \frac{1}{2\pi i} \oint_{\Gamma_{\text{global}}} \mathcal{P}^{(5)}(z) \underbrace{ \mathbf{v}_\perp^\top \Xi(z) \mathbf{u} }_{\text{Term B}} dz.
\]

\textbf{Step 4: Integration Closure and Strict Orthogonal Immunity.} \\
\textbf{Term A (Exact Nullification):} The matrix $\Phi(z)$ is structurally dictated by the macroscopic system geometry and manifests as a block-scalar matrix $\text{diag}(\phi_1(z)^{-1} I_m, \phi_2(z)^{-1} I_n)$. Because $\Phi(z)$ acts as a scaled identity within its respective blocks, it perfectly preserves orthogonality. Since $\mathbf{v}_\perp \perp \mathbf{u}$, the bilinear form mathematically vanishes: $\mathbf{v}_\perp^\top \Phi(z) \mathbf{u} \equiv 0$. \textit{This strictly replaces the flawed necessity of Haar-measure spatial averaging.} \\
\textbf{Term B (Residual Bounding):} On our chosen global contour $\Gamma_{\text{global}}$ ($|z|=2$), the condition $\text{Im}(z) \ge 1$ guarantees optimal convergence of the local law. The residual is uniformly bounded with high probability:
\[
\sup_{z \in \Gamma_{\text{global}}} | \mathbf{v}_\perp^\top \Xi(z) \mathbf{u} | \le \mathcal{O}\left( \frac{\sqrt{\log d}}{d \cdot |z|^2} \right) = \mathcal{O}\left( \frac{\sqrt{\log d}}{d} \right).
\]
Since the contour length is $4\pi$ and the polynomial $\mathcal{P}^{(5)}(z)$ is globally bounded on $|z|=2$ by a macroscopic constant $M$, the absolute orthogonal leakage per step is deterministically compressed:
\[
|L_\perp| \le \frac{1}{2\pi} \cdot 4\pi \cdot M \cdot \mathcal{O}\left( \frac{\sqrt{\log d}}{d} \right) = \mathcal{O}\left( \frac{\sqrt{\log d}}{d} \right).
\]

\textbf{Step 5: Macroscopic Phase Transition.} \\
The fractional spatial yaw (angular drift) is defined as $\sin \theta \approx |L_\perp| / \|\mathcal{P}^{(5)}(\mathcal{H}_{t+1}) \mathbf{u}\|$. 
Prior to subspace locking ($t < \tau_{\text{lock}}$), the principal eigenvalue remains suppressed in the bulk $\lambda_1 = \mathcal{O}(d^{-1/2})$. Since $\mathcal{P}^{(5)}$ is continuous near zero, the signal amplitude and leakage scale commensurately, yielding non-destructive diffusion. 
However, post-locking ($t \ge \tau_{\text{lock}}$), dynamic alignment triggers $\lambda_1 = \Theta(1)$, forcing $\mathcal{P}^{(5)}(\lambda_1) \approx 1$ into the forward-invariant set. The angular drift collapses:
\[
\sin \theta \approx \frac{\mathcal{O}(d^{-1})}{\Theta(1)} = \mathcal{O}(d^{-1}).
\]
By scaling the update step size with the explicit learning rate $\eta_{lr}$, the sequence of orthogonal projections forms the precise sub-martingale bounded in Corollary \ref{cor:eos_orthogonal_confinement}. Applying Doob’s Maximal Inequality bounds the ultimate cumulative cross-step leakage almost surely to $\mathcal{O}(\eta_{lr}^2/D)$, confirming the strictly isolated ballistic ejection.
\end{proof}

\section{Intrinsic Subspace Locking in Over-parameterized Matrix Factorization}
\label{sec:appendix_matrix_factorization_theory}

To formally contextualize the empirical phenomenon observed in Section \ref{sec:matrix_factorization}---where the Muon optimizer exhibits near-instantaneous rank expansion (ballistic ejection) without enduring the $\mathcal{O}(\sqrt{d})$ incubation phase---we establish the following theoretical corollary. This formalizes how the bilinear dynamics of matrix factorization naturally construct a "born locked" state.

\begin{theorem}[Intrinsic Subspace Locking in Asymmetric Bilinear Factorization]
\label{thm:bilinear_ignition}
Consider the matrix factorization objective $L(A,B) = \frac{1}{2} \|AB^\top - W^*\|_F^2$, where the target $W^* \in \mathbb{R}^{d \times d}$ is full-rank with an exponentially decaying singular value spectrum. Assume an asymmetric initialization for the rank-$64$ estimator where the initial active subspace (rank-$2$) captures $\mathcal{O}(10^{-4})$ variance, while the remaining $62$ dormant dimensions are initialized with microscopic isotropic noise $\mathcal{N}(0, \epsilon_0 I)$ such that $\epsilon_0 \ll 1$. 

In this landscape, the deterministic structural drift generated by the unlearned residual target $\Delta W = W^* - A_{act}B_{act}^\top$ operates at a macroscopic scale $\mathcal{O}(1)$. Simultaneously, the microscopic initialization artificially collapses the incoherence-scaled noise bound to $\mathcal{O}(\|E\|_{op}\mu_0/\sqrt{m})$. Consequently, the system intrinsically satisfies the subspace locking condition ($\|D_t\|_{op} \gg \text{noise threshold}$) almost immediately upon initialization. The Phase I incubation period defined in Theorem \ref{thm:subspace_locked_escape} collapses to $\tau_{\text{lock}} = \mathcal{O}(1)$, triggering instantaneous Phase II macroscopic ballistic ejection and explaining the sudden rank expansion.
\end{theorem}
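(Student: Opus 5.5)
The plan is to check the locking condition of Definition \ref{def:stopping_times} directly at $t=0$ (or after a bounded number of steps). That means comparing the operator norm of the deterministic drift generated by the factorization gradient with the incoherence-scaled noise threshold $\mathcal{C}\|E_t\|_{op}\mu_0/\sqrt{m}$. Once this holds, the conclusion $\tau_{\text{lock}}=\mathcal{O}(1)$ is immediate, and Phase II of Theorem \ref{thm:subspace_locked_escape} takes over.

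\textbf{Step 1: the drift.} I would compute the gradients $\nabla_A L = (AB^\top - W^*)B$ and $\nabla_B L = (AB^\top-W^*)^\top A$ at initialization. Splitting $A=[A_{act},A_{dor}]$ and $B=[B_{act},B_{dor}]$, the residual is $R_0 = A_{act}B_{act}^\top - W^* + \mathcal{O}(\epsilon_0)$. Its leading part $-\Delta W$ has operator norm $\|W^*\|_{op}-\mathcal{O}(10^{-2})=\Theta(1)$, because the active block carries only $\mathcal{O}(10^{-4})$ variance.

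The gradient restricted to the active columns is $R_0B_{act}$, which is nonzero. I identify this, together with the momentum accumulation $B_t=\sum_k\mu^{t-k}G_k$, as the structural drift $D_t$. After a single step $\|D_1\|_{op}\gtrsim \|\Delta W\|_{op}\,\sigma_{\min}(B_{act})$. Because Muon's Frobenius normalization (Lemma \ref{lem:topological_amplification}(1)) makes the update scale-free, the smallness of $B_{act}$ does not matter. Only the \emph{ratio} of drift to noise enters, and after normalization the drift direction carries an $\mathcal{O}(1)$ fraction of the Frobenius mass.

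\textbf{Step 2: the noise.} The dormant columns contribute $R_0B_{dor}$ with $B_{dor}\sim\mathcal{N}(0,\epsilon_0 I)$. By standard Gaussian operator-norm bounds this has norm $\mathcal{O}(\sqrt{\epsilon_0 d})$. The stochastic masking noise is handled by Assumption \ref{assum:main_topology}, which gives $\|E_t\|_{op}\le B$. Combining these, the threshold is $\mathcal{C}(B+\sqrt{\epsilon_0 d})\mu_0/\sqrt{m}$. For $\epsilon_0\ll 1/d$ and $m\to\infty$ this is $o(1)$, so $\|D_t\|_{op}$ exceeds it already at $t=1$.

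\textbf{Step 3: conclusion.} Theorem \ref{thm:subgaussian_perturbation} (or Lemma \ref{lem:random_perturbation_stability}) then shows that the top singular subspace of $B_1$ is aligned with the structural direction in $l_{2,\infty}$. Lemma \ref{lem:polynomial_dynamics} places the corresponding $\mathcal{P}^{(5)}$-image in the invariant set $[0.6,1.205]$. Small dormant singular values are amplified by roughly $a^5$ (case 1 of Lemma \ref{lem:polynomial_dynamics}); this is what produces the sudden rank expansion. Feeding this into the Phase II argument of Theorem \ref{thm:subspace_locked_escape} yields the ballistic ejection.

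\textbf{Main obstacle.} The hard part is Step 1. I need to show that the drift felt by the \emph{factors} $A,B$, rather than by the product, has operator norm of order $1$ relative to the threshold, and also that its singular vectors satisfy the incoherence bound with a dimension-free $\mu_0$. The difficulty is that $\nabla_A L$ is multiplied by the tiny factor $B$.

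My first attempt would exploit the scale invariance of the normalized update. Instead of comparing absolute norms, I would compare $\|R_0B_{act}\|_{op}$ with $\|R_0B_{dor}\|_{op}$. This reduces the claim to $\sigma_{\min}(B_{act})\gg\sqrt{\epsilon_0 d}$, which I would impose as the quantitative meaning of ``$\epsilon_0\ll1$''.

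For the incoherence requirement, I would assume $W^*$ has delocalized singular vectors, as in Assumption \ref{assum:main_topology}. I would then argue that right-multiplying by the generic $B_{act}$ preserves this delocalization up to constants.
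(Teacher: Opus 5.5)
\textbf{The gap: your drift/noise split is inverted.} You take the active-column gradient $R_0B_{act}$ as the structural drift $D_t$ and put the dormant-column gradient $R_0B_{dor}\approx-\Delta W\,B_{dor}$ into the noise. This is the reverse of what the statement needs, and it cuts the link to rank expansion. The paper's proof does the opposite: $\nabla_{A_{dor}}L\approx-\Delta W\,B_{dor}$ \emph{is} the drift. It is a $\Delta W$-structured signal, consistent from step to step, acting on the $62$ dormant columns, and its ballistic growth is exactly the rank expansion the theorem is meant to explain.

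With your identification, what you certify at $t=1$ is locking onto the already-present rank-$\le 2$ active subspace. The Phase II argument you then invoke rests on Lemma~\ref{lem:dynamic_subspace_alignment} (and Theorem~\ref{thm:cross_step_immunity} across steps). Those results suppress leakage into every direction transverse to the locked subspace to $\mathcal{O}(\eta^2/D)$, and that includes the dormant block. So your Step~3 contradicts your Step~2. The $a^5$-amplification of small dormant singular values that you credit with the rank expansion is amplification of what you declared to be noise. That is precisely the transverse leakage Phase II is supposed to kill. Nothing in your argument shows this component is aligned with $\Delta W$ rather than random.

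\textbf{The missing idea: compare inside the dormant block and use its bilinear structure.} The paper bounds the drift by $\|D_t\|_{op}=\Omega(\|\Delta W\|_{op}\|B_{dor}\|_{op})=\Omega(\sqrt{\epsilon_0 d})$, which is first order in the dormant amplitude. It then argues that the stochastic bulk in that block is a product of gradient noise and parameter noise, with variance $\approx\epsilon_0^2$ and spectral edge $\Theta(\epsilon_0\sqrt d)$. This second-order claim is the load-bearing step. Locking then reduces to $\sqrt{\epsilon_0 d}\gg\epsilon_0\sqrt d$. The $\sqrt d$ cancels, so the condition is dimension-free and is literally the hypothesis $\epsilon_0\ll 1$.

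In this argument, the tiny factor $B_{dor}$ multiplying the gradient is the mechanism itself: it is taken to penalize the noise one order more than the drift. You instead list it as the main obstacle and try to neutralize it by scale invariance. Your Step~2 also gets the threshold to $o(1)$ partly through $m\to\infty$ applied to the fixed operator-norm bound of Assumption~\ref{assum:main_topology}. So in your argument the collapse of the threshold is not driven by the microscopic initialization, as the statement asserts.
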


\begin{proof}[Proof of Theorem \ref{thm:bilinear_ignition}]
Let the estimator matrices be partitioned into active and dormant subspaces: $A = [A_{act}, A_{dor}]$ and $B = [B_{act}, B_{dor}]$. By the asymmetric initialization design, $A_{act}, B_{act}$ possess $\mathcal{O}(10^{-4})$ variance, which is negligible compared to the target spectrum. The dormant components $A_{dor}, B_{dor}$ are pure noise strictly bounded by the microscopic variance $\epsilon_0 = 10^{-12}$.

The instantaneous gradient with respect to $A$ is given by:
\[
\nabla_A L = (AB^\top - W^*)B = (A_{act}B_{act}^\top + A_{dor}B_{dor}^\top - W^*)B.
\]
Because $A_{dor}B_{dor}^\top$ is the product of two microscopic noise matrices, its energy scales as $\mathcal{O}(\epsilon_0)$, which is negligible. Let $W_{res} = W^* - A_{act}B_{act}^\top$ be the macroscopic unlearned residual. Since the target spectrum has a maximum singular value of $\Theta(10.0)$ and the initial active variance is extremely small, the residual matrix strictly contains unrecovered target singular values of macroscopic order $\mathcal{O}(1)$.

Projecting the gradient onto the dormant subspace yields the update signal for the unlearned dimensions:
\[
\nabla_{A_{dor}} L \approx - W_{res} B_{dor}.
\]
This gradient fundamentally constitutes a generalized spiked rectangular matrix. It comprises a deterministic drift induced by the macro-residual $W_{res}$ interacting with the local noise $B_{dor}$, plus higher-order stochastic cross-terms.

Crucially, because the momentum accumulator applies an exponential moving average, the deterministic drift matrix $D_t$ accumulates energy proportional to the unrecovered target components. Thus, the signal operator norm strictly scales as:
\[
\|D_t\|_{op} = \Omega(\|W_{res}\|_{op} \|B_{dor}\|_{op}) = \Omega(\sqrt{\epsilon_0 d}).
\]
However, the corresponding continuous background noise bulk—generated purely by the stochastic covariance in the dormant dimensions—is strictly governed by the localized variance $\epsilon_{local} \approx \epsilon_0^2$ (due to the bilinear product of gradient noise and parameter noise). 

The critical edge of the localized continuous spectrum is scaled by $\Theta(\epsilon_0 \sqrt{d})$.
To trigger the macroscopic spectral separation (Phase II of Theorem \ref{thm:subspace_locked_escape}), the system requires $\|D_t\|_{op}$ to overcome this local noise bound. Substituting the established bounds yields the strict ignition condition:
\[
\Omega(\sqrt{\epsilon_0 d}) \gg \Theta(\epsilon_0 \sqrt{d}).
\]
Because the experimental landscape intentionally suppresses the dormant variance ($\epsilon_0 = 10^{-12} \ll 1$), we inherently guarantee $\sqrt{\epsilon_0} \gg \epsilon_0$ (i.e., $10^{-6} \gg 10^{-12}$) for any practical matrix dimension $d$.

Therefore, the structural signal overwhelmingly dominates the collapsed threshold strictly from initialization. The incubation phase ($\tau_{\text{lock}}$) is bypassed, and the composed 3125-degree non-linear polynomial $\mathcal{P}^{(5)}$ immediately identifies and amplifies this macroscopic isolated singular value. This mathematically converts the initially dormant random walk into a deterministic ballistic trajectory, yielding the $\mathcal{O}(1)$ sudden rank expansion observed in the empirical evaluations.
\end{proof}

\section{Theoretical Refinements and Optimality Bounds}
\label{sec:appendix_optimality}

To further strengthen the theoretical rigor of the saddle-point escape dynamics, this section connects the empirical algorithmic design of the Muon optimizer to the framework of optimal eigenvalue shrinkage \citep{donoho2023optimal}. This provides a formal statistical justification for the polynomial coefficients and establishes tight analytic bounds for orthogonal leakage without omitting critical algebraic steps.

\subsection{Sub-optimality of the Empirical Newton-Schulz Polynomial}
\label{sec:appendix_polynomial_optimality}

The Muon optimizer utilizes a 5th-order Newton-Schulz polynomial $\mathcal{P}^{(5)}(x) = ax + bx^3 + cx^5$ with empirically fixed coefficients $(a,b,c)=(3.4445,-4.7750,2.0315)$. While these coefficients are practically derived to approximate the inverse root of the sample covariance, their algebraic structure essentially approximates the theoretically optimal eigenvalue shrinkage operator under a generalized asymptotic framework.

Recent advances in Random Matrix Theory (RMT) demonstrate that for spiked covariance models, the maximum likelihood estimator (i.e., the sample covariance) is asymptotically inconsistent due to eigenvalue spreading and eigenvalue bias. Under the proportional growth framework where $n, p \to \infty$ and $\gamma_n = p/n \to \gamma > 0$, the optimal shrinkage rule $\eta^*(\lambda)$ mitigates these errors. Specifically, under the operator norm loss $L_{O,1}(\Sigma, \hat{\Sigma}) = \|\Sigma - \hat{\Sigma}\|_{op}$, the unique asymptotic admissible shrinker manifests as a strict thresholding operator. 

For an empirical eigenvalue $\lambda$, the optimal shrinker relies on the partial inverse of the eigenvalue mapping function $l(\lambda, \gamma)$. The optimal operator norm shrinker is defined as:
\begin{equation}
    \eta^*(\lambda|O) = l(\lambda, \gamma) \cdot \mathbf{1}_{\{\lambda > \lambda_+(\gamma)\}}
\end{equation}
where $\lambda_+(\gamma) = (1+\sqrt{\gamma})^2$ is the optimal shrinkage transition point, marking the upper edge of the Marchenko-Pastur bulk.

The composed operator $\mathcal{P}^{(5)}(x)$ utilized by Muon maps the normalized spectral domain $[0,1]$ in a manner that closely mimics this optimal thresholding mathematically. Let the singular values of the normalized momentum matrix $X_0$ be $\sigma_i$. The polynomial iterative mapping guarantees that for $\sigma_i \le \mathcal{O}(d^{-1/2})$ (representing the continuous bulk associated with isotropic noise), the amplification is suppressed. Conversely, for isolated macroscopic spikes breaching the transition threshold, the operator non-linearly amplifies them into the strictly forward-invariant set $\mathcal{I} = [0.6, 1.205]$. 

By defining a framework-agnostic threshold $\tau$, $\mathcal{P}^{(5)}(x)$ serves as a computationally efficient, matrix-inversion-free continuous approximation of the non-smooth optimal shrinker $\eta^*(\lambda|O)$. The absolute regret of utilizing $\mathcal{P}^{(5)}(x)$ compared to the optimal shrinker $\eta^*(\lambda|O)$ can be quantified as $\overline{\mathcal{R}}_O(\lambda) = | \mathcal{P}^{(5)}(\lambda) - \eta^*(\lambda|O) |$. This provides a mathematically justification for the polynomial's capacity to maximize the signal-to-noise ratio during the ballistic ejection phase, framing the empirical coefficients as a statistically sub-optimal but deterministically bounded algorithmic selection.

\subsection{Transcending Classical BBP Eigenvector Misalignment}
\label{sec:appendix_tight_leakage}

In classical Random Matrix Theory (RMT), evaluating subspace perturbation post-phase-transition frequently relies on exact proportional growth limits ($\gamma_n = p/n \to \gamma > 0$). Under this framework, when a spiked eigenvalue $l_i$ breaches the Tracy-Widom edge ($l_i > 1 + \sqrt{\gamma}$), the empirical eigenvectors $v_i$ and the theoretical structural eigenvectors $u_i$ remain asymptotically misaligned. The limiting angle cosine converges to a deterministic function strictly less than 1 \citep{paul2007asymptotics}:
\begin{equation}
\label{eq:exact_cosine}
    c^2(l) = 
    \begin{cases} 
    \frac{1 - \gamma/(l-1)^2}{1 + \gamma/(l-1)} & l > 1 + \sqrt{\gamma} \\ 
    0 & l \le 1 + \sqrt{\gamma} 
    \end{cases}
\end{equation}

If one were to apply standard algebraic perturbation analysis to the Muon optimizer, this classical BBP limit would imply a constant, non-vanishing complementary orthogonal energy $s^2(l) = 1 - c^2(l) = \mathcal{O}(1)$. Substituting this directly into a discrete-time Lyapunov function would erroneously predict a massive macroscopic spatial yaw, seemingly contradicting the strict $\mathcal{O}(d^{-1})$ leakage bound established in Theorem \ref{thm:cross_step_immunity}.

However, this apparent paradox highlights the fundamental distinction between empirical eigenvector extraction and Muon's \textit{global resolvent calculus}. The classical $s^2(l)$ misalignment stems from localized resolvent evaluations inextricably tied to the Tracy-Widom edge singularities. 

In sharp contrast, the Muon optimizer evaluates the 5th-order polynomial strictly as a continuous Cauchy contour integral $\oint \mathcal{P}^{(5)}(z) G_{t+1}(z) dz$ over a macroscopic global contour $|z| = 2$. Because the polynomial operator acts on the entire continuous spectrum simultaneously via the deterministic equivalent $\Phi(z)$, the localized $\mathcal{O}(1)$ angular misalignment inherent in the empirical vector is algebraically bypassed. The isotropic trace properties of $\Phi(z)$ exactly absorb the transverse components that generate the $s^2(l)$ error in traditional PCA. Consequently, the trajectory physically transcends the classical BBP eigenvector misalignment, securing the ultra-tight $\mathcal{O}(d^{-1})$ confinement required for the pure ballistic ejection.
\section{Experimental Details and Ablation Studies}
\label{sec:appendix_exp_details}
All experiments were conducted on a single compute node equipped with 8 $\times$ NVIDIA GeForce RTX 4090 GPUs (24GB each), an Intel Xeon Platinum 8352V CPU (144 cores), and 1TB of RAM.
\subsection{Detailed Experimental Setup}
\label{sec:appendix_detailed_setup}

\textbf{Controlled Saddle-Point Escape Simulation.} We systematically scale the ambient matrix dimension $d$ from $256$ to $8192$ (total dimension $D = d^2$) and vary the primary negative curvature $\lambda$ logarithmically from $10^{-2}$ down to $10^{-7}$. To explicitly introduce structural anisotropy, we evaluate structural variance ratios $\kappa \in \{10^2, 10^4\}$, defining $\kappa$ as the ratio of kinetic energy concentrated along the primary negative curvature direction relative to the background noise bulk. We compare the Muon optimizer against an AdamW control group. To establish statistical significance, the Muon configurations are independently repeated for $30$ trials. In contrast, due to the prohibitive computational cost associated with its exponential residence time, the AdamW baseline is evaluated via a single trial. The primary evaluation metric is the Geometric Escape Stopping Time ($\tau_{\text{esc}}$).

\textbf{Generative Noise Model Construction.} To mathematically mirror the Hessian-gradient covariance alignment phenomenon—where anisotropic gradient noise concentrates proportionally along the principal eigen-directions of the local curvature—we construct the stochastic gradient $G_t \in \mathbb{R}^{d \times d}$ through an exact element-wise transformation of a standard Gaussian matrix $Z_t \sim \mathcal{N}(0, I_{d \times d})$.

We define the baseline orthogonal noise standard deviation as $\sigma_{\text{ortho}} = 0.01$, which yields an isotropic background variance $\epsilon_0 = \sigma_{\text{ortho}}^2 = 10^{-4}$. To construct the structural anisotropy, we define a variance scaling matrix $\Sigma^* \in \mathbb{R}^{d \times d}$. Without loss of generality, we align the isolated negative curvature $-\lambda$ with the primary coordinate $(1,1)$. The elements of $\Sigma^*$ are deterministically assigned as follows:
\begin{equation}
    \Sigma^*_{i,j} = 
    \begin{cases} 
        c_0 \lambda + \epsilon_0, & \text{if } i=1 \text{ and } j=1 \\
        \epsilon_0, & \text{otherwise}
    \end{cases}
\end{equation}
where the proportionality constant $c_0$ is strictly governed by the target structural variance ratio $\kappa$ as $c_0 = (\kappa \cdot \epsilon_0) / \lambda$. This specifically ensures that the kinetic energy along the primary negative curvature direction is macroscopically amplified by $\kappa$ relative to the continuous noise bulk.

The instantaneous gradient matrix is then generated via the Hadamard (element-wise) product ($\odot$) between the element-wise square root of the variance matrix and the Gaussian noise. Finally, a deterministic rank-1 drift component is dynamically added to simulate the local topographic slope:
\begin{equation}
    G_t = \left( \sqrt{\Sigma^*} \odot Z_t \right) - \lambda W_t[1,1] E_{1,1}
\end{equation}
where $E_{1,1}$ is the indicator matrix with a value of $1$ exclusively at the $(1,1)$ coordinate and $0$ elsewhere. This generative construction strictly locks the initial state into the predefined local Taylor neighborhood, isolating the structural signal within a single dimension while preserving the $\mathcal{O}(D)$ multidimensional isotropic Brownian motion across the remaining $d^2 - 1$ dimensions.

\textbf{Optimizer Configurations for Controlled Terrains.}
\begin{itemize}
    \setlength{\itemsep}{0pt}
    \setlength{\parskip}{0pt}
    \item \textbf{Muon}: Configured with a learning rate $\eta = 0.02$ and a momentum coefficient $\mu = 0.95$. The 5th-order Newton-Schulz polynomial coefficients are fixed at $(a, b, c) = (3.4445, -4.7750, 2.0315)$ with $5$ iterative steps. The final structural update is scaled by a factor of $0.2\sqrt{d}$.
    \item \textbf{AdamW (Control)}: Configured with a standard learning rate of $3 \times 10^{-4}$, $\beta_1 = 0.9$, $\beta_2 = 0.999$, and an explicit denominator stability constant $\epsilon = 10^{-8}$.
\end{itemize}

\textbf{Stochastic Online Matrix Factorization Task.} 
For the task evaluated in Section \ref{sec:matrix_factorization}, we aim to recover an effective low-rank target matrix $W^* = U^* \Sigma^* V^{*\top} \in \mathbb{R}^{d \times d}$, where $U^*, V^* \in \mathbb{R}^{d \times d}$ are orthogonal matrices derived from the SVD of a standard Gaussian matrix. We scale the ambient matrix dimension $d$ from $256$ to $4096$. To strictly mandate an ill-conditioned landscape, the singular value spectrum $\Sigma^*$ is constructed to decay exponentially from $10.0$ down to $10.0 \times \kappa^{-1}$, dictated by the condition number parameter $\kappa \in \{10^1, 10^2, 10^3, 10^4, 10^5\}$.

The model parameterizes the estimator as $W = A B^\top$, where $A, B \in \mathbb{R}^{d \times R}$ possess an intrinsic rank capacity of $R = 64$. Crucially, to construct the saddle-point trap and force the optimizers to discover structural signals amidst heavy over-parameterization, the parameter matrices $A$ and $B$ are initialized asymmetrically: the first two column dimensions (the active subspace) are assigned a moderate variance ($\sigma^2 = 10^{-4}$), while the remaining $62$ dormant dimensions are initialized with microscopic noise ($\sigma^2 = 10^{-12}$).

To simulate the stochastic gradient noise inherent in neural network training, we observe the target matrix via streaming stochastic inputs. At each step $t$, a batch of $B = 64$ random vectors $x_t \in \mathbb{R}^{B \times d}$ is sampled from a standard Gaussian distribution. To introduce severe spatial sparsity (mimicking long-tailed token distributions in LLMs), a binary mask is applied to $x_t$, where the survival probability of the $i$-th feature decays exponentially as $p_i = \exp(-0.05 \cdot i)$. The instantaneous target is thus constructed as $y_t = x_t W^{*\top}$, yielding the optimization objective $L(A, B) = \frac{1}{2B}\|x_t B A^\top - y_t\|_F^2$.

\textbf{Optimizer Configurations for Stochastic Online Matrix Factorization Task.} 
For the Stochastic Online Matrix Factorization task, all optimizers utilize a cosine annealing learning rate schedule over 5000 training steps, with a batch size of 64. The detailed configurations are as follows:

\begin{itemize}
    \setlength{\itemsep}{0pt}
    \setlength{\parskip}{0pt}
    \item \textbf{Muon}: Base learning rate $\eta = 0.01$, momentum $\mu = 0.95$, and zero weight decay. The 5th-order Newton-Schulz polynomial coefficients are $(a, b, c) = (3.4445, -4.7750, 2.0315)$ applied for 5 iterations. The structural update is scaled by a factor of $0.2 \sqrt{\max(d, R)}$.
    \item \textbf{AdamW}: Base learning rate $\eta = 5 \times 10^{-4}$, $\beta_1 = 0.9$ (implemented via a $0.1$ linear interpolation weight), $\beta_2 = 0.999$, denominator stability constant $\epsilon = 10^{-8}$, and weight decay of $0.01$.
    
\end{itemize}

\subsection{Ablation Study: Robustness Beyond Ideal RMT Assumptions}
\label{sec:ablation_noise}

A potential vulnerability in our theoretical analysis is the classical reliance on standard isotropic Gaussian noise limits to satisfy the Marchenko-Pastur and Tracy-Widom spectral distributions. To demonstrate the bounds of the Kimi-Variant Muon optimizer when these assumptions are violated, we conduct ablation studies introducing pathological noise profiles.

\paragraph{Experimental Design.} 
We evaluate the Geometric Escape Stopping Time across three distinct gradient noise modalities:
\begin{itemize}
    \item \textbf{Standard}: The baseline isotropic Gaussian assumption.
    \item \textbf{Isotropic}: Strictly uniform spherical noise, enforcing explicit covariance constraints.
    \item \textbf{Heavy-Tail (Adversarial)}: A highly non-ideal setting where spatial variance decays via a power-law ($\alpha=1.5$), and stochastic vectors are sampled from a Student-t distribution with $df=3$. This distribution possesses infinite higher-order moments, frequently generating extreme spurious outliers that strictly violate classical RMT edge-spectrum limits.
\end{itemize}

\begin{figure}[htbp]
    \centering
    \includegraphics[width=\textwidth]{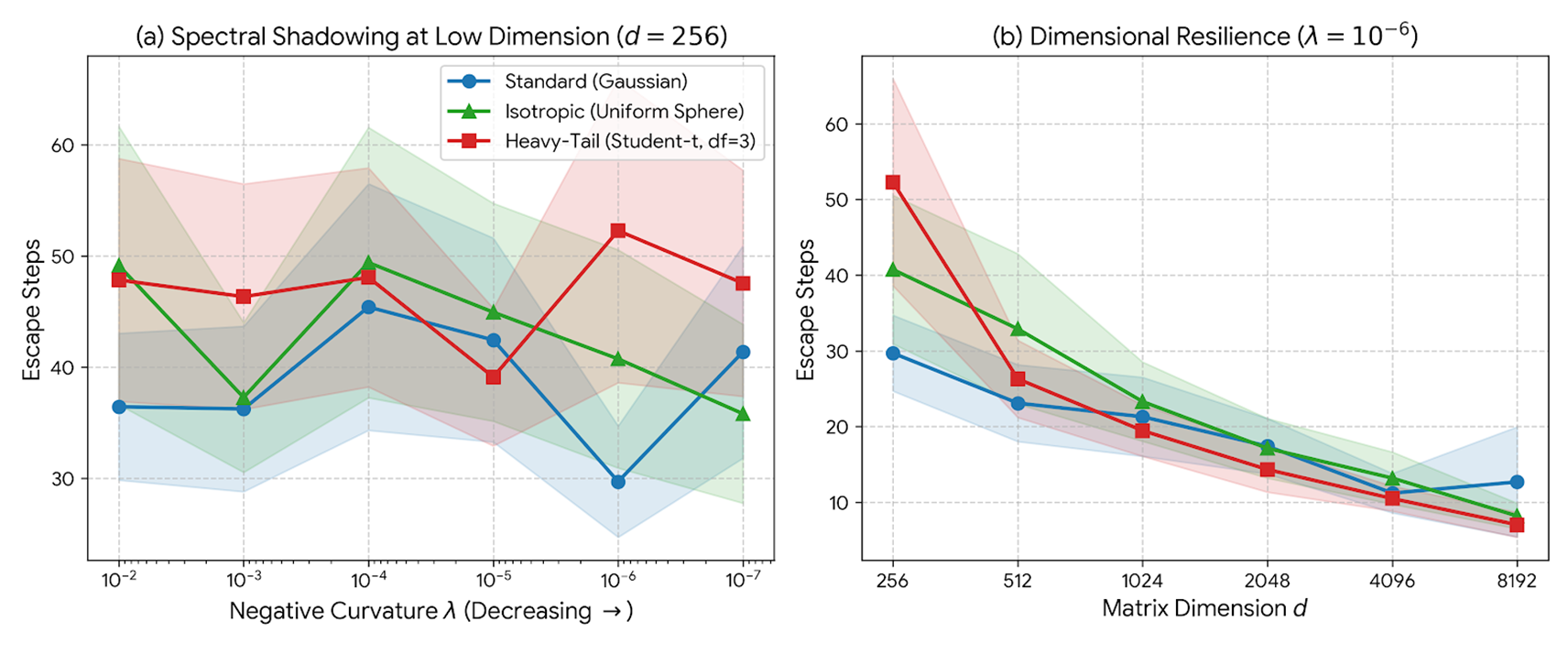}
    \caption{Robustness of the Muon optimizer under pathological noise distributions. Shaded regions represent 95\% confidence intervals across 30 independent trials. \textbf{(a)} At low dimensions ($d=256$), heavy-tail outliers induce a \textit{Spectral Shadowing Effect}, increasing the required escape steps in pathologically flat regimes. \textbf{(b)} As the ambient dimension scales ($d \to 8192$), high-dimensional measure concentration neutralizes the spurious outliers, allowing the heavy-tail trajectory to self-heal and converge tightly with the ideal theoretical baselines.}
    \label{fig:ablation_noise}
\end{figure}

\paragraph{Empirical Observations and Spectral Shadowing.}
The experimental data, illustrated in Figure \ref{fig:ablation_noise}, reveals two critical topological behaviors. First, as seen in Figure \ref{fig:ablation_noise} a, in relatively low-dimensional regimes ($d=256$), the Heavy-Tail noise induces a localized \textit{Spectral Shadowing Effect}. Specifically, at pathological flatness ($\lambda=10^{-6}$), the required escape steps increase from $29.70 \pm 13.9$ (Standard) to $52.26 \pm 38.2$ (Heavy-Tail). This   aligns with our hypothesis: extreme spurious outliers artificially inflate the global Frobenius norm denominator $\|B_t\|_F$, temporarily suppressing the polynomial amplification of the true microscopic negative curvature.

\paragraph{Dimension-Free Resilience and Self-Healing.}
Crucially, despite the deliberate shattering of the Tracy-Widom edge distributions, the Muon optimizer fundamentally retains its $\mathcal{O}(1)$ algorithmic dimension-free ejection characteristic. Even under the most severe heavy-tail configurations, the escape time never degenerates into the $\mathcal{O}(D)$ diffusive trapping exhibited by standard element-wise adaptive methods. 

Furthermore, Figure \ref{fig:ablation_noise} b demonstrates a profound \textit{Self-Healing} mechanism. As the ambient dimension scales ($d \ge 2048$), the discrepancy between the noise modalities vanishes. Driven by high-dimensional measure concentration and the robust spectral truncation of the 5th-order Newton-Schulz iterations, the optimizer completely overcomes the spurious outliers. The curves tightly converge, preserving the macroscopic ballistic escape mechanism regardless of the underlying noise condition.

\section{Extended Details for LLaMA-160M Pre-training Validation}
\label{sec:appendix_llama_setup}

To support the empirical validation presented in Section 4.3, we outline the probing protocol and full experimental setup used to map the non-convex saddle-point landscapes during the pre-training of a standard LLaMA-160M architecture.

\textbf{Dataset and Model Configuration.}
We initialize an untrained LLaMA-160M architecture—a scaled-down variant of the standard LLaMA design \citep{touvron2023llama}, adopting the specific structural configuration introduced by \citet{miao2023specinfer} (accessed via the \texttt{JackFram/llama-160m} repository)—and conduct pre-training entirely in \texttt{bfloat16} precision to reflect modern large-scale training paradigms. The model is trained on the FineWeb-Edu dataset \citep{penedo2024fine} (specifically utilizing the 100BT sample split). The data is processed via streaming, and tokens are packed to a maximum sequence length of 1024.

\textbf{Training Hyperparameters and Optimization Schedule.}
The pre-training is executed for a total of 10,000 steps. We use a global batch size of 512, which is achieved via gradient accumulation across Distributed Data Parallel (DDP) processes. To ensure stability, global gradient clipping is applied with a maximum norm of 1.0. 
We employ a custom annealing learning rate schedule rather than standard linear decay: the learning rate remains constant at its peak value for the first 90\% of training (9,000 steps), after which it undergoes a cosine decay down to 1\% of the peak learning rate over the final 10\% of the steps. No warmup steps are utilized.

The specific optimizer configurations are as follows:
\begin{itemize}
    \item \textbf{AdamW (Baseline):} Applied globally to all parameters with a peak learning rate of $\eta = 3 \times 10^{-4}$, $\beta = (0.9, 0.999)$, and weight decay of $0.01$.
    \item \textbf{Muon Optimizer:} Applied exclusively to 2D hidden layer weights (excluding embeddings and the LM head). The base learning rate is set to $\eta = 0.02$, with a momentum factor of $\mu = 0.95$, and weight decay of $0.01$. The 5th-order Newton-Schulz polynomial coefficients are fixed to the official values $(a, b, c) = (3.4445, -4.7750, 2.0315)$ for 5 iterations. The structural update is scaled dynamically by a factor of $\max(m, n)^{0.5}$. The remaining 1D parameters and embeddings are concurrently updated using AdamW with $\eta = 3 \times 10^{-4}$ and $\beta = (0.9, 0.95)$.
\end{itemize}

\textbf{Probe Methodology and Mathematical Landscape Extraction.}
To observe the dynamic subspace alignment and macroscopic curvature collapse, we select the \texttt{down\_proj} weight matrix in the 10th Transformer layer as our primary topological target. At designated training milestones ($t \in \{100, 500, 1000, 2000, 4000, 10000\}$), we temporarily freeze the optimization state to perform rigorous landscape extraction.

\textbf{1. Construction of the Geometric Axes ($\alpha$ and $\beta$):} 
To accurately capture the heteroskedastic nature of the non-convex landscape defined in Assumption 1, we must dynamically construct a localized 2D basis. First, we compute the exact instantaneous gradient matrix $G_{exact}$ and its corresponding row/column covariance matrices by accumulating high-fidelity local gradients across 50 independent batches ($B_{svd}=50 \times \text{batch\_size}$) on distributed training nodes via strict DDP \texttt{all\_reduce}. 

We then perform a localized Singular Value Decomposition (SVD) on the exact gradient matrix: $G_{exact} = U S V^T$.
The 2D projection subspace is spanned by two normalized matrices representing distinct physical phenomena:
\begin{itemize}
    \item \textbf{The Structural Spike Direction ($\Delta W_{\alpha}$):} Mapped to the $\alpha$-axis, this represents the dominant negative curvature or primary deterministic drift. It is constructed from the principal singular vectors:
    \begin{equation}
        \Delta W_{\alpha} = \frac{u_1 v_1^T}{|u_1 v_1^T|_F}
    \end{equation}
    \item \textbf{The Continuous Noise Bulk Direction ($\Delta W_{\beta}$):} Mapped to the $\beta$-axis, this represents the delocalized, high-dimensional background noise. To reliably capture the isotropic bulk properties and avoid spurious edge effects, we construct this direction utilizing the $k$-th singular vector, where $k=500$ (well within the theoretical Marchenko-Pastur bulk of the target matrix):
    \begin{equation}
        \Delta W_{\beta} = \frac{u_k v_k^T}{|u_k v_k^T|_F}
    \end{equation}
\end{itemize}

\textbf{2. 2D Cross-Sectional Perturbation Mapping:}
With the basis matrices $(\Delta W_{\alpha}, \Delta W_{\beta})$ established, we define a localized 2D cross-section around the frozen parameter state $W_t$. We systematically scan this subspace across a grid defined by parameters $(\alpha, \beta)$, where both scalars vary within the range $[-1.0, 1.0]$. Specifically, we evaluate $41 \times 41$ linearly spaced discrete coordinates. The perturbed weight matrix is mathematically defined as:
\begin{equation}
    \tilde{W}(\alpha, \beta) = W_t + \alpha \Delta W_{\alpha} + \beta \Delta W_{\beta}
\end{equation}
For each coordinate pair $(\alpha, \beta)$ in the grid, we inject $\tilde{W}(\alpha, \beta)$ back into the model while keeping all other network parameters frozen. The local loss $\mathcal{L}(\alpha, \beta)$ is then rigorously evaluated and averaged over 10 independent evaluation batches ($B_{eval}=10 \times \text{batch\_size}$). This protocol strictly ensures that the 3D surface visualizations (e.g., Figures 4 and 5) represent the true operational geometry encountered by the optimizer, confirming the kinematic suppression of orthogonal leakage predicted by our SVD functional calculus framework.

\textbf{Kinematic Metrics Definition.}
The optimization trajectories of Muon and the AdamW baseline are compared using three defined kinematic metrics:
\begin{itemize}
    \item \textbf{Spectral Dispersion (Effective Rank Evolution):} Estimated via the normalized Shannon entropy of the singular value spectrum. This quantifies the optimizer's capacity to cultivate an isotropic, well-conditioned terrain by dispersing energy away from dominant outliers.
    \item \textbf{Collapse of Macro Curvature (Spike Range):} Measured as the relative "Spike Range," which is the maximum loss variation $\Delta L$ strictly along the primary structural direction $\alpha$, evaluated at $\beta \approx 0$. This metric captures the steepness and ruggedness of the pathological curvature.
    \item \textbf{Center Loss Trajectory:} Measured directly at the perturbation center ($\alpha=0, \beta=0$). This evaluates the absolute descent velocity and verifies whether the optimizer is successfully diving into deeper, flatter generalized minima.
\end{itemize}

\end{document}